\newif\ifonlineapp
\newcommand{\clip}[1]{\mathrm{clip}\left(#1\right)}
  \providecommand{\md}{\mathrm{d}}
  \providecommand{\ee}{\mathbf{e}}
  \let\ggg\gg
  \renewcommand{\gg}{\mathbf{g}}
  \providecommand{\mm}{\mathbf{m}}
  \providecommand{\vv}{\mathbf{v}}
  \providecommand{\ww}{\mathbf{w}}
  \providecommand{\xx}{\mathbf{x}}
  \providecommand{\yy}{\mathbf{y}}
  \providecommand{\e}{\mathrm{e}}
  \providecommand{\mI}{\mathbf{I}}
  \providecommand{\cA}{\mathcal{A}}
  \providecommand{\cB}{\mathcal{B}}
  \providecommand{\cD}{\mathcal{D}}
  \providecommand{\cE}{\mathcal{E}}
  \providecommand{\cH}{\mathcal{H}}
  \providecommand{\cM}{\mathcal{M}}
  \providecommand{\cN}{\mathcal{N}}
  \providecommand{\cO}{\mathcal{O}}
  \providecommand{\cS}{\mathcal{S}}
  \providecommand{\cX}{\mathcal{X}}
\def\leref#1{Lemma~\ref{#1}}
\def\figref#1{Fig.~\ref{#1}}
    \newtheorem{lemma}{Lemma}
    \newtheorem{theorem}{Theorem}
    \newtheorem{corollary}{Corollary}
    \newtheorem{definition}{Definition}
    \newtheorem{proof}{Proof}
    \theoremstyle{plain}
    \newtheorem{theorem}{Theorem}[section]
    \newtheorem{lemma}[theorem]{Lemma}
    \newtheorem{corollary}[theorem]{Corollary}
    \theoremstyle{definition}
    \newtheorem{definition}[theorem]{Definition}
    \newtheorem{assumption}[theorem]{Assumption}
    \theoremstyle{remark}
    \newtheorem{remark}[theorem]{Remark}
\def\deref#1{Definition~\ref{#1}}
\def\secref#1{Section~\ref{#1}}
\def\leref#1{Lemma~\ref{#1}}
\def\thref#1{Theorem~\ref{#1}}
\def\coref#1{Corollary~\ref{#1}}
\def\figref#1{Figure~\ref{#1}}
\def\algref#1{Algorithm~\ref{#1}}
\def\appref#1{Appendix~\ref{#1}}
\def\asref#1{Assumption~\ref{#1}}
\newcommand{\R}{\mathbb{R}}
\newcommand{\abs}[1]{\left\lvert #1\right\rvert}
\newcommand{\norm}[1]{\left\lVert #1\right\rVert}
\DeclareMathOperator{\E}{\mathbb{E}}
\DeclareMathOperator*{\argmin}{arg\,min}
\newcommand{\lin}[1]{\ensuremath \left\langle #1 \right\rangle}
\def\remark{\addtocounter{remark}{1}\def\@currentlabel{\theremark}%
\emph{Remark~\theremark}. } \makeatother
\newcounter{remark}
\newcommand{\algname}{{DiceSGD}} 
\title{Differentially Private SGD Without Clipping Bias:
    An Error-Feedback Approach}
\author{
Xinwei Zhang\\
University of Minnesota\\
\texttt{zhan6234@umn.edu}\\
\And
Zhiqi Bu\\
Amazon AI.\\
\texttt{woodyx218@gmail.com}\\
\And 
Zhiwei Steven Wu\\
Carnegie Mellon University\\
\texttt{zstevenwu@cmu.edu}\\
\And 
Mingyi Hong\\
University of Minnesota\\
\texttt{mhong@umn.edu}
}
\begin{document}
\maketitle

\begin{abstract}
    % Differentially Private Stochastic Gradient Descent with gradient clipping (DP-SGD-GC) enables training deep learning models using sensitive data with a theoretical solid privacy guarantee and high efficiency. However, guaranteeing Differential Privacy (DP) comes with a price of model performance degradation caused by DP noise injection and gradient clipping. Current research provides an extensive analysis of the theoretical convergence of DP-SGD-GC, but they either rely on using sufficiently large clipping thresholds depending on the optimization problem or end up providing a utility bound with a constant clipping error. In this work, we propose a new error-feedback (EF) DP algorithm as an alternative to DP-SGD-GC that enjoys both diminishing utility bound without clipping error and an arbitrary choice of clipping threshold that does not depend on the problem. We establish an algorithm-specific DP analysis for the proposed algorithm and provide a solid DP guarantee, and we further prove that under mild conditions, the proposed algorithm can achieve nearly the same utility bound as DP-SGD without gradient clipping. Our empirical results on standard datasets show that the proposed algorithm achieves higher accuracies than DP-SGD under the same DP guarantees.
    Differentially Private Stochastic Gradient Descent with Gradient Clipping (DPSGD-GC) is a powerful tool for training deep learning models using sensitive data, providing both a solid theoretical privacy guarantee and high efficiency. However, using DPSGD-GC to ensure Differential Privacy (DP) comes at the cost of model performance degradation due to DP noise injection and gradient clipping. Existing research has extensively analyzed the theoretical convergence of DPSGD-GC, and has shown that it only converges when using {\it large} clipping thresholds that are dependent on problem-specific parameters. Unfortunately, these parameters are often unknown in practice, making it hard to choose the optimal clipping threshold. Therefore, in practice, DPSGD-GC suffers from degraded performance due to the {\it constant}  bias introduced by the clipping. %Therefore, DPSGD-GC suffers from a {\it constant} clipping error in real-world implementations.
    % While existing research has extensively analyzed the theoretical convergence of DPSGD-GC, they either rely on using large clipping thresholds depending on the assumptions of the problem or provide a utility bound with a constant clipping error. {\red[so what is the issue with these? be specific.][we didn't mention the clipping error in these works? we need to explicitly do as our motivation.]} 
    In our work, we propose a new error-feedback (EF) DP algorithm as an alternative to DPSGD-GC, which not only offers a diminishing utility bound {\it without} inducing a constant clipping bias, but more importantly, it  allows for an {\it arbitrary} choice of clipping threshold that is independent of the problem. We establish an algorithm-specific DP analysis for our proposed algorithm, providing privacy guarantees based on R{\'e}nyi DP. Additionally, we demonstrate that under mild conditions, our algorithm can achieve nearly the same utility bound as DPSGD without gradient clipping. Our empirical results on standard datasets show that the proposed algorithm achieves higher accuracies than DPSGD while maintaining the same level of DP guarantee.
\end{abstract}

\section{Introduction}\label{sec:intro}
\noindent{\bf Background.} Deep learning models have demonstrated exceptional promise in understanding various types of data, including images, texts, speech, and others. The exploding data volume has significantly accelerated the development of deep learning and has led to remarkable success in various tasks, including computer vision~\citep{dosovitskiyimage}, natural language processing~\citep{vaswani2017attention}, and speech recognition~\citep{gulati2020conformer}. However, recent research~\citep{nasr2018comprehensive,zhu2019deep} has shown that the training and inference processes of deep learning models may leak sensitive information in the training data, such as typing history, financial records, medical records, and social network data. To address this concern, the concept of differential privacy (DP) introduced by \citet{dwork2006differential} has become a widely accepted privacy requirement for releasing datasets~\citep{dwork2008differential,wang2016real} and training machine learning models~\citep{bassily2014private,abadi2016deep,wang2020differentially,chen2020understanding}. The DP notion provides a quantitative measurement that reflects the abstract privacy requirement in a general setting. Intuitively, DP prevents adversarial third parties from identifying whether any piece of data has appeared in the dataset or has been used for training the model, with access to all released information. The notion of DP has also been integrated into the procedure of training deep learning models, such as DPSGD~\citep{abadi2016deep} in centralized training and DP-FedAvg~\citep{andrew2021differentially,mcmahanlearning} in distributed optimization.

The DP guarantee of DPSGD relies on injecting DP noises into the released updates at each iteration, and the variance of the injected noise depends crucially on the {\it sensitivity} of the algorithm. In the practical implementation of DP-SGD, the {\it gradient clipping} operation is used for bounding the algorithm sensitivity of each update in DPSGD~\citep{abadi2016deep}. Although enjoying a promising theoretical privacy guarantee and simple implementation, the DPSGD algorithm with gradient clipping (DPSGD-GC) still faces critical challenges in theoretical analysis and practical implementation. 

\noindent{\bf Challenges.}
In terms of theory, although the inclusion of clipping operation in DPSGD-GC ensures a strong DP guarantee, it considerably complicates the convergence analysis compared to the vanilla SGD algorithm. This is because the expected update direction, which is the expected clipped per-sample gradient in DPSGD-GC, may change dramatically, and additional effort is required to analyze its alignment with the true gradient. Therefore, the early works on DPSGD with convergence analysis assume that the clipping threshold is chosen to be larger than the magnitude of each per-sample gradient, essentially making the clipping operation {\it ineffective} during training~\citep{bassily2014private,wang2016real,feldman2020private,iyengar2019towards,xu2021dp,zhang2022understanding,li2022soteriafl}. Recent works use alternative assumptions and improve the convergence analysis for DPSGD-GC, but the convergence results still rely on an assumption-dependent choice of the clipping threshold~\citep{fangimproved,chen2020understanding,yang2022normalized,qian2021understanding,zhang2020improved,koloskova2023revisiting}.  However, the bounds in the assumptions of real-world problems are hard to estimate, and such a choice of clipping threshold is {\it impossible} to be satisfied in practice. Recent work~\citet{koloskova2023revisiting} has shown a negative result that, under the general assumptions for SGD, regardless of the choice of clipping threshold and stepsize, DPSGD-GC converges with a {\it constant} bias term, meaning in the limit the DPSGD-GC algorithm only converges to a {\it neighborhood} of the optimal or stationary solution. References \citet{chen2020understanding,song2013stochastic} also provide a justification that the gradient clipping shifts the stationary solution of the original problem, thus causing an unavoidable constant bias (see our fixed-point analysis in \secref{sec:dpsgd}).

In terms of practical implementation,  empirical studies have shown that DPSGD-GC suffers from a severe accuracy drop compared with its non-private counterparts~\citep{abadi2016deep,bagdasaryan2019differential,zhang2022understanding}. The additional terms consist of the bias caused by gradient clipping (as mentioned in the previous paragraph), as well as the term caused by the injected DP noise. It follows that when implementing DPSGD-GC in practice, one often has to carefully tune the  clipping threshold so to balance between these two terms. If a small clipping threshold is chosen,  DPSGD-GC injects small DP noise into the system, leading to a small DP error term, but at the cost of increased clipping bias. On the other hand, choosing a large clipping threshold reduces the clipping bias, but to ensure the desired DP guarantees, a large DP-noise has to be injected,  leading to a large performance drop. Therefore, how to properly choose the clipping threshold in practice is more of an art than a science. Recently, more advanced clipping operations have been used to improve the empirical performance of DPSGD-GC, including adaptive clipping threshold~\citep{andrew2021differentially}, group clipping~\citep{mcmahan2018general}, micro-batch clipping~\citep{lee2021nanobatch}, and gradient normalization~\citep{yang2022normalized,das2021convergence}. However, the theoretical properties of these approaches are less understood. Additionally, these approaches either entail a trade-off in terms of a weaker DP guarantee or necessitate a substantial amount of parameter tuning.

In summary, extensive research has shown that DPSGD-GC only converges when the clipping thresholds are tuned based on constants appear in various assumptions (such as the magnitude of the gradients~\citep{bassily2014private,wang2016real,feldman2020private,iyengar2019towards,xu2021dp,zhang2022understanding,li2022soteriafl}, the coefficient of the gradient symmetricity~\citep{chen2020understanding}, or per-sample gradient alignment angles~\citep{qian2021understanding}). Unfortunately, the thresholds are difficult to choose  in practice  because the aforementioned assumptions are hard to verify, thus the coefficients are typically unknown. Therefore, DPSGD-GC often suffers from degraded performance due to the constant bias introduced by the clipping. This fact strongly motivates a new class of DP algorithms that enjoys {\it both} DP guarantee without performance degradation, while being free of clipping threshold tuning. 

\noindent {\bf Our Contributions.} In this work, we propose \algname~algorithm for DP training with both utility and privacy guarantees using a problem-independent clipping threshold. \algname~is motivated by the error-feedback (EF) mechanism -- a classical procedure in signal processing~\citep{599977,laakso1992noise} for cancelling quantization bias. Specifically, we propose a novel {\it clipped} EF mechanism which accumulates the error between the clipped update to the unclipped one at each iteration, and feeds the {\it clipped} error back to the next update. The proposed clipped EF mechanism satisfies the DP guarantee, while still preserving the ability to compensate for the per-sample gradient clipping bias and eventually eliminating the convergence bias caused by clipping. In contrast to existing works, the proposed \algname~provides DP guarantee and convergence guarantee without constant bias, while allowing a flexible choice of the clipping threshold. More importantly, we have observed that when the algorithm is applied to a number of applications, including image classification and natural language processing tasks, it does not suffer from performance degradation; nor does it require careful clipping threshold tuning.

We emphasize that the theoretical analysis for the proposed \algname~is challenging in the following sense: the clipping operation does not satisfy the firmly contracting assumption used in the typical analysis of EF algorithms; additionally, directly applying the conventional DP analysis to \algname~leads to an extremely loose bound. Therefore, a new convergence and privacy analysis for the designed algorithm is required. We summarize our major contribution as follows:
\begin{itemize}[leftmargin=*,nosep]
    \item We propose a novel \algname~algorithm, where a new {\it clipped EF mechanism} is designed to eliminate the clipping bias, while still providing the algorithm with standard DP guarantee.  %and the EF mechanism also includes a clipping operation for DP guarantee.
    %is compatible with different clipping operations. {\red[discuss]}
    \item We provide the convergence proof for \algname~under \xwedit{general non-convex and Lipschitz-smooth}
    % either strong convexity or PL condition
    assumption, and show that \algname~eliminates the constant clipping bias compared with DPSGD-GC with an arbitrary constant clipping threshold. 
    \item We develop an {\it algorithm-specific} R{\'e}nyi-DP analysis for the proposed method, where the update consists of a privatized state and a {\it non-privatized hidden} state. We show that \algname~satisfies $(\epsilon,\delta)$-DP by injecting a slightly (i.e., a constant depending on the clipping threshold of the feedback error signal) larger DP noise compared with DPSGD-GC. 
    \item Finally, we perform rigorous empirical comparisons of our method to DPSGD-GC on a number of publicly available datasets to demonstrate the ability of our method to train models with a high privacy guarantee and good performance. Further, we conduct ablation studies on \algname~to show its stability in the choice of hyper-parameters.
\end{itemize}
%{\red[lack of relative novelty to the literature.]}

\section{Preliminaries}\label{sec:prelim}
% In this section, we first formally introduce the formulation of the optimization problem and the definition of DP. Then we describe the assumptions, analyses, and issues of existing literature related to DPSGD and the EF mechanism. {\mhcomment{if need space, can remove.}
% and describe the preliminary assumptions for the analysis of the DPSDG algorithm in the existing literature. Then, we discuss the convergence results for the original DPSGD algorithm to motivate the need for a new DP method.
\subsection{Notations and assumptions}
\paragraph{Problem formulation}
Throughout the paper, we consider the following empirical risk minimization (ERM) problem on a dataset $\cD: = \{\xi_i, i\in[1,\dots,N]\}$ consisting of $N$ samples of $\xi_i$:
\begin{equation}\label{eq:problem}
    \min_{\xx\in\R^d} f(\xx) := \frac{1}{N}\sum_{\xi \in \cD} f(\xx;\xi),
\end{equation}
where $\xx\in\R^d$ denotes the model parameter of dimension $d$. Further, we denote the per-sample gradient evaluated at $\xx^t$ and sample $\xi_i$ as $\gg_i^t = \nabla f(\xx^t,\xi_i).$ The clipping operation applied to vector $\vv$ is defined as:
\begin{equation}\label{eq:clipping}
    \clip{\vv,C} = \min\left\{1, \frac{C}{\norm{\vv}}\right\}\cdot\vv.
\end{equation}
Throughout the paper, we use superscript $(\cdot)^t$ to denote the variables in iteration $t$, and $\cB$ to denote the index set of the sampled minibatch from dataset $\cD$. The formal definition of differential privacy (DP) is stated below:
\begin{definition}[$\epsilon,\delta$-DP~\citep{dwork2006differential}]\label{def:dp}
A randomized mechanism $\cM$ is said to guarantee $(\epsilon,\delta)$-differentially private, if for any two neighboring datasets $\cD,\cD'$ ($\cD,\cD'$ differ by one sample instance) and for any output measurement $\cS$, it holds that $\mathrm{Pr}[\cM(\cD)\in\cS]\leq \e^\epsilon\mathrm{Pr}[\cM(\cD')\in\cS]+\delta.$    
\end{definition}
To protect DP, we consider the commonly used Gaussian mechanism~\citep{dwork2006differential,abadi2016deep}, which injects additive noise into the output of the algorithm.
\begin{definition}[Gaussian Mechanism~\citep{dwork2006differential}] Suppose an algorithm $f:\cD\rightarrow \R^d$ has $\ell_2$ sensitivity $\Delta_f$ \[\max_{\cD,\cD'}\norm{f(\cD)-f(\cD')}\leq \Delta_f.\] Then for any $\epsilon>0, \delta\leq 1$, by adding a random Gaussian noise to the output of the algorithm
$M(x) = f(x) + \ww,\mathrm{with }~\ww\sim\cN(0,\sigma^2 I_d),$
where $\sigma=\frac{\Delta_f\sqrt{2\ln(1.25/\delta)}}{\epsilon}$, the algorithm $f$ is $(\epsilon,\delta)$-DP.
\end{definition}

\subsection{DPSGD-GC algorithm}\label{sec:dpsgd}
The update of DPSGD-GC algorithm~\citep{abadi2016deep} is given in \algref{alg:dpsgd}. The algorithm first samples a mini-batch $\cB^t$ of size $B$ and computes the per-sample gradient at each step. Then, it applies the Gaussian mechanism by clipping the per-sample gradient with~\eqref{eq:clipping} and injecting the DP noise. Finally, the algorithm updates the model parameter with the averaged privatized mini-batch gradient.
\begin{algorithm}[tb!]
    \begin{algorithmic}[1]
        \STATE {{\bfseries Input:} $\xx^0, \cD, C, \eta$}\\
		\FOR{$t=0,\dots,T-1$}
		    \STATE {Uniformly draw minibatch $\cB^t$ from $\cD$}
		    \STATE {$\tilde{\gg}_i^t = \clip{\nabla f(\xx^t;\xi_i), C}$}
            \STATE {$\xx^{t+1} = \xx^{t} - \frac{\eta^t}{B}\left(\sum_{i\in\cB^t}\tilde{\gg}_i^t + \ww^t\right)$,}
            \STATE{\qquad where $\ww^t \sim\cN(0,\sigma_1^2\cdot \mI)$}
		\ENDFOR
    \end{algorithmic}
    \caption{DPSGD Algorithm with Gradient Clipping}
    \label{alg:dpsgd}
\end{algorithm}
It has been shown that DPSGD-GC guarantees $(\epsilon,\delta)$-DP with sufficiently large injected noise~\citep{abadi2016deep}.
\begin{theorem}[Theorem 1~\citet{abadi2016deep}]\label{thm:dpsgd:dp}
    Given $N,B,T$ and $C$, there exist positive constants $u,v$, such that for any $\epsilon<\frac{uB^2T}{N^2}, \delta>0$, by choosing $\sigma_1^2\geq v\frac{C^2T\ln(\frac{1}{\delta})}{N^2\epsilon^2}$, \algref{alg:dpsgd} is guaranteed to be $(\epsilon,\delta)$-DP.
\end{theorem}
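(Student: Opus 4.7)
The plan is to establish the $(\epsilon,\delta)$-DP guarantee through the standard moments accountant argument of Abadi et al., proceeding in four logically separated steps: a per-sample sensitivity bound from clipping, a per-iteration subsampled Gaussian mechanism analysis, a tight composition across $T$ iterations via the log-moment generating function (MGF), and finally the standard conversion from a Rényi-type bound back to $(\epsilon,\delta)$-DP.

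First I would fix two neighboring datasets $\cD, \cD'$ differing in exactly one sample and note that, by the definition of the clipping operator in~\eqref{eq:clipping}, $\norm{\tilde{\gg}_i^t}\leq C$ for every $i,t$. Consequently, the $\ell_2$-sensitivity of the sum $\sum_{i\in\cB^t}\tilde{\gg}_i^t$ with respect to the swap of one sample is at most $C$ (or $2C$, up to an absorbable constant in $u,v$). Combined with the additive noise $\ww^t\sim\cN(0,\sigma_1^2 \mI)$, one obtains a per-iteration Gaussian mechanism with noise multiplier $\sigma_1/C$.

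Next I would exploit privacy amplification by subsampling. Since $\cB^t$ is drawn uniformly from $\cD$, the effective mechanism at iteration $t$ is a sampled Gaussian with sampling rate $q=B/N$. Following the moments accountant construction, I would introduce the log-MGF
\begin{equation*}
\alpha_{\cM_t}(\lambda) := \log \E_{o\sim\cM_t(\cD)}\!\left[\left(\tfrac{\mathrm{Pr}[\cM_t(\cD)=o]}{\mathrm{Pr}[\cM_t(\cD')=o]}\right)^{\!\lambda}\right],
\end{equation*}
and invoke the key technical lemma from Abadi et al.\ which shows that for the subsampled Gaussian there exist constants $c_1,c_2>0$ such that, whenever $q\leq c_1$ and $\lambda \leq c_2 \sigma_1^2/C^2$, one has $\alpha_{\cM_t}(\lambda)\leq c_3 q^2 \lambda^2 C^2/\sigma_1^2$ for an absolute constant $c_3$. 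This is the single most delicate step because it requires expanding the mixture density of the sampled Gaussian, controlling moments of its likelihood ratio, and choosing the valid range of $\lambda$ carefully; I expect this to be the main obstacle.

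With the per-iteration MGF in hand, the composition step is immediate: by the chain rule for log-MGFs, the accountant of the whole $T$-iteration mechanism satisfies $\alpha(\lambda)\leq T\alpha_{\cM_t}(\lambda)\leq c_3 T q^2 \lambda^2 C^2/\sigma_1^2$. Finally I would use the tail bound $\delta \leq \exp(\alpha(\lambda) - \lambda\epsilon)$ and optimize over admissible $\lambda$, which yields the sufficient condition $\sigma_1^2 \geq v\,\tfrac{C^2 T \ln(1/\delta)}{N^2 \epsilon^2}$ and the range restriction $\epsilon < u B^2 T/N^2$ (the upper bound on $\epsilon$ comes from the admissibility window $\lambda \leq c_2 \sigma_1^2/C^2$). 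Collecting the absolute constants that emerged along the way into $u$ and $v$ then gives the claimed guarantee. Throughout, I would be careful to state that $u,v$ are universal constants independent of $N,B,T,C,\epsilon,\delta$, which is the crucial feature enabling Algorithm~\ref{alg:dpsgd} to be used with problem-agnostic parameter choices.
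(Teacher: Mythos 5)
Your proposal is correct and follows exactly the argument behind this result: the paper does not prove \thref{thm:dpsgd:dp} itself but cites it directly from \citet{abadi2016deep}, and the proof there is precisely the moments-accountant chain you describe (clipping gives the $\ell_2$-sensitivity bound, the subsampled Gaussian log-MGF lemma gives the per-step bound $O(q^2\lambda^2 C^2/\sigma_1^2)$ for admissible $\lambda$, linear composition over $T$ steps, and the tail-bound conversion yields the stated choice of $\sigma_1^2$ and the admissibility window $\epsilon < uB^2T/N^2$). You correctly identify the subsampled-Gaussian moment lemma as the one genuinely delicate step; everything else is routine, and your bookkeeping of the universal constants $u,v$ matches the cited statement.
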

Although providing a strong DP guarantee, the convergence property of DGSGD-GC is less satisfactory. Recent work~\citet{koloskova2023revisiting} has shown that without any extra assumption, DPSGD-GC with an {\it arbitrary} clipping threshold converges with a {\it constant} clipping bias, regardless of the convexity of the problem. Prior works that show the convergence of DPSGD-GC rely on extra assumptions on the problem and clipping thresholds that depend on these assumptions. 
% In early works~\citep{bagdasaryan2019differential,bassily2014private,wang2016real,feldman2020private,iyengar2019towards}, the convergence of DPSGD-GC relies on the assumption that the norm of the per-sample gradient is bounded and the clipping threshold is larger than the bound so that no clipping is performed; 
Specifically, \citet{chen2020understanding} proves the convergence of DPSGD-GC under the assumption that the per-sample gradients have a symmetric distribution; \citet{jin2022efficient} gives a high probability convergence result assuming that the per-sample gradients have a bounded domain and sufficiently large clipping threshold; \citet{yang2022normalized} establishes the convergence of DPSGD-GC by assuming that the deviation of per-sample gradient from the true gradient is bounded, and using a clipping threshold larger than the per-sample gradient deviation to ensure that clipped gradient ``aligns'' with the true gradient; light-tailed gradient variance assumption and a large clipping threshold has been used by~\citet{fangimproved} to provide a high probability bound without constant bias. 

\paragraph{Fixed-point analysis}
To intuitively understand why DPSGD-GC requires additional assumptions on the per-sample gradients and large clipping threshold, let us consider the fixed-point of DPSGD-GC. From the algorithm's update in \algref{alg:dpsgd}, at the fixed point of DPSGD-GC, we have:
\[\E[\xx] = \E\left[\xx -\frac{\eta}{B}\left(\sum_{i\in\cB}\clip{\nabla f(\xx;\xi_i),C} + \ww\right) \right]= \E [\xx] - \frac{\eta}{N}\sum^N_{i=1}\clip{\nabla f(\xx;\xi_i),C} .\]
It indicates that $\frac{1}{N}\sum^N_{i=1}\clip{\nabla f(\xx;\xi_i),C} = 0$ is the fixed-point of DPSGD-GC, but it is clear that such an equality does not imply $\nabla f(\xx) = 0$ in general. Thus DPSGD-GC is {\it not guaranteed} to converge to the solution of the problem~\eqref{eq:problem} where $\nabla f(\xx) = 0$. Additionally, from the fixed-point of DPSGD-GC, we can also understand how the extra assumptions and clipping thresholds guarantee convergence. For example, by using a clipping threshold larger than the deviation of per-sample gradient~\citep{yang2022normalized}, it guarantees that when $\nabla f(\xx) = 0$, it holds that $\norm{\nabla f(\xx;\xi_i) - \nabla f(\xx)} = \norm{\nabla f(\xx;\xi_i)} \leq C$, and \[\frac{1}{N}\sum^N_{i=1}\clip{\nabla f(\xx;\xi_i),C} = \frac{1}{N}\sum^N_{i=1}\nabla f(\xx;\xi_i) = \nabla f(\xx) = 0,\] becomes the fixed-point of DPSGD-GC.

Although providing theoretically sound convergence analyses, the theoretical results in \citet{chen2020understanding,jin2022efficient,yang2022normalized,fangimproved} do not provide practical guidance on choosing the clipping threshold in real-world applications.  In these works, the choices of clipping thresholds depend on the problem parameters, which are hard or impossible to estimate. Therefore, these analyses cannot guarantee that clipping thresholds used in real-world training satisfy the requirements. Thus, DPSGD-GC still suffers from a constant clipping bias, and there is a strong need to design a new DP algorithm that does not suffer from clipping bias.

\subsection{Error-Feedback (EF) SGD}
The EF mechanism has been used to debias the quantization error in signal processing~\citep{laakso1992noise} and has been introduced to optimization algorithms for bias compensation when transmitting biased compressed gradients~\citep{karimireddy2019error,stich2020error,li2022soteriafl}. The EF mechanism for compressed SGD (EFSGD) writes~\citep{karimireddy2019error}
\begin{align}
\xx^{t+1} & = \xx^t - \eta \vv^t,\nonumber\\
\ee^{t+1} & = \ee^t + \gg^t - \vv^t,\label{eq:ef}
\end{align}
where $\vv:=\mathrm{Compress}(\ee^t+\gg^t )$ is a biased compressor and $\gg^t$ is the (estimated) gradient. By using the EF mechanism, the bias caused by compression can be controlled by the stepsize $\eta$ and fully eliminated, thus providing better convergence performance than the original compressed SGD algorithm. In the recent works~\citet{richtarik2021ef21}, a Markov EF mechanism is proposed for simpler implementation and is used for both compression and clipping. However, this EF mechanism fails to deal with stochastic noise in the gradient estimation. EF has also been used in distributed DP algorithm with compression~\citep{li2022soteriafl}, where the proposed SoteriaFL framework adopts a ``shifted compression'' mechanism to eliminate the compression bias when transmitting the privatized local updates. 
Although showing promising potential in dealing with biased updates caused by compression, the existing EF mechanism has not been directly applied to debias the gradient clipping operation; nor has it been used as a component in DP algorithms. 

\vspace{-0.2cm}
\section{Differentially private clipping error-feedback SGD}\label{sec:alg}
In this section, we present the proposed {\bf Di}fferentially Private {\bf C}lipping {\bf E}rror-Feedback SGD (\algname) algorithm inspired by the EF mechanism, which has both convergence and DP guarantee under an {\it arbitrary} choice of clipping threshold. We show that under mild assumptions, \algname~can {\it fully eliminate} the clipping bias in DPSGD-GC even when a small and {\it problem-independent} clipping threshold is used.

\vspace{-0.2cm}

\subsection{Algorithm description}
Our \algname~algorithm is described in \algref{alg:main} and \figref{fig:DiceSGD}. At round $t$, the algorithm first computes the update direction $\vv^t$ by adding the clipped stochastic gradient with the clipped feedback error. Then, the algorithm updates the model parameters $\xx^t$ with $\vv^t$ and injects the DP noise $\ww^t$. Finally, it computes the clipping error $\ee^{t+1}$ for the next iteration. The algorithm only releases $\xx^t$ at iteration $t$ and does not release $\ee^t$ nor $\vv^t$.

\begin{wrapfigure}{r}{0.4\textwidth}
\vspace{-0.6cm}
  \begin{center}
    \includegraphics[width=0.36\textwidth]{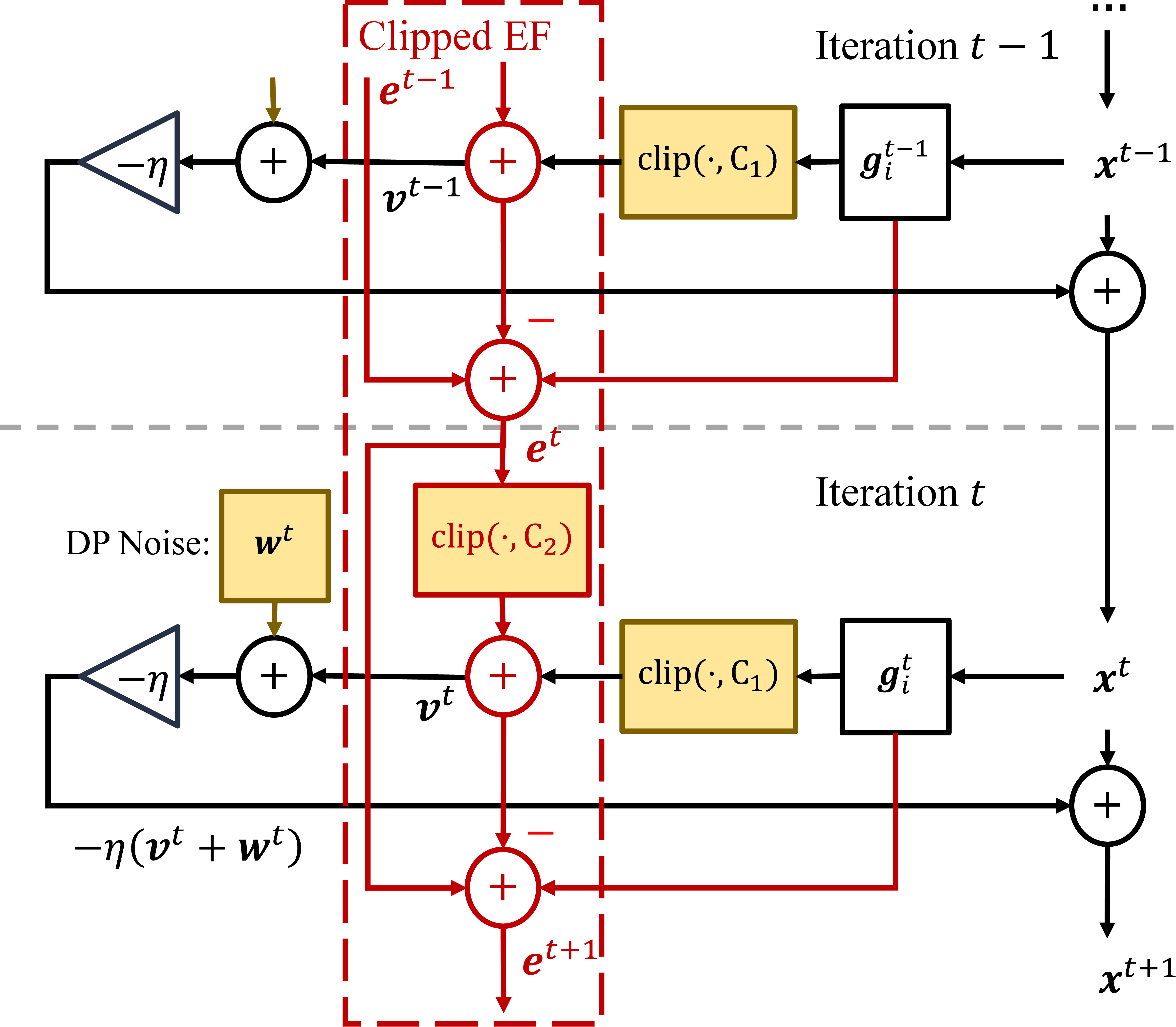}
  \end{center}
  \vspace{-0.2cm}
  \caption{The flow diagram of \algname. The clipped EF components are highlighted in red, and DP components are marked in yellow. $z^{-1}$ denotes the unit delay.}
  \label{fig:DiceSGD}
  \vspace{-0.2cm}
\end{wrapfigure}

In the proposed algorithm, we introduce an extra variable $\ee^t$ that records the clipping error. We keep it unclipped and privatize it when computing the update direction in the next iteration. As an important algorithm design consideration for DP requirement, unlike the original EF mechanism, we do not feed $\ee^t$ back directly to each per-sample gradient clipping operation (Line 5), because it would break the sensitivity of the algorithm. Rather, we first clip $\ee^t$ and add it to the averaged clipped gradient. Using such a clipped EF mechanism for privacy guarantee, we can balance the functionality of EF and the DP requirement of the algorithm.

\begin{algorithm}[t!]
    \begin{algorithmic}[1]
        \STATE {{\bfseries Input:} $\xx^0, \cD, C_1, C_2, \eta$}\\
		\STATE {{\bfseries Initialize:} $\ee^0 = 0$}\\
		\FOR{$t=0,\dots,T-1$}
		    \STATE {Randomly draw minibatch $\cB^t$ from $\cD$}
                \STATE {$\vv^t = \frac{1}{B}\sum_{i\in\cB^t}\clip{\nabla f(\xx^t;\xi_i), C_1} + \clip{\ee^t,C_2}$}
		    \STATE {$\xx^{t+1} = \xx^{t} - \eta^t(\vv^t + \ww^t),$ where $\ww^t \sim\cN(0,\sigma^2_1 \cdot \mI)$}
		    \STATE {$\ee^{t+1} = \ee^t +\frac{1}{B}\sum_{i\in\cB^t}\nabla f(\xx^t;\xi_i) - \vv^t.$}
		\ENDFOR
    \end{algorithmic}
    \caption{\algname~Algorithm}
    \label{alg:main}
\end{algorithm}
To see why the proposed algorithm has the potential of eliminating the clipping bias, let us again study the fixed-point of the \algname~algorithm. At the fixed-point, we have the following relation, where the expectation $\E[\cdot]$ is taken on the randomness of the samples at the current iteration. 
\begin{align*}
    \E[\xx] &= \E[\xx] - \eta \E[\vv+\ww] = \xx - \eta\E[\vv],\\
    \E[\ee] &= \E[\ee] + \E[\frac{1}{B}\sum_{i\in\cB}\nabla f(\xx;\xi_i) - \vv]
    = \E[\ee] + \frac{1}{N}\sum^N_{i=1}\nabla f(\xx;\xi_i) - \E[\vv].
\end{align*}
Therefore, from the above two equations, we can derive that
\begin{align*}
    \E[\vv]=\frac{1}{N}\sum^N_{i=1}\clip{\nabla f(\xx;\xi_i), C_1} + \clip{\ee,C_2} &= 0,\\
\frac{1}{N}\sum^N_{i=1}\clip{\nabla f(\xx;\xi_i), C_1} + \clip{\ee,C_2} & = \frac{1}{N}\sum^N_{i=1}\nabla f(\xx;\xi_i),
\end{align*}
which indicates that the fixed-point of \algname~is given by 
\begin{align*}
    \frac{1}{N}\sum^N_{i=1}\clip{\nabla f(\xx;\xi_i), C_1} & = -\clip{\ee,C_2}, \; \text{ and }\;\frac{1}{N}\sum^N_{i=1}\nabla f(\xx;\xi_i) = \nabla f(\xx) = 0.
\end{align*}
We can show that when $C_2\geq C_1,$ there {\it exists} $\xx, \ee$ such that the fixed point is achieved. Specifically, by choosing $\xx$ satisfies $\nabla f(\xx) = 0$; and $\ee$ is chosen as $\ee = - \frac{1}{N}\sum^N_{i=1}\clip{\nabla f(\xx;\xi_i), C_1}.$ Note that \[\norm{\ee} = \norm{\frac{1}{N}\sum^N_{i=1}\clip{\nabla f(\xx;\xi_i), C_1}} \leq \frac{1}{N}\sum^N_{i=1}\norm{\clip{\nabla f(\xx;\xi_i), C_1}} \leq C_1,\]
we have $\clip{\ee, C_2} = \ee$ as long as $C_2\geq C_1$, and the first equation is satisfied. These choices guarantee that the two equations are satisfied and the fixed point is achieved. The fixed-point analysis indicates that, unlike DPSGD-GC, as long as $C_2\geq C_1$, a condition that is {\it problem independent}, clipped EF can potentially fully compensate the shift of the stationary solution caused by gradient clipping {\it independent} of any problem assumptions, and $\nabla f(\xx) = 0$ is the fixed point of \algname.
%   Such a result does not rely on a clipping threshold depending on the magnitude, the deviation, or the distribution of the per-sample gradients. Thus \algname~has the ability to fully eliminate the clipping bias with an arbitrary clipping threshold. 

% choosing $C_2\geq C_1$ so that \[\norm{\frac{1}{N}\sum^N_{i=1}\clip{\nabla f(\xx;\xi_i), C_1}} \leq \frac{1}{N}\sum^N_{i=1}\norm{\clip{\nabla f(\xx;\xi_i), C_1}} \leq C_1,\] and $\ee = - \frac{1}{N}\sum^N_{i=1}\clip{\nabla f(\xx;\xi_i), C_1}$ with $\clip{\ee, C_2} = \ee.$  \xwedit{These requirements indicate that by choosing a larger clipping threshold for $\ee^t$, \mhcomment{and by setting e as the average of the clipped gradients? why we will be able to set e to that quantity?}\xwcomment{By the update of $\ee^{t+1} = \ee^t-\clip{\ee^t}+ g^t_i-\clip{g^t_i}$, when $g^t_i\rightarrow 0$, $\ee^t\rightarrow \clip{g^t_i}$} clipped EF can fully compensate the shift of the stationary solution caused by gradient clipping.}

\subsection{Theoretical analysis}
In this section, we provide analysis for the proposed \algname~algorithm. We emphasize again that the challenge here is two-fold: 1) it is difficult to analyze convergence due to the combination of the EF mechanism and the clipping operation; 2) the DP analysis is non-trivial due to the presence of the non-privatized update of $\ee^t$ as a hidden state.
To see the first challenge, more specifically, the analyses of the convergence of the existing EF algorithms~\citep{karimireddy2019error,li2022soteriafl} relies on the assumption that the feedback error $\ee^{t+1}$ in \eqref{eq:ef} is a firmly contractive mapping on $\ee^t+\gg^t$: \[\E\norm{\ee^{t+1}}^2 = \E\norm{\ee^t + \gg^t- \mathrm{Compress}(\ee^t + \gg^t)}^2 \leq \alpha\norm{\ee^t + \gg^t}^2,\] where $\alpha \in (0,1)$ is a constant strictly less than $1$. However, in \algname, the clipping error does not satisfy this property. To see this, note the following: 
\begingroup
\small 
\allowdisplaybreaks
\begin{align*}
    \norm{\ee^{t+1}}^2 &= \norm{\ee^t + \frac{1}{B}\sum_{i\in\cB^t}\nabla f(\xx^t;\xi_i) -\left(\clip{\ee^t} + \frac{1}{B}\sum_{i\in\cB^t}\clip{\nabla f(\xx^t;\xi_i)}\right)}^2\\
    & \leq \alpha\norm{\ee^t + \frac{1}{B}\sum_{i\in\cB^t}\nabla f(\xx^t;\xi_i)}^2, \quad \alpha \in (0,1],
\end{align*}
\endgroup
which is {\it non-expansive}, i.e., $\alpha\rightarrow 1$ when $\norm{\ee^{t}+ \frac{1}{B}\sum_{i\in\cB^t}\nabla f(\xx^t;\xi_i)}\rightarrow\infty$. Therefore, the existing convergence analyses for the EF algorithms cannot be  directly applied to our case.
On the other hand, privacy analysis for DPSGD is provided in~\citet{abadi2016deep}, where the sequential updates are released, and recent works studying the privacy amplification by iteration provide last-iterate DP analyses for DP algorithms where only the final state is released to public~\citep{feldman2018privacy,yedifferentially}. However, the update of \algname~is more complicated than the above two cases, as the sequential update of $\xx^t$ is released and privatized, while $\ee^t$, the hidden-state with non-privatized updates, is not released to the public. It is insufficient to directly use the existing DP analyses for \algname, because when applying the privacy analysis for DPSGD to the sequence $\{(\xx^t,\ee^t)\}$ in \algname, the composition theorem does work as $\ee^t$ is not privatized. To tackle the above difficulties, we conduct novel analyses for \algname, which consists of the convergence analysis for clipped EF and DP analysis for algorithms with a privatized public state and a non-privatized hidden state.
% \vspace{-0.5cm}

% The common privacy analysis for DPSGD cannot be applied to \algname~because the composition theorem for DPSGD does not apply due to the non-privatized update of $\ee^t$ that mixed with the update of $\xx^t$.}

%\mhcomment{I am still not reading the more challenging part. why the potential effect of the non-privatized non-release $\ee^t$ cuase so much trouble? Does it leak information even though $\xx^t$ is protected?}
% they do not consider non-privatized updates that are not released to the public as $\ee^t$ in \algname. 
% Clearly {\mhcomment{not immediately clear actually}}, .

\paragraph{Assumptions}
We briefly discuss the assumptions used in the analyses of \algname~algorithm:
\begin{assumption}[Lower Bounded]\label{as:lowerbound}
    The loss function $f(\cdot)$ is bounded from below by some finite constant $f^\star$:
    \[f(\xx)\geq f^\star>-\infty, \;\forall~\xx\in \R^d.\]
\end{assumption}

\begin{assumption}[Smoothness]\label{as:smooth:A}
    The loss function $f(\cdot)$ is $L$-Lipschitz smooth, i.e., it satisfies: \[\norm{\nabla f(\xx) -\nabla f(\yy)}\leq L\norm{\xx-\yy}, \forall~\xx,\yy\in\R^d.\]
\end{assumption}

\begin{assumption}[Strong Convexity]\label{as:sc}
    The loss function $f(\cdot)$ is $\mu$-strongly convex:
    \[f(\yy)\geq f(\xx) + \lin{\nabla f(\xx), \yy-\xx} + \frac{\mu}{2}\norm{\xx-\yy}^2, \forall~\xx,\yy\in\R^d.\]
\end{assumption}
Assumptions~\ref{as:lowerbound} and \ref{as:smooth:A} are standard assumptions used for analyzing the convergence of first-order optimization algorithms. 
The strong convexity assumption has also been widely used in analyzing SGD-type algorithms in both private~\citep{wang2020differentially,song2020characterizing,kamath2022improved,koloskova2023revisiting} and non-private~\citep{rakhlin2011making} settings.

% \begin{assumption}[PL Condition]\label{as:pl}
%     The gradient of the problem satisfies the following inequality:
%     \begin{equation*}
%         \frac{1}{2\mu}\norm{\nabla f(\xx)}^2\geq f(\xx)-f^\star, \forall~\xx\in\R^d.
%     \end{equation*}
% \end{assumption}
% The PL condition is also a widely used assumption when analyzing the convergence of SGD-type algorithms~\citep{stich2020error,lei2019stochastic,karimi2016linear}. This assumption guarantees a quadratic growth to the gradient of the function. It is weaker than the strongly convex assumption as it only imposes a constraint based on the distance between one point to the set of optimal points.

\begin{assumption}[Bounded Variance]\label{as:variance:B}
    The stochastic gradient estimation is unbiased, i.e., $\E[\gg]=\nabla f(\xx),$ and its variance satisfies that there exists a constant $\sigma$, such that $\E\norm{\nabla f(\xx)- \gg_i}^2\leq \frac{\sigma^2}{N}, \forall~\xx\in\R^d.$
\end{assumption}
\begin{assumption}[Bounded Gradient]\label{as:gradient:A}
    The gradient of the function is bounded in the sense that there exists a positive constant $G = \sup_{\xx\in \R^d}\norm{\nabla f(\xx)}<\infty$.
\end{assumption}
Assumptions \ref{as:variance:B} and \ref{as:gradient:A}  are commonly used for analyzing clipping operation~\citep{zhang2020improved,qian2021understanding,song2020characterizing}, the convergence of DP algorithms~\citep{yang2022normalized}, and distributed optimization~\citep{li2022soteriafl,zhang2022understanding}. \asref{as:variance:B} assumes a smaller variance compared with the typical assumption (i.e., $\E\norm{\nabla f(\xx)- \gg_i}^2\leq \sigma^2$), it implies that $\norm{\nabla f(\xx)- \gg_i}^2\leq \sigma^2 ,\forall~i$, and it is necessary for bounding the clipping bias in the existing works (e.g.,in~\citet{yang2022normalized}).
% not only regulates the variance of the stochastic gradients but also imposes the assumption on the gradient w.r.t. each sample instance. Although Assumption \ref{as:variance:B} is stronger than the bounded variance assumption (i.e., $\E\norm{\nabla f(\xx)- \gg_i}^2\leq \sigma^2$), 
Although these assumptions are also used in our analysis, contrasting with existing works, the clipping thresholds $C_1, C_2$ in \algname~do not depend on $G$ or $\sigma$. % \mhcomment{we need to discuss how weak these conditions are compared with literature.}\mhcomment{since we need per-sample boundedness assumption too, I felt that we need to downplay the assumption from the text up until now. }

We now present the convergence theorem of the proposed \algname~algorithm under the non-convex smooth setting~\asref{as:smooth:A}:%strong convexity condition \ref{as:sc}.
\begin{theorem}\label{thm:dice:convergence}
\xwedit{
    Assume the problem satisfies \asref{as:lowerbound}, \ref{as:smooth:A}, \ref{as:variance:B}, and \ref{as:gradient:A}. Given any constant DP noise multiplier $\sigma_1$, by running \algname~(\algref{alg:main}) for $T$ iterations, choosing stepsize $\eta = \sqrt{\frac{2(f(\xx^0)-f^\star)}{TL(2C_1^2+3C_2^2+d\sigma_1^2)}}$, clipping thresholds $C_2 \geq 3C_1 + \frac{\sigma}{B}>0$. It satisfies
    \begin{equation}
        \begin{aligned}
            \E_t \left[\norm{\nabla f(\xx^t)}^2\right] &\leq 2\sqrt{\frac{2L(f(\xx^0) - f^\star)(2C_1^2+3C_2^2+d\sigma_1^2)}{T}},
        \end{aligned}
    \end{equation}
    where the expectation $\E_t$ is taken over $t\in\{0,\dots, T-1\}$, following distribution $\frac{A_t}{\sum^{T-1}_{t=0}A_t}$, with $\{A_t\}\in(0,1]$ being a strictly positive sequence defined in \eqref{eq:A_t}, \appref{app:proof:alg}.  
}
% \xwedit{
%     Assume the problem satisfies \asref{as:lowerbound}, \ref{as:smooth:A}, \ref{as:sc}, \ref{as:variance:B}, and \ref{as:gradient:A}. Given any constant DP noise multiplier $\sigma_1$, by running \algname~(\algref{alg:main}) for $T$ iterations, choosing stepsize $\eta \leq \min\{\frac{1}{2L\beta}, \frac{1}{2L\sqrt{14(T+1)}}\}$, clipping thresholds $C_2\geq C_1>0$, where $\beta \in [\frac{C}{2G'-C},2]$, $r = 1$, and $G' = \max\{0, G+\sigma-C_1\}$. It satisfies
%     \begin{equation}
%         \E[\cL^{T+1}] \leq \left(1-\frac{\mu\beta\eta}{2}\right)^T\cL^0 + \frac{\eta L}{\beta\mu}\left(\frac{(1 + \sqrt{22})}{B}\sigma^2 + d(1 + \sqrt{22})\sigma_1^2\right)
%     \end{equation}
%     where we have defined \[\cL^t := f(\xx^t) - f^\star+ \frac{r\eta L}{2\beta^t}\norm{\beta^t \nabla f(\xx^t) - \E[\vv^t]}^2.\]
% }
\end{theorem}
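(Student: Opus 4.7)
The approach is a virtual-iterate analysis. I would introduce the auxiliary sequence $\tilde{\xx}^t := \xx^t - \eta \ee^t$ and verify that the clipped terms $\clip{\ee^t, C_2}$ and $\tilde{\gg}^t := \frac{1}{B}\sum_{i\in\cB^t}\clip{\nabla f(\xx^t;\xi_i), C_1}$ making up $\vv^t$ cancel identically between the updates of $\xx^{t+1}$ and $\ee^{t+1}$, leaving the clean recursion
\begin{align*}
    \tilde{\xx}^{t+1} = \tilde{\xx}^t - \eta\bigl(\bar{\gg}^t + \ww^t\bigr),\qquad \bar{\gg}^t := \tfrac{1}{B}\sum_{i\in\cB^t}\nabla f(\xx^t;\xi_i).
\end{align*}
Since $\ee^0=0$ and therefore $\tilde{\xx}^0 = \xx^0$, the virtual iterate evolves exactly as noisy SGD driven by the \emph{unbiased} mini-batch gradient, which is the real payoff of the EF design.

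Applying \asref{as:smooth:A} to $f$ at $\tilde{\xx}^t$ and taking conditional expectation (using $\E[\bar{\gg}^t\mid\cF^t]=\nabla f(\xx^t)$, $\E\ww^t=0$, $\E\norm{\ww^t}^2 = d\sigma_1^2$) yields a descent inequality whose only residual is the mismatch $\nabla f(\tilde{\xx}^t)-\nabla f(\xx^t)$. Bounding it via $L$-smoothness as $L\eta\norm{\ee^t}$ and applying Young's inequality converts the cross term into a weighted multiple of $\E\norm{\nabla f(\xx^t)}^2$ plus an $L^2\eta^2\E\norm{\ee^t}^2$ remainder. Under \asref{as:variance:B}--\asref{as:gradient:A} the stochastic term $\E\norm{\bar{\gg}^t}^2$ is bounded by $\norm{\nabla f(\xx^t)}^2 + \sigma^2/(NB)$ and can be absorbed into the smoothness step; the analysis therefore reduces to controlling $\E\norm{\ee^t}^2$ and then telescoping.

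The main obstacle is bounding the hidden state $\ee^t$. From the algorithm,
\begin{align*}
    \ee^{t+1} = \bigl(\ee^t - \clip{\ee^t, C_2}\bigr) + A^t,\qquad A^t := \bar{\gg}^t - \tilde{\gg}^t,
\end{align*}
and the identity $\norm{\ee-\clip{\ee,C_2}} = \max(0,\norm{\ee}-C_2)$ gives the non-expansive bound $\norm{\ee^{t+1}}\leq \max(0,\norm{\ee^t}-C_2)+\norm{A^t}$. Unlike standard EF analyses \citep{karimireddy2019error,li2022soteriafl} that exploit a strict contraction $\norm{\text{residual}}^2\leq \alpha\norm{\text{input}}^2$ with $\alpha<1$, clipping is only non-expansive here, so I instead leverage the fact that whenever $\norm{\ee^t}$ exceeds $C_2$ the clipping subtracts the \emph{fixed} amount $C_2$ from the norm. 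Splitting each per-sample residual $\gg_i^t-\clip{\gg_i^t,C_1}$ and using \asref{as:variance:B}--\asref{as:gradient:A} produces $\E\norm{A^t}\lesssim C_1 + \sigma/B$; the hypothesis $C_2 \geq 3C_1+\sigma/B$ then yields a strict slack $C_2-\E\norm{A^t}>0$, which is exactly the margin needed to derive a drift-plus-contraction inequality $\E\norm{\ee^{t+1}}^2 \leq (1-\rho)\E\norm{\ee^t}^2+\kappa$ with $\rho\in(0,1]$ and $\kappa = O(C_1^2+C_2^2)$. Unrolling from $\ee^0=0$ gives the time-uniform bound $\E\norm{\ee^t}^2 \leq \kappa/\rho$, and the slack factor $3$ in the hypothesis is precisely what restores a Lyapunov decrease that the contractivity-based EF lemmas cannot.

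With the $\ee^t$ bound in hand, the remainder is routine: substitute into the descent inequality of the second paragraph, telescope $t=0,\dots,T-1$ using $f(\tilde{\xx}^0)=f(\xx^0)$ and $f(\tilde{\xx}^T)\geq f^\star$ from \asref{as:lowerbound}, and normalize by $\sum_t A_t$, where the positive weights $\{A_t\}\subset(0,1]$ emerge naturally from the Young-inequality step (their closed form is the sequence referenced in the theorem). Finally, the stated step size $\eta=\sqrt{2(f(\xx^0)-f^\star)/(TL(2C_1^2+3C_2^2+d\sigma_1^2))}$ optimally balances the $1/\eta$ descent budget against the $L\eta(2C_1^2+3C_2^2+d\sigma_1^2)$ noise/clipping-error term, producing the advertised $O(1/\sqrt{T})$ rate with the explicit constant $2\sqrt{2L(f(\xx^0)-f^\star)(2C_1^2+3C_2^2+d\sigma_1^2)/T}$.
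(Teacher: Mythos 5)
Your virtual-iterate identity $\tilde{\xx}^{t+1}=\tilde{\xx}^t-\eta(\bar{\gg}^t+\ww^t)$ with $\tilde{\xx}^t=\xx^t-\eta\ee^t$ is algebraically correct and is the standard EF observation, but the proof breaks at the step where you bound the hidden state. You claim $\E\norm{A^t}\lesssim C_1+\sigma/B$ for $A^t=\bar{\gg}^t-\tilde{\gg}^t$. This is false: each per-sample residual $\gg_i^t-\clip{\gg_i^t,C_1}$ has norm $\max(0,\norm{\gg_i^t}-C_1)$, and under Assumptions \ref{as:variance:B}--\ref{as:gradient:A} this is only bounded by $\max(0,\norm{\nabla f(\xx^t)}+\sigma-C_1)\leq G+\sigma-C_1$, a quantity that scales with $G$, not with $C_1$. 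Since the whole point of the theorem is that $C_1,C_2$ are problem-independent (typically $C_2\ll G$), the ``strict slack'' $C_2-\E\norm{A^t}>0$ you need for the drift-plus-contraction inequality $\E\norm{\ee^{t+1}}^2\leq(1-\rho)\E\norm{\ee^t}^2+\kappa$ does not exist, and the time-uniform bound $\E\norm{\ee^t}^2\leq\kappa/\rho$ fails. Indeed the paper's own estimate (Eq.~\eqref{eq:dice:alpha} and the display preceding it) shows $\norm{\ee^t}$ can grow like $t(G'-C_2)$ when $G'=G+\sigma-C_1>C_2$; with $\eta\sim 1/\sqrt{T}$ the mismatch term $L\eta\norm{\ee^t}$ in your descent inequality then diverges rather than telescopes.

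The paper's proof avoids ever bounding $\norm{\ee^t}$ uniformly. It works with the original iterate $\xx^t$, notes that $\norm{\vv^t}\leq C_1+C_2$ automatically because \emph{both} components of $\vv^t$ are clipped, and controls the cross term $\lin{\nabla f(\xx^t),\E[\alpha_\ee^t\ee^t]}$ through a recursion in $t$ that produces the weights $A_t$. The condition $C_2\geq 3C_1+\sigma/B$ is then used for a different purpose than the one you assign it: it guarantees that whenever the error clipping is active ($\alpha_\ee^t<1$, i.e.\ $A_t<1$), the gradient must already be large, $\norm{\nabla f(\xx^t)}\geq C_2-C_1-\sigma/B\geq 2C_1$, so the residual cross term $(1-A_t)C_1\norm{\nabla f(\xx^t)}$ is dominated by the accumulated descent $\sum_\tau\alpha_\ee^{t+1}\prod(1-\alpha_\ee^{\tau_1})\norm{\nabla f(\xx^\tau)}^2$. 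That coupling between error-clipping activity and gradient magnitude is the missing idea; without it (or some substitute), the telescoping in your final paragraph cannot be completed.
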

% Let us summarize the proof sketch as follows. 

{\bf Proof sketch of \thref{thm:dice:convergence}:}
\begin{enumerate}[leftmargin=*]
    \item We first apply the convergence analysis of biased SGD for non-convex problems with update direction $\E[\vv^t]$. Due to the EF mechanism, the convergence result for \algname~directly depends on the recursion of $\ee^t$, which corrects the bias at iteration $t-1$.
    % has an extra term depending on the bias between the scaled gradient and the expected update direction $\norm{\beta^t \nabla f(\xx^t) - \E[\vv^t]}^2$.
    \item With the update of $\ee^t$, we can derive a recursive bound on the key term $\lin{\nabla f(\xx^t), \E[\ee^t]}$. Unlike EF for contracting error, which depends on the gradients with a constant factor independent of $T$, the error $\ee^t$ caused by clipping operation requires a much tighter recursion directly on the inner product between $\ee^t$ and $\nabla f(\xx^t)$ for analysis. And the coefficients before the gradient heavily depend on the clipping factor.
    \item By substituting the bound of $\lin{\nabla f(\xx^t), \E[\ee^t]}$ into the convergence result in step $1$, and choosing sufficiently small stepsize and adequate clipping factor ratio that compensates for the stochastic noise and the clipping bias, we are able to derive a non-trivial convergence result for \algname.
\end{enumerate}
% Our analysis does not apply to the more general non-convex problem, because bounding the error term $\E\norm{\ee^t}^2$ relies crucially on the strong convexity assumption. The detailed proof of the algorithm is given in \appref{app:proof:convergence}. A similar result also holds when replacing \asref{as:sc} with \asref{as:pl}, and we provide the corresponding theorem and its proof in \appref{app:proof:alg}. 
% {\color{red}[I wonder if we can make the PL result more explicit. Because if people check the main result they only see strongly convexity; they may not read the fine-print to know that it also applies for PL.]}

 \thref{thm:dice:convergence} indicates that %we can choose {\footnotesize\[\eta = \min\left\{\frac{1}{2L\beta}, \frac{\log(T)}{2L\sqrt{14}(T+1)}\right\}\]} When $T$ is sufficiently large, $\eta = \cO(\frac{\log(T)}{T})$, so 
 the overall convergence rate for \algname~is $\cO\left(\frac{1}{\sqrt{T}}\right)$ for the general non-convex setting, which matches the $\cO(\frac{1}{\sqrt{T}})$ lower bound convergence rate of DPSGD without gradient clipping under non-convexity~\citep{bassily2014private,rakhlin2011making}. However, compared with DPSGD-GC~\citep{koloskova2023revisiting}, \algname~fully eliminates the constant bias and improves the convergence rate from $\cO(1)$ to $\cO(\frac{1}{\sqrt{T}})$. The comparison is shown in Table~\ref{tab:comparision}.
\begin{table}[tb!]
\centering
\caption{The comparison between DPSGD, DPSGD-GC, and \algname in terms of convergence, privacy noise, and clipping thresholds. ($\tilde{G}= 2C^2+C_1^2$)}\label{tab:comparision}
\small
%\vspace{-0.2cm}
    \begin{tabular}[b]{l|cccc}
        \hline
        Algorithm & Convergence Rate & Privacy Noise Variance & Assumptions & Clipping \\
        \hline
        DPSGD & $\cO\left(\frac{G\sqrt{\log(1/\delta)}}{N\epsilon}\right) $ & $\cO(\frac{G\sqrt{T\log(\frac{1}{\delta})}}{N\epsilon})$ & \ref{as:variance:B}, \ref{as:gradient:A} & $C\geq G+\sigma$\\
        DPSGD-GC & $\cO\left(\frac{C\sqrt{\log(1/\delta)}}{N\epsilon}\right)+ \cO(1)$ & $\cO(\frac{C\sqrt{T\log(\frac{1}{\delta})}}{N\epsilon})$  & \ref{as:variance:B}, \ref{as:gradient:A}& $C< G + \sigma$\\
        \textbf{\algname} & $\cO\left(\frac{\sqrt{\tilde{G}\log(1/\delta)}}{N\epsilon}\right)$& $\cO(\frac{\sqrt{\tilde{G}T\log(\frac{1}{\delta})}}{N\epsilon})$ & \ref{as:variance:B}, \ref{as:gradient:A}& Independent of $G$\\
        \hline
    \end{tabular}
\end{table}
% The strong convexity (or PL condition) is crutial in the convergence analysis for \algname, as in step $3$, the increasing dependency on $T$ in the error term $\E\norm{\ee^t}^2$

% The update of \algname~at each iteration consists of two steps, which are different from the original DPSGD. Therefore, the sensitivity of the algorithm and the amount of privacy noise also require a different analysis. We provide the following theoretical results for the \algname~algorithm.

%\vspace{-0.3cm}
\paragraph{Privacy guarantee}
Let us proceed with the privacy analysis of \algname. We start with the notion of R{\'e}nyi Differential Privacy~\citep{mironov2017renyi}. By accounting for the distribution divergence of the stochastic gradient at iteration $t$ and the accumulated difference of $\ee^t$ starting from $\ee^0$, we are able to bound the R{\'e}nyi divergence of $\xx^{t+1}$ given two adjacent datasets $\cD, \cD'$ and start with the same $\xx^t$. Then by using the composition theorem of R{\'e}nyi divergence, we provide the privacy guarantee for \algname~in the next result. %Note that this result is independent of problem types as we do not assume either Assumption \ref{as:sc} and Assumption \ref{as:pl}. 
\begin{theorem}\label{thm:dice:dp}
    Assume the problem satisfies Assumptions \ref{as:variance:B}, and \ref{as:gradient:A}, given constant $C$, by fixing the clipping thresholds $0<C_1 \leq C_2\leq C/B$, independent of $G, \sigma$, and assume $\frac{B}{N}\leq \frac{1}{5}$. Choose DP noise standard deviation $\sigma_1$ as 
   %\vspace{-0.1cm}
    \[\sigma^2_1 \geq \frac{32T\Tilde{G}\log(1/\delta)}{N^2\epsilon^2},\]

%\vspace{-0.4cm}
    where $\Tilde{G} := C^2_1+2\min\{C^2,G'^2\}$, and $G'$ defined in \thref{thm:dice:convergence}. Running \algname~ for $T$ iteration, the algorithm guarantees $(\epsilon,\delta)$-differentially private.
\end{theorem}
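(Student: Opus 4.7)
The plan is to establish the privacy guarantee through a R\'enyi differential privacy (RDP) analysis combined with a careful per-iteration sensitivity accounting that handles the hidden state $\ee^t$. Each iteration of \algname~releases $\xx^{t+1} = \xx^t - \eta^t(\vv^t + \ww^t)$, which by post-processing is the output of a Gaussian mechanism applied to the query $\vv^t = \frac{1}{B}\sum_{i\in\cB^t}\clip{\nabla f(\xx^t;\xi_i), C_1} + \clip{\ee^t, C_2}$, so it suffices to bound the RDP of the sequence of these releases. The central difficulty, as flagged in the excerpt, is that $\ee^t$ is a function of $\cD$ through the entire past trajectory yet never appears as a privatized release.

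I would first prove a per-iteration sensitivity bound. For adjacent datasets $\cD, \cD'$, the query $\vv^t$ differs through two channels: the per-sample clipped gradient, which under Poisson subsampling at rate $q = B/N$ contributes sensitivity at most $C_1/B$ when the differing sample falls in the batch; and the clipped error $\clip{\ee^t, C_2}$, whose two versions differ by at most $2C_2 \leq 2C/B$. The key structural point is that clipping $\ee^t$ at $C_2$ bounds the error-state sensitivity by $2C_2$ uniformly, \emph{regardless} of how far the unclipped $\ee^t, \ee'^t$ have drifted across the two runs---converting an unbounded hidden-state problem into a bounded per-iteration sensitivity problem. Combining the two channels via $(a+b)^2 \leq 2a^2 + 2b^2$ yields an effective squared sensitivity proportional to $\tilde{G}/B^2$, where the $\min\{C^2, G'^2\}$ in $\tilde{G}$ reflects the tighter a priori norm bound on $\ee^t$ available from the convergence analysis through $G'$.

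Next, I would invoke the standard RDP guarantee for the subsampled Gaussian mechanism under Poisson sampling at rate $q \leq 1/5$ (matching the hypothesis $B/N \leq 1/5$, which enables the closed-form RDP bound): at order $\alpha$ the per-iteration RDP scales as $\cO(\alpha q^2 \tilde{G}/(B^2 \sigma_1^2)) = \cO(\alpha \tilde{G}/(N^2 \sigma_1^2))$. Adaptive RDP composition over $T$ iterations---legitimate precisely because the per-iteration sensitivity bound is uniform over histories and datasets---yields a total RDP bound $\cO(\alpha T \tilde{G}/(N^2 \sigma_1^2))$, and the standard RDP-to-$(\epsilon,\delta)$ conversion $\epsilon \leq \epsilon_\alpha + \log(1/\delta)/(\alpha-1)$, optimized at $\alpha \approx \log(1/\delta)$, gives $\epsilon = \cO(\sqrt{T\tilde{G}\log(1/\delta)}/(N\sigma_1))$. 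Solving for $\sigma_1$ yields the stated bound, with the absolute constant $32$ absorbing the constants from sensitivity decomposition, the subsampled Gaussian RDP constant, and the optimization over $\alpha$.

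The main obstacle, and the reason standard DPSGD-GC analyses do not apply verbatim, is the non-privatized, dataset-dependent hidden state $\ee^t$. Treating $(\xx^t, \ee^t)$ as a joint release fails because $\ee^t$ itself has no DP guarantee, while ignoring $\ee^t$ would miss its effect on the released $\xx^{t+1}$. The resolution---and the core technical novelty---is that only $\clip{\ee^t, C_2}$, not $\ee^t$ itself, ever enters the update, so the effective sensitivity contribution from the error state is uniformly bounded across histories and datasets. This allows standard RDP composition to go through, at the mild cost of an inflated but explicit per-iteration sensitivity captured by $\tilde{G}$.
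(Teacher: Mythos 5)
Your high-level framing is right---RDP accounting, per-iteration conditional bounds composed over $T$ rounds, and the observation that only $\clip{\ee^t,C_2}$ (not $\ee^t$) enters the released update---but the key step is resolved incorrectly, and the resulting bound would not match the theorem. You bound the error-state channel by the uniform sensitivity $\norm{\clip{\ee^t,C_2}-\clip{\ee'^t,C_2}}\le 2C_2$ and then claim the combined per-iteration RDP scales as $\cO(\alpha q^2\tilde{G}/(B^2\sigma_1^2))$ with $q=B/N$. The $q^2$ amplification, however, comes from the mixture structure of subsampling: with probability $1-q$ the differing sample is absent and the two output distributions coincide. That holds for the gradient channel $\Delta_\gg^t$, but not for the error channel: once the differing sample has been drawn in \emph{any} past batch, $\ee^t\neq\ee'^t$ deterministically, so $\clip{\ee^t,C_2}-\clip{\ee'^t,C_2}$ is a persistent mean shift present in every subsequent iteration regardless of the current batch. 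A deterministic shift of norm up to $2C_2$ contributes $\cO(\alpha C_2^2/\sigma_1^2)=\cO(\alpha C^2/(B^2\sigma_1^2))$ per iteration with no $q^2$ factor, and composing over $T$ rounds forces $\sigma_1^2=\Omega(TC^2\log(1/\delta)/(B^2\epsilon^2))$---larger than the theorem's $32T\tilde{G}\log(1/\delta)/(N^2\epsilon^2)$ by a factor of $N^2/B^2$. So the uniform-sensitivity shortcut proves a strictly weaker statement.

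The paper closes this gap by \emph{not} treating the error channel as a worst-case shift. Instead it proves a recursion on the R\'enyi divergence of the hidden-state difference $\Delta_\ee^t=\ee^t-\ee'^t$ (Lemma~\ref{le:dp:delta_e} together with~\eqref{eq:dice:dp:3_1}--\eqref{eq:dice:dp:3}): each one-step update of $\Delta_\ee^t$ is itself a subsampled mixture---with probability $1-p$ the difference merely contracts by $(1-\alpha^t_\ee)$, and with probability $p$ it additionally picks up a shift of magnitude at most $2G'/B$ where $G'=\max\{0,G+\sigma-C_1\}$. Unrolling this recursion keeps the $p^2=B^2/N^2$ amplification attached to every increment of the hidden-state drift and, after summing the geometric-type series controlled by the lower bound~\eqref{eq:dice:alpha} on $\alpha^t_\ee$, yields the $2\min\{C^2,G'^2\}$ term in $\tilde{G}$ (the $\min$ arises because the accumulated drift is controlled either by the clip level $C_2B\le C$ or by the summed per-step contributions through $G'$---it does not come from the convergence analysis, as you suggest). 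To repair your argument you would need this recursive divergence bound on $\Delta_\ee^t$, or some other device that preserves subsampling amplification on the error channel; the uniform $2C_2$ bound alone cannot deliver the stated constant.
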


Note that although Assumptions \ref{as:variance:B}, and \ref{as:gradient:A} are used in the proof, the result does not rely on the specific values of the bounds, which can be arbitrarily large. Due to the accumulated influence of the update of $\ee^t$, the \algname~requires larger DP-noise than the DPSGD algorithm (larger by a constant multiplicative factor). The detailed proof is given in \appref{app:proof:dp}. By optimizing $T$ we have the following utility-privacy trade-off for \algname.
\begin{corollary}\label{cor:dice}
   Under the same assumptions of \thref{thm:dice:convergence}, choose the stepsize $\eta = \cO(\frac{1}{\sqrt{T}}),$ and clipping thresholds $0<3C_1< C_2 \leq C/B$, and choose noise multiplier $\sigma_1^2$ as \thref{thm:dice:dp}. By running \algname~for $T = \cO\left(\frac{N^2\epsilon^2}{\Tilde{G}\log(1/\delta)}\right)$ iterations, the algorithm  guarantees $(\epsilon,\delta)$-DP, while converging to a solution where the loss function satisfies: 
    \begin{equation*}
            \E[\norm{\nabla f(\xx^t)}^2] = \cO\left(\frac{\sqrt{\Tilde{G}\log(1/\delta)}}{N\epsilon}\right).
    \end{equation*}
\end{corollary}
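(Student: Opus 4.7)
The plan is to derive the utility--privacy trade-off by substituting the privacy-driven choice of $\sigma_1^2$ from \thref{thm:dice:dp} into the convergence bound of \thref{thm:dice:convergence}, then optimizing over the iteration count $T$. First I would instantiate the minimum admissible noise variance from the privacy constraint, $\sigma_1^2 = \Theta\!\left(\tfrac{T\tilde{G}\log(1/\delta)}{N^2\epsilon^2}\right)$, which by \thref{thm:dice:dp} guarantees $(\epsilon,\delta)$-DP provided the batch-size/sampling condition $B/N \le 1/5$ and the clipping thresholds $0 < C_1 \le C_2 \le C/B$ hold. I would then plug this $\sigma_1^2$ together with the stepsize $\eta = \sqrt{2(f(\xx^0) - f^\star)/\bigl(TL(2C_1^2+3C_2^2+d\sigma_1^2)\bigr)}$ into the utility bound of \thref{thm:dice:convergence}, which requires verifying that the hypothesis $C_2 \geq 3C_1 + \sigma/B$ is compatible with the range $C_1 \le C_2 \le C/B$ in \thref{thm:dice:dp} (this is where the corollary's strict inequality $3C_1 < C_2$ enters).

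Next I would bound the right-hand side. The convergence guarantee takes the form
\begin{equation*}
\E\bigl[\|\nabla f(\xx^t)\|^2\bigr] \;=\; \cO\!\left(\sqrt{\frac{2C_1^2 + 3C_2^2 + d\sigma_1^2}{T}}\right).
\end{equation*}
Substituting $\sigma_1^2$ splits the radicand cleanly into a ``clipping'' term $(2C_1^2+3C_2^2)/T$ and a ``DP'' term proportional to $\tilde{G}\log(1/\delta)/(N^2\epsilon^2)$ (with the dimension $d$ absorbed into the $\cO$, as in the table). Since $C_1,C_2 \le C/B$ and both are dominated by $\tilde{G} = C_1^2 + 2\min\{C^2,G'^2\}$ up to constants, choosing
\begin{equation*}
T \;=\; \Theta\!\left(\frac{N^2\epsilon^2}{\tilde{G}\log(1/\delta)}\right)
\end{equation*}
makes the first term at most the same order as the second, and the bound collapses to $\cO\bigl(\sqrt{\tilde{G}\log(1/\delta)}/(N\epsilon)\bigr)$, which is exactly the claim.

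The main (mild) obstacle is bookkeeping rather than mathematics: one has to keep track of all the constants that are introduced by the stepsize rule, the clipping thresholds, and the $A_t$ weighting in \thref{thm:dice:convergence}, and check that the choice of $T$ does not violate any hypothesis of \thref{thm:dice:convergence} or \thref{thm:dice:dp}. In particular, one should check that $T \ge 1$ for the chosen regime (equivalently, $N\epsilon$ is not too small relative to $\sqrt{\tilde{G}\log(1/\delta)}$), and that the subsampling rate still satisfies $B/N \le 1/5$ when $T$ is of the stated order. Once these are verified, the corollary follows as a direct algebraic consequence of \thref{thm:dice:convergence} and \thref{thm:dice:dp}; no new stochastic or geometric argument is required.
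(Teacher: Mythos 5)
Your proposal is correct and follows essentially the same route as the paper, which states the corollary as a direct consequence of Theorems \ref{thm:dice:convergence} and \ref{thm:dice:dp} (substitute the privacy-mandated $\sigma_1^2 = \Theta(T\tilde{G}\log(1/\delta)/(N^2\epsilon^2))$ into the $\cO(\sqrt{(2C_1^2+3C_2^2+d\sigma_1^2)/T})$ utility bound and pick $T$ to balance the two resulting terms) without giving a separate appendix proof. Your attention to the compatibility of $C_2 \geq 3C_1 + \sigma/B$ with $C_2 \leq C/B$ is a reasonable extra check; since $C_1, C_2, d$ are constants independent of $N,\epsilon,\delta$, the bookkeeping collapses exactly as you describe.
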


%\vspace{-0.3cm}
% A similar result can also be derived by replacing \asref{as:sc} with \asref{as:pl}. 
The corollary indicates that when $N\rightarrow \infty$, the expected loss converges with rate $\cO(\frac{\log(N)}{N^2})$ with arbitrary clipping thresholds $C_2\geq C_1>0$ and eliminates the constant clipping bias in DPSGD-GC. 
%\vspace{-0.4cm}
\section{Numerical experiments}\label{sec:numerical}
%\vspace{-0.3cm}
In the experiment, we use the similar Adam variant of DPSGD-GC developed following~\citet{bu2021fast} to implement both DPSGD-GC and DiceSGD (see \appref{app:experiment:adam} for details). We perform extensive evaluations of \algname~on image classification, and natural language processing (NLP) tasks to demonstrate its advantage over DPSGD-GC. The experiments were run on an Intel Xeon W-2102 CPU with an NVIDIA TITAN X GPU for image classification, and on an NVIDIA A100 GPU for  NLP tasks. We conduct extra ablation studies on the choice of the clipping threshold $C_1, C_2$ and learning rate $\eta$ on Cifar-10 and Cifar-100 datasets, which show that \algname~benefits from using a smaller clipping threshold and choosing $C_2 = C_1$ gives the best result in most cases. More results and discussions are given in \appref{app:experiment:abl} due to the space limitation. 

\noindent {\bf Image classification.}
We use both Cifar-10 and Cifar-100 datasets for experiments and use ViT-small~\citep{dosovitskiyimage} as the training model, which is pre-trained on Imagenet. We fine-tune the model for $3$ epochs with batch size $B=1000$. The stepsize for DPSGD-GC and DiceSGD are selected through grid search from $\eta \in \{10^{-2}, 10^{-3}, 10^{-4}\}.$ The experiment results are shown in Table \ref{tab:cv}. %From the results, we can see that decreasing the clipping threshold from $C=C_1 =C_2 = 1$ to $C=C_1=C_2=0.1$, the performance of DPSGD decreases while DiceSGD does not change.
\begin{table}[tb!]
    \centering
    % \footnotesize
    \caption{Test accuracy of DPSGD-GC and DiceSGD on Cifar-10 and Cifar-100 datasets with different clipping thresholds and $(2,10^{-5})$-DP.}
    \label{tab:cv}
    %\vspace{-0.3cm}
    \begin{tabular}{c|lrrr}
    \hline
        Dataset & Clipping.& DPSGD-GC & DiceSGD & SGD\\
    \hline
        Cifar-10 & $C=1.0$ & $95.2\%$& $97.4\%$ & $99.0\%$\\
        Cifar-10 & $C= 0.1$ & $94.5\%$& $97.5\%$ & $99.0\%$\\
        Cifar-100 & $C=1.0$& $79.0\%$& $86.3\%$ & $92.0\%$\\
        Cifar-100 & $C=0.1$& $78.9\%$& $86.5\%$ & $92.0\%$\\
    \hline
    \end{tabular}
    % %\vspace{-0.5cm}
\end{table}

\noindent{\bf Natural language processing.}
To validate the ability of \algname~for training larger models on different tasks, we further conduct experiments on the NLP task. Specifically, we fine-tune the GPT-2 model~\citep{radford2018improving} on the E2E NLG Challenge for $10$ epochs with batch size $B = 1000$, and report the standard metrics such as BLUE, ROUGE-L, etc., used in~\citet{hulora} for evaluation. The results in Table~\ref{tab:nlp} show that DiceSGD has better performance than DPSGD-GC.
\begin{table}[b]
    \centering
    % \footnotesize
    % %\vspace{-0.6cm}
    \caption{Scores of 
    % DPSGD-GC and DiceSGD 
    fine-tuning GPT-2 on E2E NLG Challenge, with $C = 1.0$ and $(8,8\times10^{-6})$-DP.}
    %\vspace{-0.3cm}
    \label{tab:nlp}
    \begin{tabular}{l|ccccc}
    \hline
         Algorithm          & BLEU      & NIST      & METEOR    & ROUGE-L   & CIDEr\\
    \hline
         DPSGD-GC           & $56.8$    & $4.83$    & $36.2$    & $65.2$    & $1.43$\\
         DiceSGD            & $62.6$    & $7.05$    & $38.5$    & $66.6$    & $1.83$\\
         SGD~\citep{hulora}  & $70.4$    & $8.85$    & $46.8$    & $71.8$    & $2.53$\\
    \hline
    \end{tabular}
    % %\vspace{-0.4cm}
\end{table}

To summarize the results of our experiments, we see that in both image classification and the NLP tasks, \algname~outperforms  DPSGD-GC, and sometimes by a significant margin. %Further, \algname~ is robust to the choice of the clipping thresholds.}

% For both DPSGD and DiceSGD, we fine-tune the model for $3$ epochs with batch size $B=1000$ and ensure $(2,10^{-5})$-DP for both algorithms. The stepsize for DPSGD and DiceSGD are selected through grid search from $\eta \in \{1e^{-2}, 1e^{-3}, 1e^{-4}\}.$ The experiment results are shown in Table \ref{tab:cv}. From the results, we can see that decreasing the clipping threshold from $C=C_1 =C_2 = 1$ to $C=C_1=C_2=0.1$, the performance of DPSGD decreases while DiceSGD does not change.

% \xwcomment{planned experiments: 1. cifar-10 with different batchsize. 2. clipping threshold vs. learning rate. 3. NLP problem (GPT-2)}
%\vspace{-0.4cm}
\section{Conclusion}
%\vspace{-0.3cm}
In this paper, we propose the \algname~algorithm for DP training. The algorithm uses a clipped error-feedback mechanism to eliminate the bias in gradient clipping. We provide novel convergence analysis in the strongly convex setting or under PL condition for \algname with a problem-independent clipping threshold and provide the DP guarantee independent of the problem type. Numerical results show superior performances of \algname~compared with DPSGD-GC on image classification and NLP tasks and the robustness of \algname~to the clipping threshold.
\newpage
\subsubsection*{Acknowledgement}
M. Hong and X. Zhang are supported by NSF grants CCF 1910385 and EPCN 2311007. X. Zhang is supported by Doctoral Dissertation Fellowship 2023, University of Minnesota.
\bibliography{ref}
\bibliographystyle{iclr2024_conference}
\newpage
\appendix

\begin{table}[t]
    \centering
    \begin{tabular}{c|l}
    \hline
     Symbol    &  Meaning\\
    \hline
     $\xx, \ww, \ee$    &  model variable, privacy noise, feedback error\\
     $f(\cdot)$ & objective function \\
     $\cD, \cB, \xi$ & dataset, minibatch, data sample \\
     $N, B, i$ & Dataset size, minibatch size, data index \\
     $\gg_i, \nabla f(\xx;\xi_i)$ & gradient evaluated on $\xi_i$\\
     $\epsilon, \delta$ & parameters of $(\epsilon, \delta)$-DP\\
     $T, t$ & total iteration, iteration index\\
     $\mu, L, \sigma$ & strong convexity (PL condition), Lipschitz, variance constants \\
     $C, C_1, C_2$ & clipping thresholds \\
     $\alpha^t_i, \alpha^t_\ee$ & clipping factors of $\gg^t_i, \ee^t$, i.e., $\min\{1, \frac{C_1}{\norm{\gg^t_i}}\}, \min\{1, \frac{C_2}{\norm{\ee^t}}\}$ \\
     $\cA^t_1, \cA^t_2$ & update of $\xx^{t+1}$ and $\ee^{t+1}$, i.e. lines 4,5,6 and 4,5,7 of \algref{alg:main}\\
     $\cH^t$ & sequence of $\{\xx^0, \xx^1, \dots, \xx^t\}$ \\
     $\Delta_\gg^t$ & difference of $\gg^t$ with $\cD, \cD'$, i.e., $\frac{1}{B}\sum_{i\in\cB^{t}}\clip{\gg^t_i,C_1}-\frac{1}{B}\sum_{i\in\cB'^{t}}\clip{\gg^t_i,C_1}$\\
     $\Delta^t_\ee$ & update difference of $\ee^t$ with neighboring datasets, i.e., $\ee^t-\ee'^t.$\\
    \hline
    \end{tabular}
    \caption{Symbols used in the paper.}
    \label{tab:symbol}
\end{table}

\section{Proof of results in \secref{sec:alg}}\label{app:proof:alg}
In this section, we provide detailed proofs of the theorems in \secref{sec:alg}. We find the following relations useful:
\begin{align}
    \norm{a+b}^2 &\leq \left(1+\beta\right)\norm{a}^2 + \left(1+\frac{1}{\beta}\right)\norm{b}^2, \forall~\beta>0\label{eq:square}\\
    \lin{a,b} &= \frac{1}{2}\left(\norm{a}^2 + \norm{b}^2 -\norm{a-b}^2\right) \label{eq:lin}
\end{align}

\begin{lemma}\label{le:clipped_variance}
    Given a random variable $X\in \R^d$. If $c(\cdot):\R^d\rightarrow \R^d$ is a non-expansive mapping, such that $\norm{c(a) - c(b)}\leq \norm{a-b},$
    then $\mathrm{Var}(c(X))\leq \mathrm{Var}(X)$. Additionally, function $c(x) = x - \clip{x,C}$ is a non-expansive mapping for any positive constant clipping threshold $C > 0$.
\end{lemma}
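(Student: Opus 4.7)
The plan is to prove the two parts separately. For the variance statement, I will use the identity that for any square-integrable $\R^d$-valued random variable $Y$ with independent copy $Y'$, one has
\[
\var(Y) \;=\; \E\|Y-\E Y\|^2 \;=\; \tfrac{1}{2}\,\E\|Y-Y'\|^2.
\]
Letting $X'$ be an independent copy of $X$ and applying non-expansiveness of $c$ pointwise, I obtain $\|c(X)-c(X')\|\le\|X-X'\|$ almost surely; taking expectations and multiplying by $\tfrac12$ gives $\var(c(X))\le \var(X)$. This step is short and purely book-keeping.

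The second part, that $c(x)=x-\mathrm{clip}(x,C)$ is non-expansive, is the substantive step. The cleanest route is to observe that $\mathrm{clip}(\cdot,C)$ is precisely the Euclidean projection $P_{B}$ onto the closed ball $B=\{y:\|y\|\le C\}$, which is convex. Projections onto convex sets are \emph{firmly} non-expansive, which is the well-known inequality
\[
\|P_B(a)-P_B(b)\|^2 + \|(I-P_B)(a)-(I-P_B)(b)\|^2 \;\le\; \|a-b\|^2,\qquad\forall\,a,b\in\R^d.
\]
Discarding the first non-negative term on the left immediately yields $\|c(a)-c(b)\|\le\|a-b\|$, as required.

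If a self-contained argument is preferred over citing firm non-expansiveness, I would instead proceed by cases on whether $\|a\|$ and $\|b\|$ exceed $C$. The cases where at least one of $a,b$ lies in $B$ are elementary: either both sides are zero, or reduce to the triangle-inequality bound $\|a\|-C\le\|a-b\|$. The only delicate case is $\|a\|>C$ and $\|b\|>C$, where $c(x)=(1-C/\|x\|)x$; here I would expand $\|c(a)-c(b)\|^2$, use $\langle a-b,\,a/\|a\|-b/\|b\|\rangle \ge 0$ (which follows from the rearrangement $\|a\|+\|b\|\ge \langle a,b\rangle(1/\|a\|+1/\|b\|)$, i.e.\ Cauchy--Schwarz), and bound the remaining $C^2\|a/\|a\|-b/\|b\|\|^2$ term against $2C\langle a-b,\,a/\|a\|-b/\|b\|\rangle$. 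The main obstacle of the whole lemma lives in this case analysis, which is why invoking firm non-expansiveness of the Euclidean projection is the preferred, one-line route.
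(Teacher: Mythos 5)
Your proposal is correct, and the second half coincides with the paper's argument: the paper also identifies $\clip{\cdot,C}$ as the Euclidean projection onto the convex ball $\{x:\norm{x}\leq C\}$ and cites the standard fact that both the projection and the residual map $I-P_B$ are non-expansive, which is exactly what your firm-non-expansiveness inequality delivers after dropping one term. The only genuine divergence is in the variance step. You use the independent-copy identity $\var(Y)=\tfrac12\E\norm{Y-Y'}^2$ and apply non-expansiveness to the pair $(X,X')$; the paper instead uses the variational characterization of the mean, writing $\var(c(X))=\E\norm{c(X)-\E[c(X)]}^2\leq \E\norm{c(X)-c(\E[X])}^2\leq \E\norm{X-\E[X]}^2=\var(X)$, where the first inequality holds because $\E[c(X)]$ minimizes $y\mapsto\E\norm{c(X)-y}^2$ and the second is non-expansiveness applied at the single reference point $\E[X]$. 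Both routes are one-line arguments resting on the same pointwise inequality; yours avoids introducing the minimizing property of the mean at the cost of introducing an independent copy, and it generalizes immediately to settings where one prefers not to evaluate $c$ at $\E[X]$. Your optional self-contained case analysis for the residual map is not needed and is not carried out in the paper either, so there is nothing to check there beyond noting that the convex-projection route already closes the argument.
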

The proof is given in \appref{app:proof:le:cv}.
\begin{theorem}\label{thm:convergence:app}
    Assume the problem satisfies \asref{as:lowerbound}, \ref{as:smooth:A}, \ref{as:variance:B}, and \ref{as:gradient:A}. Given any constant DP noise multiplier $\sigma_1$, by running \algname~(\algref{alg:main}) for $T$ iterations, choosing stepsize $\eta = \sqrt{\frac{2(f(\xx^0)-f^\star)}{TL(2C_1^2+3C_2^2+d\sigma_1^2)}}$, clipping thresholds $C_2 \geq 3C_1 + \frac{\sigma}{B}>0$. It satisfies
    \begin{equation}
        \begin{aligned}
            \E_t \left[\norm{\nabla f(\xx^t)}^2\right] &\leq 2\sqrt{\frac{2L(f(\xx^0) - f^\star)(2C_1^2+3C_2^2+d\sigma_1^2)}{T}},
        \end{aligned}
    \end{equation}
    where the expectation $\E_t$ is taken over $t\in\{0,\dots, T-1\}$, with probability $A_t/\sum^{T-1}_{t=0}A_t$, with $\{A_t\}\in(0,1]$ being a positive sequence defined in \eqref{eq:A_t}.  
\end{theorem}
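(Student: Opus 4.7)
The plan is to combine a standard smoothness-based descent inequality with a careful control of the accumulated clipping error $\ee^t$, exploiting the identity at the heart of the error-feedback mechanism. Taking the conditional expectation of the update $\ee^{t+1} = \ee^t + \frac{1}{B}\sum_{i\in\cB^t}\gg_i^t - \vv^t$ over the sampling of $\cB^t$ yields
$$\E[\vv^t\mid\cH^t,\ee^t] = \nabla f(\xx^t) - \E[\ee^{t+1} - \ee^t\mid\cH^t,\ee^t],$$
so the bias of $\vv^t$ against the true gradient is a telescoping difference of $\ee^t$. First I would apply \asref{as:smooth:A} to obtain $\E[f(\xx^{t+1})] \leq \E[f(\xx^t)] - \eta\E\lin{\nabla f(\xx^t),\vv^t} + \tfrac{L\eta^2}{2}(\E\sqn{\vv^t} + d\sigma_1^2)$, using $\E[\ww^t]=0$, $\E\sqn{\ww^t} = d\sigma_1^2$, and the deterministic bound $\sqn{\vv^t} \leq 2C_1^2+2C_2^2$ from the two clippings inside $\vv^t$; the EF identity then replaces the inner product by $\sqn{\nabla f(\xx^t)}$ plus the telescoping contribution $\lin{\nabla f(\xx^t), \E[\ee^{t+1}-\ee^t]}$.

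Next, summing over $t=0,\dots,T-1$ and performing Abel summation on the telescoping piece gives
$$\sum_{t=0}^{T-1}\lin{\nabla f(\xx^t), \E[\ee^{t+1}-\ee^t]} = \lin{\nabla f(\xx^{T}),\E[\ee^{T}]} - \sum_{t=0}^{T-1}\lin{\nabla f(\xx^{t+1}) - \nabla f(\xx^t), \E[\ee^{t+1}]},$$
using $\ee^0=0$. Cauchy--Schwarz bounds the boundary term once $\E\norm{\ee^T}$ is controlled, and \asref{as:smooth:A} with $\norm{\xx^{t+1}-\xx^t}\leq \eta(C_1+C_2+\norm{\ww^t})$ turns each summand into a quantity of order $\eta L \norm{\nabla f(\xx^t)}\cdot\norm{\ee^{t+1}}$; Young's inequality then converts this into an $\cO(\eta L)$ reduction of the coefficient in front of $\sqn{\nabla f(\xx^t)}$, producing the strictly positive weights $A_t\in(0,1]$ in the theorem statement.

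The main obstacle, and the step on which the whole argument hinges, is establishing a uniform-in-$t$ bound on $\E\norm{\ee^t}$ and $\E\sqn{\ee^t}$, because the clipping operation is only non-expansive rather than a strict contraction as in standard EF analyses. Here the hypothesis $C_2 \geq 3C_1+\sigma/B$ is used decisively: a Lyapunov argument on $\norm{\ee^t}$ shows that whenever $\norm{\ee^t}>C_2$ the map $\ee^t\mapsto \ee^t-\clip{\ee^t,C_2}$ reduces the norm by exactly $C_2$, while the residual $\frac{1}{B}\sum_{i\in\cB^t}(\gg_i^t-\clip{\gg_i^t,C_1})$ adds at most $C_1+\sigma/B$ in expectation, using \asref{as:variance:B}, \asref{as:gradient:A}, and \leref{le:clipped_variance} to control the deviation between a per-sample gradient and its clipped version. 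The inequality $C_2\geq 3C_1+\sigma/B$ then yields a strict inward drift, giving a uniform bound on $\E\sqn{\ee^t}$ of order $(C_1+C_2)^2$; this is exactly the mechanism that replaces the firm contraction used in \citet{karimireddy2019error,li2022soteriafl}. With this in hand, choosing $\eta = \sqrt{2(f(\xx^0)-f^\star)/(TL(2C_1^2+3C_2^2+d\sigma_1^2))}$ balances the linear descent term against the quadratic variance term, the extra $C_2^2$ factor in the stepsize absorbing the Abel-summation drift, and delivers the stated $\cO(1/\sqrt{T})$ rate for the $A_t$-weighted average gradient norm.
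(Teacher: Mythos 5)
Your overall scaffolding (smoothness descent, the error-feedback identity $\E[\vv^t]=\nabla f(\xx^t)-\E[\ee^{t+1}-\ee^t]$, Abel summation, and the recognition that the non-expansiveness of clipping is the crux) is sound, but the step on which you say the whole argument hinges --- a uniform-in-$t$ bound $\E\sqn{\ee^t}=\cO((C_1+C_2)^2)$ via an ``inward drift'' --- is false under the theorem's hypotheses. The per-step increment to $\ee^t$ is $\frac{1}{B}\sum_{i\in\cB^t}(\gg_i^t-\clip{\gg_i^t,C_1})$, whose norm is $\frac{1}{B}\sum_i\max\{0,\norm{\gg_i^t}-C_1\}$; in expectation this is of order $\max\{0,\norm{\nabla f(\xx^t)}+\sigma-C_1\}$, not $C_1+\sigma/B$. \leref{le:clipped_variance} only controls the \emph{variance} of the residual, not its mean, and the mean is roughly $(1-C_1/\norm{\nabla f(\xx^t)})\nabla f(\xx^t)$ when the gradient is large. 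Since $C_1,C_2$ are explicitly allowed to be much smaller than $G$ (that is the point of the theorem), during any stretch of iterations with $\norm{\nabla f(\xx^t)}\approx G\gg C_1+C_2$ the accumulator gains about $G-C_1$ per step and loses at most $C_2$, so $\norm{\ee^t}$ grows linearly in $t$. The condition $C_2\geq 3C_1+\sigma/B$ cannot prevent this; indeed the paper's own estimate is only $\norm{\ee^t}\leq tG'-(t-1)C_2$ with $G'=\max\{0,G+\sigma-C_1\}$, which is unbounded when $G'>C_2$. With $\ee^t$ unbounded, both your Abel boundary term $\lin{\nabla f(\xx^T),\E[\ee^T]}$ and the summands of order $\eta L\norm{\nabla f(\xx^t)}\,\norm{\ee^{t+1}}$ pick up factors growing in $t$, and the $\cO(1/\sqrt{T})$ rate does not follow.

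The paper's proof avoids bounding $\ee^t$ altogether. It works with the \emph{clipped} feedback $\alpha_\ee^t\ee^t=\clip{\ee^t,C_2}$ (always of norm $\leq C_2$), sets up a recursion directly on the inner products $\lin{\nabla f(\xx^t),\E\alpha_\ee^t\ee^t}$, and unrolls it to $\ee^0=0$, which yields the weights $A_t=1-\prod_{\tau=t+1}^{T-1}(1-\alpha_\ee^\tau)$ and a leftover linear term $(1-A_t)C_1\norm{\nabla f(\xx^t)}$. The hypothesis $C_2\geq 3C_1+\sigma/B$ is then used in a quite different way: if the error clipping is ever active ($\alpha_\ee^{t+1}<1$, i.e.\ $\norm{\ee^{t+1}}>C_2$), then necessarily $\norm{\nabla f(\xx^t)}>C_2-C_1-\sigma/B\geq 2C_1$, so the quadratic descent term $A_t\norm{\nabla f(\xx^t)}^2$ dominates the leftover linear term precisely when that leftover is nonzero. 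In short, large $\ee^t$ is tolerated because it certifies a large gradient. To repair your proof you would need to replace the uniform drift bound by an argument of this type that couples the size of $\ee^t$ to the size of $\nabla f(\xx^t)$.
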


\begin{theorem}\label{thm:privacy:app}
    Under Assumptions~\ref{as:variance:B} and \ref{as:gradient:A}, given constant $C$, choose the clipping thresholds $0<C_1\leq C_2 \leq C/B$. Choosing DP noise multiplier $\sigma_1$ as 
    \[\sigma^2_1 \geq \frac{32T(C_1^2+2\min\{C^2, G'^2\})\log(1/\delta)}{N^2\epsilon^2}.\]
    Running \algname~ for $T$ iteration, the algorithm guarantees $(\epsilon,\delta)$-DP.
\end{theorem}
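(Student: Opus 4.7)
The plan is to adopt R{\'e}nyi Differential Privacy (RDP) accounting with careful treatment of the non-privatized hidden state $\ee^t$. Viewing \algname~as the sequential release of $(\xx^1,\dots,\xx^T)$, I would invoke RDP composition over iterations: conditioned on the released history $\cH^t=\{\xx^0,\dots,\xx^t\}$, the next iterate is $\xx^{t+1}=\xx^t-\eta^t(\vv^t+\ww^t)$ with Gaussian noise $\ww^t\sim\cN(0,\sigma_1^2\mI)$, so the per-iteration R{\'e}nyi divergence between $\xx^{t+1}\mid\cH^t$ under $\cD$ and $\cD'$ is controlled by the sensitivity of $\vv^t$ together with the amplification from random minibatch subsampling.

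The per-iteration sensitivity decomposes by the triangle inequality as
\[\norm{\vv^t-\vv'^t}\leq \norm{\Delta_g^t}+\norm{\clip{\ee^t,C_2}-\clip{\ee'^t,C_2}}.\]
For the gradient piece, the standard replace-one DPSGD bound gives $\norm{\Delta_g^t}\leq 2C_1/B$ regardless of the hidden state. For the error piece, the critical observation is that both $\clip{\ee^t,C_2}$ and $\clip{\ee'^t,C_2}$ have norm at most $C_2\leq C/B$; pairing this with a uniform bound $\norm{\ee^t}\leq G'$ obtained by unrolling the $\ee^t$-recursion using the non-expansiveness of $\ee\mapsto \ee-\clip{\ee,C_2}$ (\leref{le:clipped_variance}) together with \asref{as:gradient:A}, the triangle inequality yields $\norm{\clip{\ee^t,C_2}-\clip{\ee'^t,C_2}}\leq 2\min\{C_2,G'\}$. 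Squaring, the effective squared sensitivity of $\vv^t$ is of order $(C_1^2+\min\{C^2,G'^2\})/B^2$ up to absolute constants.

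I would then combine this sensitivity bound with RDP subsampled-Gaussian amplification to obtain a per-iteration RDP contribution of order $\alpha(B/N)^2(C_1^2+\min\{C^2,G'^2\})/(B^2\sigma_1^2)$ at R{\'e}nyi order $\alpha$, use the assumption $B/N\leq 1/5$ to stay in the regime where the amplification inequality is tight, apply RDP composition across $T$ iterations, and finally convert to $(\epsilon,\delta)$-DP via the standard RDP-to-DP conversion, optimizing $\alpha$. Matching the resulting bound to the target privacy budget produces exactly the stated noise requirement $\sigma_1^2\geq 32T(C_1^2+2\min\{C^2,G'^2\})\log(1/\delta)/(N^2\epsilon^2)$.

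The main obstacle is the hidden state: since $\ee^t$ is not privatized and cannot be coupled across neighboring datasets the way the Gaussian noise can, $\norm{\ee^t-\ee'^t}$ is free to accumulate with $t$, and a naive iteration-by-iteration composition would have to charge this unbounded drift at every step, destroying any reasonable noise bound. The crucial structural fact that rescues the proof is that $\ee^t$ enters the released signal $\vv^t$ only through the bounded nonlinearity $\clip{\cdot,C_2}$; this keeps the per-iteration contribution of the hidden state at $\cO(\min\{C_2,G'\})$ uniformly in $t$, so the overall cost relative to DPSGD-GC is only a constant multiplicative inflation of the noise variance, independent of the problem-specific constants $\sigma$ and $G$.
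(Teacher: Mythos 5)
Your high-level framing (RDP accounting conditioned on the released history $\cH^t$, per-iteration composition, conversion to $(\epsilon,\delta)$-DP) matches the paper, but the central step does not go through as written. The problem is where you apply the subsampled-Gaussian amplification factor $p^2=(B/N)^2$ to the \emph{whole} difference $\vv^t-\vv'^t$. Amplification by subsampling requires the mechanism's output under $\cD'$ to be a mixture $(1-p)\mu+p\nu$ whose $(1-p)$-component coincides with the output distribution under $\cD$, i.e., the outputs must be identically distributed whenever the differing sample is not drawn. That holds for $\Delta_\gg^t$, but not for $\clip{\ee^t,C_2}-\clip{\ee'^t,C_2}$: by iteration $t$ the hidden states $\ee^t$ and $\ee'^t$ already differ because of \emph{past} minibatches, so the two conditional distributions differ even on the event that $\xi'$ is not sampled at step $t$. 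Charging the worst-case sensitivity $2\min\{C_2,G'\}$ and then multiplying by $p^2$ is therefore not justified; without the $p^2$ factor on that term the per-iteration cost is $(N/B)^2$ times larger and the stated noise level $\sigma_1^2\geq 32T(C_1^2+2\min\{C^2,G'^2\})\log(1/\delta)/(N^2\epsilon^2)$ does not follow. (A smaller issue: the uniform bound $\norm{\ee^t}\le G'$ you invoke is not what the recursion yields; the paper only obtains $\norm{\ee^t}\le tG'-(t-1)C_2$.)

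The paper recovers the amplification by tracking the R\'enyi divergence of the hidden-state difference $\Delta^t_\ee=\ee^t-\ee'^t$ \emph{recursively} (Lemma~\ref{le:dp:delta_e}): each update satisfies $\Delta^{t+1}_\ee=(1-\alpha^t_\ee)\Delta^t_\ee$ plus an increment of norm at most $2G'/B$ that is present only when $\xi'$ is sampled. The increment is a genuine subsampled event and so earns the $p^2$ factor, while the contraction $(1-\alpha^t_\ee)$ coming from clipping $\ee$ keeps the accumulated divergence from growing; unrolling to $\Delta^0_\ee=0$ and summing the resulting products gives a per-iteration cost of order $\alpha p^2\min\{C_2^2B^2,G'^2\}/(B^2\sigma_1^2)$, which is what produces the claimed constant. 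This recursive divergence bookkeeping on the hidden state, rather than a one-shot worst-case sensitivity bound, is the missing idea in your proposal.
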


Let us denote the clipping factors as $\alpha^t_i := \min\left\{1,\frac{C_1}{\norm{\gg^t_i}}\right\}$, $\alpha^t_\ee := \min\left\{1, \frac{C_2}{\norm{\ee^t}}\right\},$ so that 
\begin{equation}\label{eq:clipping_factor}
    \begin{aligned}
        \clip{\ee^t,C_2} &= \alpha^t_\ee\ee^t, \; \clip{\gg^t_i,C_1} = \alpha^t_i\gg^t_i, \;\mathrm{and}\; \ee^{t+1} = (1-\alpha^t_\ee)\ee^t +  \frac{1}{B}\sum_{i\in\cB^t}(1-\alpha^t_i)\gg^t_i.
    \end{aligned}
\end{equation}
Note that we have the following bound on $\alpha^t_\ee$: Let us consider two cases: 1) $\norm{\ee^t}\leq C_2$ and $\norm{\ee^t} > C_2.$
For case 1), it is clear that $\alpha^t_\ee = 1.$ For case 2), we have $\alpha^t_\ee = \frac{C_2}{\norm{\ee^t}}.$ We can further bound $\norm{\ee^t}$ recursively as 
\begin{align*}
    \norm{\ee^t} &\stackrel{\eqref{eq:clipping_factor}}= \norm{(1-\alpha^{t-1}_\ee)\ee^{t-1} + \frac{1}{B}\sum_{i\in \cB^{t-1}}(1-\alpha^{t-1}_i)g_i^{t-1}} \\
    & \leq (1-\alpha^{t-1}_\ee)\norm{\ee^{t-1}} + \norm{\frac{1}{B}\sum_{i\in \cB^{t-1}}(1-\alpha^{t-1}_i)g_i^{t-1}} \\
    & \stackrel{(a)}{\leq} (1-\alpha^{t-1}_\ee)\norm{\ee^{t-1}} + \frac{1}{B}\sum_{i\in \cB^{t-1}}\norm{\left(1-\min\{1,\frac{C_1}{\norm{g_i^{t-1}}}\}\right)g_i^{t-1}} \\
    & \stackrel{(b)}{\leq} (1-\alpha^{t-1}_\ee)\norm{\ee^{t-1}} + \frac{1}{B}\sum_{i\in \cB^{t-1}}\max\{0,\norm{g_i^{t-1}}-C_1\} \\
    & \stackrel{(c)}{\leq} \norm{\ee^{t-1}}-C_2 + \max\{0, \norm{\nabla f(\xx^{t-1})} + \sigma - C_1\}\\
    & \stackrel{(d)}{\leq} \norm{\ee^{1}} + \sum^{t-1}_{\tau=1}(\max\{0, \norm{\nabla f(\xx^\tau)} + \sigma - C_1\} - C_2)\\
    & \stackrel{(e)}{\leq} t\max\{0,G+\sigma - C_1\} - (t-1)C_2 = tG' - (t-1)C_2.
\end{align*}
where $(a)$ substitutes the definition of $\alpha^t_\ee$; $(b)$ expands the last term; $(c)$ applies \asref{as:variance:B}; in $(d)$ we recursively expand $\norm{\ee^{t-1}}$ to $\norm{\ee^1}$ and notice that $\norm{\ee^1}=\norm{\frac{1}{B}\sum_{i\in \cB^1}(1-\alpha^{1}_i)g_i^{1}}\leq \max\{0, \norm{\nabla f(\xx^1)}+\sigma -C_1\}$; in $(e)$ we apply \asref{as:gradient:A} and define $G' := \max\{0, G+\sigma - C_1\}$.
Therefore, we have
\begin{equation}\label{eq:dice:alpha}
    \alpha^{t}_\ee =  \frac{C_2}{\norm{\ee^t}} \geq \frac{C_2}{tG' - (t-1)C_2} = \frac{C_2}{C_2 + t(G' -C_2)}.
\end{equation}

\subsection{Proof of \thref{thm:dice:convergence}}\label{app:proof:convergence}
\allowdisplaybreaks
% \xwedit{
With smoothness \asref{as:smooth:A}, we have the following descent property, where the expectation $\E_t[\cdot]$ is taken on the randomness over iteration $t$ conditioned on all past histories from $0$ to $t$:
\begin{align}\label{eq:dice:proof:1}
    \E_t & [f(\xx^{t+1})] \leq f(\xx^t) +\lin{\nabla f(\xx^t), \E_t[\xx^{t+1}-\xx^t]}+ \E_t\left[\frac{L}{2}\norm{\xx^{t+1}-\xx^t}^2\right]\nonumber\\
    &\stackrel{(a)}{\leq} f(\xx^t) -\eta\lin{\nabla f(\xx^t), \E_t[\vv^t]}+ \frac{L\eta^2}{2}\E_t\left[\norm{\vv^t}^2\right]+ \frac{Ld\eta^2}{2}\sigma_1^2\nonumber\\
    &= f(\xx^t) -\eta\lin{\nabla f(\xx^t), \E_t[\vv^t]}+ \frac{L\eta^2}{2}\E_t\left[\norm{\frac{1}{B}\sum_{i\in\cB^t}\clip{\gg^t_i,C_1}+ \clip{\ee^t,C_2}}^2\right]+ \frac{Ld\eta^2}{2}\sigma_1^2\nonumber\\
    &\stackrel{(b)}{\leq} f(\xx^t) -\eta\lin{\nabla f(\xx^t), \E_t[\vv^t]}+ \frac{L\eta^2}{2}(C_1+C_2)^2+ \frac{Ld\eta^2}{2}\sigma_1^2,
\end{align}
where $(a)$ applies the update rule of $\xx^t$ and uses the fact that $\ww^t$ follows zero-mean Gaussian and is independent of $\vv^t$, and $(b)$ bounds $\norm{\vv^t}$ by $C_1+C_2$ with clipping operation. Next, we bound the inner-product between $\nabla f(\xx^t)$ and $\E[\vv^t]$ as:
\begin{align*}
    -&\lin{\nabla f(\xx^t), \E[\vv^t]} \stackrel{\eqref{eq:clipping_factor}}{=} -\lin{\nabla f(\xx^t), \E_t \frac{1}{B}\sum_{i\in\cB^t}\alpha^t_i\gg^t_i + \E_t\alpha_\ee^t\ee^t} = \\
    & = -\lin{\nabla f(\xx^t), \E\alpha^t_i\gg^t_i + \E\alpha_\ee^t\ee^t}\\
    & = -\lin{\nabla f(\xx^t), \E\alpha^t_i\gg^t_i} - \lin{\nabla f(\xx^t)-\nabla f(\xx^{t-1}),  \E\alpha_\ee^t\ee^t} - \lin{\nabla f(\xx^{t-1}),  \E\alpha_\ee^t\ee^t}\\
    &\stackrel{(a)}{\leq} -\lin{\nabla f(\xx^t), \E\alpha^t_i\gg^t_i} + \frac{1}{2}\left(\frac{1}{\eta L}\E\norm{\nabla f(\xx^t)-\nabla f(\xx^{t-1})}^2 + \eta L\norm{\alpha_\ee^t\ee^t}^2\right)\\
    & \quad -\lin{\nabla f(\xx^{t-1}),  \E\alpha_\ee^t\left((1-\alpha^{t-1}_\ee)\ee^{t-1} + \nabla f(\xx^{t-1}) - \frac{1}{N}\sum^N_{i=1}\alpha^{t-1}_i\gg^{t-1}_i\right)}\\
    & \stackrel{(b)}{\leq} -\lin{\nabla f(\xx^t), \E\alpha^t_i\gg^t_i} + \frac{\eta L}{2}\left(\E\norm{\vv^{t-1}+\ww^{t-1}}^2 + \E\norm{\alpha_\ee^t\ee^t}^2\right)\\
    & \quad -\frac{\alpha_\ee^t(1-\alpha^{t-1}_\ee)}{\alpha^{t-1}_\ee}\lin{\nabla f(\xx^{t-1}), \E\alpha_\ee^{t-1}\ee^{t-1}} -\alpha_\ee^t\norm{\nabla f(\xx^{t-1})}^2 + \alpha_\ee^t\lin{\nabla f(\xx^{t-1}),\E\alpha^{t-1}_i\gg^{t-1}_i}\\
    &\stackrel{(c)}{\leq} -\lin{\nabla f(\xx^t), \E\alpha^t_i\gg^t_i} + \frac{\eta L}{2}\left((C_1+C_2)^2+d\sigma_1^2 + C_2^2\right) -\alpha_\ee^t\norm{\nabla f(\xx^{t-1})}^2\\
    & \quad -\frac{\alpha_\ee^t(1-\alpha^{t-1}_\ee)}{\alpha^{t-1}_\ee}\lin{\nabla f(\xx^{t-1}), \E\alpha_\ee^{t-1}\ee^{t-1}} + \alpha_\ee^t\lin{\nabla f(\xx^{t-1}),\E\alpha^{t-1}_i\gg^{t-1}_i},
\end{align*}
where $(a)$ applies \eqref{eq:lin} to the second term and expand $\ee^t$ with \eqref{eq:clipping_factor}; $(b)$ applies the smoothness \asref{as:smooth:A} to the second term; $(c)$uses the fact that $\ww^{t-1}$ follows zero-mean Gaussian and is independent of $\vv^{t-1}$ and that $\vv^{t-1} = \clip{\ee^{t-1}, C_2} + \frac{1}{B}\sum_{i\in\cB^{t-1}}\clip{\gg^{t-1}_i,C_1}$ has magnitude less or equal to $C_1+C_2$. 

By recursively expanding the term $-\lin{\nabla f(\xx^{\tau}), \E\alpha^\tau_\ee\ee^\tau}$ above as
\begin{align*}
    &-\lin{\nabla f(\xx^{\tau}), \E\alpha^\tau_\ee\ee^\tau} \leq \frac{\eta L}{2}\left((C_1+C_2)^2+d\sigma_1^2 + C_2^2\right) -\alpha_\ee^\tau\norm{\nabla f(\xx^{\tau-1})}^2\\
    & \quad -\frac{\alpha_\ee^t(1-\alpha^{\tau-1}_\ee)}{\alpha^{\tau-1}_\ee}\lin{\nabla f(\xx^{\tau-1}), \E\alpha_\ee^{\tau-1}\ee^{\tau-1}} + \alpha_\ee^\tau\lin{\nabla f(\xx^{\tau-1}),\E_{\tau-1}\alpha^{\tau-1}_i\gg^{\tau-1}_i},
\end{align*}
and note that $\ee^0 = 0, \alpha_\ee^0 = 1$, we have:
\begin{align}\label{eq:dice:proof:2}
    -&\lin{\nabla f(\xx^t), \E[\vv^t]} = -\lin{\nabla f(\xx^t), \E\alpha^t_i\gg^t_i}  -\lin{\nabla f(\xx^t), \E\alpha_\ee^t\ee^t}\nonumber\\
    & \leq -\lin{\nabla f(\xx^t), \E\alpha^t_i\gg^t_i} + \frac{\eta L}{2}\left((C_1+C_2)^2+d\sigma_1^2 + C_2^2\right) -\alpha_\ee^t\norm{\nabla f(\xx^{t-1})}^2\nonumber\\
    & \quad -\frac{\alpha_\ee^t(1-\alpha^{t-1}_\ee)}{\alpha^{t-1}_\ee}\lin{\nabla f(\xx^{t-1}), \E\alpha_\ee^{t-1}\ee^{t-1}} + \alpha_\ee^t\lin{\nabla f(\xx^{t-1}),\E\alpha^{t-1}_i\gg^{t-1}_i}\nonumber\\
    & \leq -\sum^{t-1}_{\tau=0}\alpha_\ee^t\left(\prod^{t-1}_{\tau_1=\tau+1}(1-\alpha^{\tau_1}_\ee)\right)\norm{\nabla f(\xx^\tau)}^2 -\lin{\nabla f(\xx^t), \E\alpha^t_i\gg^t_i}\nonumber\\
    &\quad + \sum^t_{\tau=1}\alpha^t_\ee\left(\prod^{t-1}_{\tau_1=\tau}(1-\alpha^{\tau_1}_\ee)\right)\left(\frac{L\eta}{2} \left(2C_1^2+3C_2^2+d\sigma_1^2\right) + \lin{\nabla f(\xx^{\tau}), \E\alpha_i^\tau\gg^\tau_i}\right).
\end{align}
Substitute \eqref{eq:dice:proof:2} back to \eqref{eq:dice:proof:1} and sum over iterations, we have:
    \begin{align*}
        \E & [f(\xx^{T})] \leq f(\xx^0) - \eta\sum^{T-1}_{t=0}\sum^{t-1}_{\tau=0}\alpha_\ee^t\left(\prod^{t-1}_{\tau_1=\tau+1}(1-\alpha^{\tau_1}_\ee)\right)\norm{\nabla f(\xx^\tau)}^2\\
        & \quad + \frac{L\eta^2}{2}\sum^{T-1}_{t=0}\left(\sum^t_{\tau=1}\alpha^t_\ee\left(\prod^{t-1}_{\tau_1=\tau}(1-\alpha^{\tau_1}_\ee)\right)\left(2C_1^2+3C_2^2+d\sigma_1^2\right) + 2C_1^2 + 2C_2^2 + d\sigma_1^2\right)\\
        & \quad - \eta\sum^{T-1}_{t=0}\left(\lin{\nabla f(\xx^{t}), \E\alpha_i^t\gg^t_i}-\sum^t_{\tau=1}\alpha^t_\ee\left(\prod^{t-1}_{\tau_1=\tau}(1-\alpha^{\tau_1}_\ee)\right) \lin{\nabla f(\xx^{\tau}), \E\alpha_i^\tau\gg^\tau_i}\right)\\
        & \stackrel{(a)}{=} f(\xx^0) - \eta\sum^{T-1}_{t=0}\sum^{T-1}_{\tau=t+1}\alpha_\ee^\tau\left(\prod^{\tau-1}_{\tau_1=t+1}(1-\alpha^{\tau_1}_\ee)\right)\norm{\nabla f(\xx^t)}^2\\
        & \quad + \frac{L\eta^2}{2}\sum^{T-1}_{t=0}\left(\sum^t_{\tau=1}\alpha^t_\ee\left(\prod^{t-1}_{\tau_1=\tau}(1-\alpha^{\tau_1}_\ee)\right)\left(2C_1^2+3C_2^2+d\sigma_1^2\right) + 2C_1^2 + 2C_2^2 + d\sigma_1^2\right)\\
        & \quad - \eta\sum^{T-1}_{t=0}\left(1-\sum^{T-1}_{\tau=t+1}\alpha^\tau_\ee\left(\prod^{\tau-1}_{\tau_1=t+1}(1-\alpha^{\tau_1}_\ee)\right)\right)\lin{\nabla f(\xx^{t}), \E\alpha_i^t\gg^t_i}\\
        & \stackrel{(b)}{\leq} f(\xx^0) - \eta \sum^{T-1}_{t=0}A_t\norm{\nabla f(\xx^t)}^2 + \frac{L\eta^2}{2}B_T(2C_1^2+3C_2^2+d\sigma_1^2) + \eta \sum^{T-1}_{t=0}C_tC_1\norm{\nabla f(\xx^{t})},
    \end{align*}
where in $(a)$ we rearrange the terms; in $(b)$ we apply Cauchy-Schwarz inequality to the last term and define $A_t, B_T, C_t$ accordingly. Specifically, 
we have 
\begin{align}
    A_t & := \sum^{T-1}_{\tau=t+1}\alpha_\ee^\tau\left(\prod^{\tau-1}_{\tau_1=t+1}(1-\alpha^{\tau_1}_\ee)\right),\label{eq:A_t}\\
    B_T & := T + \sum^{T-1}_{t=0}\sum^t_{\tau=1}\alpha^t_\ee\left(\prod^{t-1}_{\tau_1=\tau}(1-\alpha^{\tau_1}_\ee)\right) = T + \sum^{T-1}_{t=0}A_{t},\\
    C_t & := \abs{1-\sum^{T-1}_{\tau=t+1}\alpha^\tau_\ee\left(\prod^{\tau-1}_{\tau_1=t+1}(1-\alpha^{\tau_1}_\ee)\right)} = \abs{1-A_{t}}.
\end{align}
Next, we bound $A_t, B_T, C_t$, respectively. To begin with, we provide the upper and lower bounds on $A_t$. First, note that by definition of $\alpha^t_\ee$ in \eqref{eq:clipping_factor}, we have $\alpha^t_\ee\in(0,1]$, therefore the product $\prod^{\tau-1}_{\tau_1=t+1}(1-\alpha^{\tau_1}_\ee)$ is strictly less than $1$. Let $T' \in \{0, \dots T-1\}$ denote the iteration where $\alpha^{t}_\ee <1, \forall t>T'$ and $\alpha^{T'}_\ee =1$. Then $A_t$ can be expressed as:
\begin{align*}
    A_t &= \sum^{T-1}_{\tau=t+1}\alpha_\ee^\tau\left(\prod^{\tau-1}_{\tau_1=t+1}(1-\alpha^{\tau_1}_\ee)\right) \\
    & = 1 -1 + \alpha^{t+1}_\ee + \alpha^{t+2}_\ee(1-\alpha^{t+1}_\ee) + \dots + \alpha^{T-1}_\ee\prod^{T-2}_{\tau={t+1}}(1-\alpha^\tau_\ee)  \\
    & = 1 -(1-\alpha^{t+1}_{\ee}) + \alpha^{t+2}_\ee(1-\alpha^{t+1}_\ee) + \dots + \alpha^{T-1}_\ee\prod^{T-2}_{\tau=t+1}(1-\alpha^\tau_\ee)\\
    & = 1 - \prod^{T-1}_{\tau=t+1}(1-\alpha^\tau_\ee) = \begin{cases}
        1, &t < T',\\
        1 -  \prod^{T-1}_{\tau=t+1}(1-\alpha^\tau_\ee), & t \geq T'.
    \end{cases}
\end{align*}
Therefore, $B_T \leq 2T$. $C_t = 1 - A_t \in [0,1)$. Next, we show that the following relation holds:
Case I: $t<T', A_t = 1$, it is clear that
\[A_t\norm{\nabla f(\xx^t)}^2 \geq (1-A_t)C_1\norm{\nabla f(\xx^t)} = 0.\]
Case II: $t\geq T'$, then $A_t<1$, we want to show the following relation holds:
\[\alpha^{t+1}_\ee\sum^{t}_{\tau=T'}\prod^{t}_{\tau_1=\tau+1}(1-\alpha_\ee^{\tau_1}) \norm{\nabla f(\xx^{\tau})}^2 > (1-A_t) C_1\norm{\nabla f(\xx^t)}.\]
% Case II: $A_t <1$. Then we have $A_t \geq \alpha^{t+1}_\ee$ and $\alpha^{t+1}_\ee <1$. 
Let us consider the worst case where $t=T'$, i.e., when the left-hand-side only has one term $\alpha^{t+1}_\ee\norm{\nabla f(\xx^t)}^2$, then, we have %{\color{red}[explain what 'worse case' mean here; in the end, also need a brief discussion for cases where $t>T'$]}:
\begin{align}
    \alpha^{t+1}_\ee = \frac{C_2}{\norm{\ee^{t+1}}}& < 1\nonumber\\
    \frac{C_2}{\norm{\ee^t-\alpha^t_\ee\ee^t + \frac{1}{B}\sum_{i\in\cB^t}(\gg^t_i - \alpha_i^t\gg^t_i)}}& \stackrel{\eqref{eq:clipping_factor}}{<} 1\nonumber\\
    C_2 & \stackrel{(a)}{<}\norm{\ee^t-1\cdot \ee^t + \frac{1}{B}\sum_{i\in\cB^t}(\gg^t_i - \alpha_i^t\gg^t_i)}\nonumber\\
    C_2 &\stackrel{(b)}{<}\norm{\nabla f(\xx^t) +\frac{1}{B}\sum_{i\in\cB^t}(\gg^t_i -\nabla f(\xx^t) - \alpha_i^t\gg^t_i)}\nonumber\\
    C_2 &\stackrel{(c)}{<}\norm{\nabla f(\xx^t)} + \frac{\sigma}{B} + \frac{1}{B}\sum_{i\in\cB^t}\norm{\alpha_i^t\gg^t_i}  \nonumber\\
    C_2 - \frac{\sigma}{B} -C_1& \stackrel{(d)}{<}\norm{\nabla f(\xx^t)},\label{eq:clipping_factor_1}
\end{align}
where $(a)$ uses the fact that $\alpha^t_\ee = 1$ at $t=T'$; $(b)$ we add and subtract $\nabla f(\xx^t)$; $(c)$ applies triangle inequality and \asref{as:variance:B}; and $(d)$ we arrange the terms and use the fact that $\norm{\alpha_i^t\gg^t_i}\leq C_1$.
By setting $C_2 \geq 3C_1 + \frac{\sigma}{B}$, we have $\norm{\nabla f(\xx^t)} \geq 2C_1$. 
Further, from \eqref{eq:clipping_factor_1}, we also have that 
\begin{align}
    \alpha^{t+1}_\ee &= \frac{C_2}{\norm{\ee^t-\alpha^t_\ee\ee^t + \frac{1}{B}\sum_{i\in\cB^t}(\gg^t_i - \alpha_i^t\gg^t_i)}}\nonumber\\
    & \geq \frac{C_2}{\norm{\nabla f(\xx^t)} + \frac{\sigma}{B} + C_1},\label{eq:clipping_factor_2}
\end{align}
where we apply triangle inequality to the denominator in the second inequality.
Therefore, we have for the worst case:
\begin{align*}
    \alpha^{t+1}_\ee\norm{\nabla f(\xx^t)}^2 &\stackrel{\eqref{eq:clipping_factor_2}}{\geq} \frac{C_2}{\norm{\nabla f(\xx^t)}+ C_1 + \frac{\sigma}{B}} \norm{\nabla f(\xx^t)}\cdot\norm{\nabla f(\xx^t)} \\
    & \stackrel{(a)}{\geq} \frac{(3C_1 + \frac{\sigma}{B})2C_1}{3C_1 + \frac{\sigma}{B}}\norm{\nabla f(\xx^t)} \geq 2C_1\norm{\nabla f(\xx^t)} > 2(1-A_t)C_1\norm{\nabla f(\xx^t)},
\end{align*}
where $(a)$ uses the fact that $\frac{C_2}{\norm{\nabla f(\xx^t)}+ C_1 + \frac{\sigma}{B}} \norm{\nabla f(\xx^t)}$ is monotonically increasing w.r.t. $\norm{\nabla f(\xx^t)}$. When $t>T'$, a similar proof can be applied to show that
$\alpha^{t+1}_\ee\sum^{t}_{\tau=T'}\prod^{t}_{\tau_1=\tau+1}(1-\alpha_\ee^{\tau_1}) \norm{\nabla f(\xx^{\tau})} \geq 2C_1.$
%by first lower-bounding $\norm{\nabla f(\xx^t)}$ and
Putting the above results together, we have:
\begin{equation}
    \begin{aligned}
        \E [f(\xx^{T})] &\leq f(\xx^0) - \frac{\eta}{2} \sum^{T-1}_{t=0}A_t\norm{\nabla f(\xx^t)}^2 + \frac{TL\eta^2}{2}(2C_1^2+3C_2^2+d\sigma_1^2),\\
        \E_t[\norm{\nabla f(\xx^t)}^2] &\leq \frac{2(f(\xx^0) - f^\star)}{\eta T} + \eta L(2C_1^2+3C_2^2+d\sigma_1^2),
        % \E_t\left[A_t\norm{\nabla f(\xx^t)}^2\right] &\leq \frac{2(f(\xx^0) - f^\star)}{\eta T} + \eta L(2C_1^2+3C_2^2+d\sigma_1^2)
    \end{aligned}
\end{equation}
where $C_2 \geq 3C_1 + \frac{\sigma}{B}$, and the expectation is taken over $t\in \{0,\dots, T-1\}$, with probability $A_t/\sum^{T-1}_{t=0}A_t$. The theorem is proved.

\subsection{Proof of \thref{thm:dice:dp}}\label{app:proof:dp}
In this section, we provide the privacy analysis for \algname~algorithm and the proof of \thref{thm:dice:dp}. Specifically, we adopt the R{\'e}nyi differential privacy (RDP) notion for our analysis. The definition of R{\'e}nyi-DP is given as follows:
\begin{definition}[R{\'e}nyi-DP~\citet{mironov2017renyi}]\label{def:rdp}
    A randomized mechanism $\cM$ is said to guarantee $(\alpha,\epsilon)$-RDP with order $\alpha>1$, if for any two neighboring datasets $\cD, \cD'$ ($\cD,\cD'$ differ by one sample instance), it holds that
    \[D_\alpha(\cM(\cD)\Vert\cM(\cD'))=\frac{1}{\alpha-1}\log\E_{\theta\sim\cM(\cD')}\left[\left(\frac{\cM(\cD)(\theta)}{\cM(\cD')(\theta)}\right)^\alpha\right]\leq \epsilon.\]
\end{definition}
R{\'e}nyi-DP can be translated into the more popular $(\epsilon,\delta)$-DP~Def. \ref{def:dp} with the following lemma:
\begin{lemma}[Proposition 3 \citet{mironov2017renyi} ]\label{le:dp_rdp}
    A randomized mechanism $\cM$ guarantees $(\alpha,\epsilon)$-RDP, then it guarantees $(\epsilon+\log(1/\delta)/(\alpha-1), \delta)$-DP for all $\delta\in(0,1)$.
\end{lemma}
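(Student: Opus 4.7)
The plan is to convert the R\'enyi divergence bound, which is a moment-type control on the likelihood ratio, into a one-sided tail bound on the \emph{privacy loss random variable}, and then split any output set $\cS$ into a ``typical'' and an ``atypical'' piece to extract the $(\epsilon',\delta)$-DP conclusion, where I calibrate $\epsilon' := \epsilon + \log(1/\delta)/(\alpha-1)$.

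First, for any pair of neighboring datasets $\cD,\cD'$, I would define the privacy loss random variable $L(\theta) := \log\bigl(\cM(\cD)(\theta)/\cM(\cD')(\theta)\bigr)$. A one-line change-of-measure identity rewrites the RDP hypothesis as a bound on the $(\alpha-1)$-moment generating function of $L$ under $\cM(\cD)$:
\[\E_{\theta\sim\cM(\cD)}\!\left[\e^{(\alpha-1)L(\theta)}\right] = \E_{\theta\sim\cM(\cD')}\!\left[\left(\frac{\cM(\cD)(\theta)}{\cM(\cD')(\theta)}\right)^{\alpha}\right] = \e^{(\alpha-1)D_\alpha(\cM(\cD)\Vert\cM(\cD'))} \leq \e^{(\alpha-1)\epsilon}.\]
(The finiteness of $D_\alpha$ guarantees absolute continuity of $\cM(\cD)$ with respect to $\cM(\cD')$, so the ratio is well-defined $\cM(\cD)$-almost surely.)

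Second, I would apply Markov's inequality to the non-negative random variable $\e^{(\alpha-1)L(\theta)}$ at the threshold $\e^{(\alpha-1)\epsilon'}$ to get
\[\Pr_{\theta\sim\cM(\cD)}\!\left[L(\theta) > \epsilon'\right] \leq \e^{-(\alpha-1)\epsilon'}\E_{\theta\sim\cM(\cD)}\!\left[\e^{(\alpha-1)L(\theta)}\right] \leq \e^{(\alpha-1)(\epsilon-\epsilon')} = \delta,\]
where the final equality is exactly the calibration $\epsilon' - \epsilon = \log(1/\delta)/(\alpha-1)$.

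Third and finally, for an arbitrary measurable output set $\cS$, I would decompose the probability along the ``good'' event $\{L\leq\epsilon'\}$ and its complement. On the good event, $\cM(\cD)(\theta) \leq \e^{\epsilon'}\cM(\cD')(\theta)$ pointwise, so that piece contributes at most $\e^{\epsilon'}\Pr[\cM(\cD')\in\cS]$; the complementary event contributes at most $\delta$ by the Markov bound above. Summing the two pieces yields $\Pr[\cM(\cD)\in\cS] \leq \e^{\epsilon'}\Pr[\cM(\cD')\in\cS] + \delta$, which is precisely the claimed $(\epsilon',\delta)$-DP guarantee. There is no real obstacle in the argument; the only subtlety is the change-of-measure identity in the first display (handled by absolute continuity) and the exponential tuning of $\epsilon'$ to absorb the $\log(1/\delta)$ factor into the Markov bound.
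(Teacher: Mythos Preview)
Your proof is correct and is essentially the standard argument from \citet{mironov2017renyi}; the paper itself does not supply a proof of this lemma but merely cites Proposition~3 of that reference. The change-of-measure identity, the Markov bound on $\e^{(\alpha-1)L}$, and the good/bad set decomposition you outline are exactly the steps used there, so nothing further is needed.
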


To derive a privacy guarantee for the proposed \algname~algorithm, we first reformulate the algorithm as the following procedures. In specific, we define 
\begin{align*}
    \cA^t_1: &\cX\times \cE^t\times \cD \rightarrow \cX \text{ as the evolution of } \xx^t, \text{i.e., lines 4,5,6 of \algref{alg:main}}\\
    \cA^t_2: &\cX \times \cE^t\times \cD \rightarrow \cE^{t+1} \text{ as the evolution of } \ee^t, \text{i.e., lines 4,5,7 of \algref{alg:main}}\\
    \cH^t: &\cD \rightarrow \prod^{t}_{\tau=1}\cX \text{ as the sequential observation of } \xx^t, \text{i.e., }\{\xx^0, \xx^1, \dots, \xx^t\}.
\end{align*}
In addition, we define $p=\frac{B}{N}$ and $\cD, \cD'$ be the two neighboring datasets where $\cD'$ contains a unique sample $\xi'$, i.e., $\cD' = \cD \cup \{\xi'\}$. Then we recursively bound the RDP guarantee of $\cH^t$ for $t=1,\dots,T$ with three steps.

We first provide the sketch of the proof as follows:
\begin{enumerate}[leftmargin=*]
    \item We show that the output sequence at the first iteration $\cH^1$ satisfies $(\alpha,\epsilon^1)$-RDP by using the RDP of the sub-sampled Gaussian mechanism.
    \item We assume that the output of $\cA^t_1$ conditioned on the past output sequence $\cH^t$ satisfies $(\alpha,\epsilon')$-RDP. Then we can derive the RDP guarantee for the output sequence at iteration $t+1$, i.e., $\cH^{t+1} = \{\cH^t, \cA^t_1\vert \cH^t\}$ by the composition theorem of RDP.
    \item We provide the RDP bound for $\cA^t_1$ conditioned on $\cH^t$, which consists of the sub-sampled gradients at iteration $t$ and the update of $\ee^t$ with $\cA^\tau_2$ from iteration $\tau=1,\dots,t-1$ combined with the Gaussian noise. Therefore, we bound the sensitivity and R{\'e}nyi divergence of $\cA^t_1$ by accounting for the impact of the neighboring datasets on both the sub-sampled gradients at iteration $t$ and the updates of $\ee^t$ conditioned on $\cH^t$. Using the update of $\ee^t$, we can recursively derive the R{\'e}nyi divergence of $\ee^t$ from $\ee^{t-1},\dots, \ee^0$.
\end{enumerate}
The above three steps enable us to bound the $(\alpha, \epsilon^t)$-RDP for $\cH^t$, the output of \algname~algorithm. Finally, by applying \leref{le:dp_rdp}, we obtain the $(\epsilon,\delta)$-DP guarantee for \algname~algorithm.

{\bf Step I:} First, when $t=1$, $\cH^1 = \cA^1_1$. Apply \leref{le:dp:sgm}, and we obtain that $\cH^1$ satisfies $(\alpha,\epsilon(\sigma))$-RDP where $\epsilon(\sigma) \leq \frac{8C_1^2\alpha}{N^2\sigma_1^2}$ is a function depending on the size of the injected noise $\sigma$, and we assume $p\leq \frac{1}{5}$.

{\bf Step II:} {\bf Claim:} Suppose $\cH^{t}$ satisfies $(\alpha,\epsilon)$-RDP, $\cA_1^t$ conditioned on $\cH^t$ satisfies $(\alpha,\epsilon')$-RDP, then $\cH^{t+1}$ satisfies $(\alpha,\epsilon+\epsilon')$-RDP. 
\begin{proof}
    Let us define $X_{t+1}(\xx^{t+1}\vert \cH^t)$ and $X'_{t+1}(\xx^{t+1}\vert \cH^t)$ be the conditional probability-density-function (PDF) of the output of $\cA^{t}_1$ with neighboring datasets $\cD$ and $\cD'$ conditioned on the past outputs, respectively; similarly, define $H_{t}(\cH^t), H'_{t}(\cH^t)$ be the PDF of the output of $\cH^{t}$ with datasets $\cD$ and $\cD'$. Then $H_{t+1}(\cH^{t+1}) = H_{t+1}(\cH^t), \xx^{t+1})= H_{t}(\cH^t)X_{t+1}(\xx^{t+1}\vert \cH^t)$, and we have 
\begin{equation}
    \begin{aligned}
        \exp[&(\alpha-1)D_{\alpha}(\cH^{t+1}(D)\Vert \cH^{t+1}(D'))] \\
        &= \int_{\prod_0^t\cX}\int_{\cX} H_t(\cH^t)^\alpha H'_t(\cH^t)^{1-\alpha}X_{t+1}(\xx^{t+1}\vert \cH^t)^\alpha X'_{t+1}(\xx^{t+1}\vert \cH^t)^{1-\alpha}\md \cH\md \xx^{t+1}\\
        &= \int_{\prod^t_0\cX} H_t(\cH^t)^\alpha H'_t(\cH^t)^{1-\alpha}\int_{\cX}X_{t+1}(\xx^{t+1}\vert \cH^t)^\alpha X'_{t+1}(\xx^{t+1}\vert \cH^t)^{1-\alpha}\md \xx^{t+1}\md \cH^t\\
        &\stackrel{(a)}{\leq} \exp((\alpha-1)\epsilon)\exp((\alpha-1)\epsilon')\\
        & = \exp((\alpha-1)(\epsilon+\epsilon')),
    \end{aligned}
\end{equation}
where $(a)$ applies the assumptions that $\cH^t$ satisfies $(\alpha,\epsilon)$-RDP and $\cX^t$ satisfies $(\alpha,\epsilon')$-RDP to the first and the second integration, respectively. Thus $\cH^{t+1}$ satisfies $(\alpha,\epsilon+\epsilon')$-RDP.
\end{proof}

{\bf Step III:} In the above step, we use the assumption that conditioning on $\cH^t$, $\cX^t$ satisfies $(\alpha,\epsilon')$-RDP. In this step, we explicitly bound $\epsilon'$ in the $(\alpha,\epsilon')$-RDP of $\cA^{t}_1$ conditioning on $\cH^t$. We first expand the R{\'e}nyi divergence of $\cA_1(\xx^t, \cD)$ and $\cA_1(\xx^t, \cD')$ as
\begin{align}\label{eq:dice:dp:1}
    D_\alpha(\cA_1(\xx^t, \cD)\Vert\cA_1(\xx^t, \cD')\vert\cH^t) &= D_\alpha(\cN(\xx^t-\eta\vv^t,\eta^2\sigma_1^2 \cdot I)\Vert\cN(\xx^t-\eta\vv'^t,\eta^2\sigma_1^2 \cdot I)\vert\cH^t) \nonumber\\
    & \stackrel{(a)}{=} D_\alpha(\cN(\vv^t-\vv'^t,\sigma_1^2 \cdot I)\Vert\cN(0,\sigma_1^2 \cdot I)\vert\cH^t),
\end{align}
where in $(a)$ we first shift the mean of the Gaussian distributions by $-\xx^t+\eta\vv'^t$ and rescale them by a factor of $-\frac{1}{\eta}$.
Let $\mu_0$ be the PDF of $\cN(0, \sigma_1^2\cdot I)$ and $\cN_\sigma(\cdot)$ denote $\cN(\cdot, \sigma_1^2 \cdot I)$. Define 
\begin{align*}
   \Delta^t_\gg &:= \frac{1}{B}\sum_{i\in\cB^{t}}\clip{\gg^t_i,C_1}-\frac{1}{B}\sum_{i\in\cB'^{t}}\clip{\gg^t_i,C_1}\\ \Delta^t_\ee & := \ee^t-\ee'^t.
\end{align*}
Then $\vv-\vv'$ can be expressed as
\[\vv^t-\vv'^t = \clip{\ee^t,C_2}-\clip{\ee'^t,C_2} + \Delta^t_\gg.\]
Substitute the above relation of $\vv-\vv'$ to \eqref{eq:dice:dp:1}, we have
\begin{align}
    D_\alpha&(\cN_\sigma(\vv^t-\vv'^t)\Vert \mu_0\vert\cH^t)= D_\alpha(\cN_\sigma(\clip{\ee^t,C_2}-\clip{\ee'^t,C_2} + \Delta^t_\gg)\Vert \mu_0\vert\cH^t)\nonumber\\
    & \stackrel{(a)}{\leq} 2D_\alpha(\cN_\sigma(\clip{\ee^t,C_2}-\clip{\ee'^t,C_2})\Vert \mu_0\vert\cH^t)\nonumber\\
    &\quad + 2D_\alpha(\cN_\sigma(\clip{\ee^t,C_2}-\clip{\ee'^t,C_2} + \Delta^t_\gg)\Vert \cN_\sigma(\clip{\ee^t,C_2}-\clip{\ee'^t,C_2})\vert\cH^t)\nonumber\\
    & \stackrel{(b)}{=} 2D_\alpha(\cN_\sigma(\clip{\ee^t,C_2}-\clip{\ee'^t,C_2})\Vert \mu_0\vert\cH^t)+ 2D_\alpha(\cN_\sigma(\Delta^t_\gg)\Vert \mu_0\vert\cH^t),\label{eq:dice:dp:2}
    % \nonumber\\
    % & \stackrel{(c)}{\leq} 2D_\alpha(\cN(\Delta^t_\ee, \sigma^2 \cdot I)\Vert \mu_0)+ 2D_\alpha((1-p)\mu_0 +p\cN(, \sigma^2 \cdot I)\Vert \cN(0, \sigma^2 \cdot I))\nonumber\\
    % & = 2D_\alpha(\cN(\Delta^t_\ee, \sigma^2 \cdot I)\Vert \mu_0) + 2D_\alpha((1-p)\mu_0+ p\cN(\Delta_\gg, \sigma^2 \cdot I)\Vert \mu_0),\label{eq:dice:dp:2}
\end{align}
where $(a)$ applies \leref{le:dp:square} with $a = \clip{\ee^t,C_2}-\clip{\ee'^t,C_2}+ \Delta^t_\gg, b = 0, c = \clip{\ee^t,C_2}-\clip{\ee'^t,C_2}$; $(b)$ shifts the mean of the Gaussian distributions in the second term by $-\clip{\ee^t,C_2}-\clip{\ee'^t,C_2}$. Next, we bound the two terms in \eqref{eq:dice:dp:2} separately.

{\bf Bounding the first term in \eqref{eq:dice:dp:2}:} To bound $D_\alpha(\cN_\sigma(\clip{\ee^t,C_2}-\clip{\ee'^t,C_2})\Vert \mu_0\vert\cH^t)$. First notice that clipping operation is non-expansive with factor $\alpha^{t}_\ee$, so we have 
\begin{align*}
    &D_\alpha(\cN_\sigma(\clip{\ee^t,C_2}-\clip{\ee'^t,C_2})\Vert \mu_0\vert\cH^t) \\
    &\leq (\alpha^{t}_\ee)^2D_\alpha(\cN_\sigma(\ee^t-\ee'^t)\Vert \mu_0\vert\cH^t) = (\alpha^{t}_\ee)^2D_\alpha(\cN_\sigma(\Delta^t_\ee)\Vert \mu_0\vert\cH^t)
\end{align*}
Then, we start with bounding the update of $\ee$ with the following lemma:
\begin{lemma}\label{le:dp:delta_e}
Let $\xx^t, \ee^t, \ee'^t$ be the input of $\cA_2$. Then the R{\'e}nyi divergence $D_\alpha(\cN_\sigma(\Delta^{t+1}_\ee)\Vert \mu_0)$ can be bounded by 
\begin{equation}
    \begin{aligned}
        D_\alpha(\cN_\sigma(\Delta^{t+1}_\ee)\Vert \mu_0) \leq D_\alpha((1-p)\cN_\sigma((1-\alpha^t_{\ee})\Delta^{t}_\ee\vert\cH^t) + p\cN_\sigma((1-\alpha^t_{\ee})\Delta^{t}_\ee + \frac{2G'}{B})\Vert \mu_0\vert\cH^t),
    \end{aligned}
\end{equation}
where $p= \frac{B}{N}$ be the sub-sampling rate of the minibatch, $G'=\max\{0, G+\sigma-C_1\}$ and $\alpha^t_\ee$ defined in \eqref{eq:dice:alpha}. 
\end{lemma}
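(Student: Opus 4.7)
The plan is to expand $\Delta^{t+1}_\ee = \ee^{t+1} - \ee'^{t+1}$ via line 7 of \algref{alg:main}, decompose it according to the subsampling randomness, and then bound the resulting R\'enyi divergence. Using \eqref{eq:clipping_factor}, the update reads $\ee^{t+1}=(1-\alpha^t_\ee)\ee^t + \frac{1}{B}\sum_{i\in\cB^t}(1-\alpha^t_i)\gg^t_i$, so subtracting the companion update on $\cD'$ produces two parts: a deterministic part depending only on $\ee^t,\ee'^t$ (the iterate $\xx^t$ is shared since we condition on $\cH^t$), and a minibatch-dependent gradient-contribution part.

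For the gradient part, I would use the canonical Poisson-subsampling coupling of $\cB^t$ and $\cB'^t$: on the shared samples $\cD\cap\cD'$ the two batches coincide, and the neighboring sample $\xi'$ appears in $\cB'^t$ independently with probability $p=B/N$. With probability $1-p$ the gradient contributions cancel exactly, and with probability $p$ they differ by $-\tfrac{1}{B}(1-\alpha^t_{\xi'})\gg^t_{\xi'}$. Using \asref{as:variance:B} and \asref{as:gradient:A}, the norm of this ``bad-event'' shift is at most $G'/B$ with $G' = \max\{0,G+\sigma-C_1\}$. This is what produces the two-component Gaussian-mixture structure $(1-p)\cN_\sigma(\cdot) + p\,\cN_\sigma(\cdot + \text{shift})$ on the right-hand side.

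For the deterministic part, I would invoke the non-expansiveness of the map $x \mapsto x - \clip{x,C_2}$ established in \leref{le:clipped_variance} to rewrite $(1-\alpha^t_\ee)\ee^t - (1-\alpha'^t_\ee)\ee'^t$ as $(1-\alpha^t_\ee)\Delta^t_\ee + r$, where $r=(\alpha^t_\ee-\alpha'^t_\ee)\ee'^t$ captures the mismatch between the two clipping factors on $\ee^t$ and $\ee'^t$. Combining $r$ with the gradient shift appearing in the $p$-branch, the total additive offset on top of $(1-\alpha^t_\ee)\Delta^t_\ee$ is bounded in norm by $2G'/B$. Translation and rotation invariance of the R\'enyi divergence of a Gaussian mixture relative to a common-covariance Gaussian reference (which depends on the mean vectors only through their relative geometry and the shared $\sigma_1^2 I$) then converts these norm bounds into the claimed mixture-form inequality.

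The hardest step I expect is absorbing the residual $r$ caused by the differing clipping factors $\alpha^t_\ee \neq \alpha'^t_\ee$ into the shift of size $2G'/B$: naively $r$ could scale with $\norm{\ee'^t}$, so the argument must exploit the piecewise-linear structure of the clipping map, splitting into the cases $\norm{\ee^t},\norm{\ee'^t}\lessgtr C_2$, to ensure $\norm{r}\leq G'/B$ using the sensitivity bound on $\ee^t$ together with the identity $\alpha^t_\ee-\alpha'^t_\ee = C_2(1/\norm{\ee^t}-1/\norm{\ee'^t})$ valid when both norms exceed $C_2$. This careful bookkeeping, rather than the R\'enyi-divergence manipulation itself, is the technical core of the lemma.
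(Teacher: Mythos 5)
Your overall route is the paper's: expand the update of $\ee^{t+1}$ conditioned on $\cH^t$ (so that $\xx^t$, and hence every per-sample gradient on $\cD\cap\cD'$, is shared between the two runs), realize the subsampling randomness as a $(1-p,p)$ mixture over whether $\xi'$ enters the batch, shift both Gaussians by the common mean, and bound the single differing contribution $\frac{1}{B}\left(\nabla f(\xx^t;\xi')-\clip{\nabla f(\xx^t;\xi'),C_1}\right)$ by its sensitivity (the paper uses $2G'/B$, matching the lemma's statement). The paper formalizes your ``coupling'' step through the quasi-convexity of R\'enyi divergence, $D_\alpha(\E_\theta[P(\theta)]\Vert\E_\theta[Q(\theta)])\leq\sup_\theta D_\alpha(P(\theta)\Vert Q(\theta))$ with $\theta=\cB^t$, which is the clean way to pass from the mixture over minibatches to a single worst-case shared minibatch.

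The genuine gap is in your handling of the residual $r=(\alpha^t_\ee-\alpha'^t_\ee)\ee'^t$. First, $r$ is part of the deterministic difference $(\ee^t-\clip{\ee^t,C_2})-(\ee'^t-\clip{\ee'^t,C_2})$ and therefore sits in \emph{both} mixture components, not only the $p$-branch; the lemma's $(1-p)$-component is centered exactly at $(1-\alpha^t_\ee)\Delta^t_\ee$ with no extra offset, so there is no room to absorb $r$ there, and folding it into the $p$-branch alone cannot yield the stated inequality. Second, the bound $\norm{r}\leq G'/B$ you hope to extract is not available: in the regime $\norm{\ee^t},\norm{\ee'^t}>C_2$ your own identity gives only $\norm{r}\leq\alpha^t_\ee\norm{\Delta^t_\ee}$, and $\norm{\Delta^t_\ee}$ is an accumulated quantity that can grow like $tG'/B$ over iterations, so it is not controlled by any single-step sensitivity. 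The paper avoids introducing $r$ at all: it writes the deterministic difference as $c(\ee^t)-c(\ee'^t)$ for the non-expansive map $c(x)=x-\clip{x,C_2}$ of \leref{le:clipped_variance} and denotes this quantity by $(1-\alpha^t_\ee)\Delta^t_\ee$ (the same shorthand as in \eqref{eq:clipping_factor}); downstream only a norm bound on $c(\ee^t)-c(\ee'^t)$ is ever used, so no case analysis on the two clipping factors is required. Reading the target expression that way, your argument reduces to the paper's and the case-splitting you flag as the technical core becomes unnecessary.
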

The proof is given in \appref{app:dice:lemmas:4}.
By applying the recursion of $\ee^t$ given in \leref{le:dp:delta_e} to $D_\alpha(\cN_\sigma(\Delta^t_\ee)\Vert \mu_0\vert\cH^t)$, we have:
\begin{align}
    D_\alpha&(\cN_\sigma(\Delta^{t+1}_\ee)\Vert \mu_0\vert\cH^{t+1}) \stackrel{(a)}{\leq} D_\alpha((1-p)\cN_\sigma((1-\alpha^t_{\ee})\Delta^{t}_\ee) + p\cN_\sigma((1-\alpha^t_{\ee})\Delta^{t}_\ee + \frac{2G'}{B})\Vert \mu_0\vert\cH^{t+1})\nonumber\\
    & \stackrel{(b)}{\leq} (1+\alpha^t_\ee)D_\alpha(\cN_\sigma((1-\alpha^t_\ee)\Delta^{t}_\ee)\Vert \mu_0\vert\cH^{t+1}) \nonumber\\
    & \quad + (1+\frac{1}{\alpha^t_\ee})D_\alpha((1-p)\cN_\sigma((1-\alpha_\ee^t)\Delta^{t}_\ee)+p\cN_\sigma((1-\alpha_\ee^t)\Delta^{t}_\ee+\frac{2G'}{B}) \Vert \cN_\sigma((1-\alpha^t_\ee)\Delta^{t}_\ee)\vert\cH^{t+1})\nonumber\\
    & \stackrel{(c)}{=} (1+\alpha^t_\ee)D_\alpha(\cN_\sigma((1-\alpha^t_\ee)\Delta^{t}_\ee)\Vert \mu_0\vert\cH^{t+1}) \nonumber\\
    &\quad + (1+\frac{1}{\alpha^t_\ee})D_\alpha((1-p)\mu_0+p\cN_\sigma(\frac{2G'}{B}) \Vert \mu_0)\nonumber\\
    & \stackrel{(d)}{\leq} (1-\alpha^t_\ee)D_\alpha(\cN_\sigma(\Delta^{t}_\ee)\Vert \mu_0\vert\cH^{t+1}) + (1+\frac{1}{\alpha^t_\ee})D_\alpha(p\cN_\sigma(\frac{2G'}{B}) + (1-p)\mu_0 \Vert \mu_0)\nonumber\\
    & \stackrel{(e)}{\leq} (1-\alpha_\ee^t)D_\alpha(\cN_\sigma(\Delta^{t}_\ee)\Vert \mu_0\vert\cH^{t+1}) + (1+\frac{1}{\alpha^t_\ee})\frac{8p^2\alpha G'^2}{\sigma_1^2B^2}\label{eq:dice:dp:3_1}
\end{align}
where $(a)$ applies \leref{le:dp:delta_e};  $(b)$ applies \leref{le:dp:square} and choose $\beta = \alpha^t_\ee$; $(c)$ shifts the mean of the Gaussian distributions by $(1-\alpha_\ee^t)\Delta^{t}_\ee$ in the second term; $(d)$ applies \coref{le:dp:factor} to move the factor $(1-\alpha^t_\ee)$ in the first term outside the R{\'e}nyi divergence, and notice \[(1-\alpha^t_\ee)^2(1+\alpha^t_\ee) = (1-(\alpha^t_\ee)^2)(1-\alpha^t_\ee) \leq (1-\alpha^t_\ee);\] $(e)$ applies \leref{le:dp:sgm} to the second term. Towards this end, we have already derived the change of R{\'e}nyi divergence for the one-step update of $\Delta_\ee^t$. Then, we further recursively expand $\Delta^t_\ee$ to $\Delta^0_\ee$ and notice $\Delta^0_\ee = 0$ and we have:
\begin{align}
    8&(\alpha^{t+1}_\ee)^2D_\alpha(\cN_\sigma(\Delta^{t+1}_\ee)\Vert \mu_0) \stackrel{(a)}{\leq} (\alpha^{t+1}_\ee)^2\frac{8p^2\alpha G'^2}{\sigma_1^2B^2}\sum^t_{\tau=0}(1+\frac{1}{\alpha^\tau_\ee})\prod^t_{\tau_1 = \tau+1}(1-\alpha_\ee^{\tau_1})\nonumber\\
    & \stackrel{(b)}{\leq}\frac{8p^2\alpha G'^2}{\sigma_1^2B^2}\sum^t_{\tau=0}(\alpha^{t+1}_\ee)^2(\frac{1+\alpha^\tau_\ee}{\alpha^\tau_\ee})\prod^t_{\tau_1 = \tau+1}\frac{\tau_1\max\{0,G'-C_2\}}{C_2+\tau_1\max\{0,G'-C_2\}}\nonumber\\
    & \stackrel{(c)}{\leq}\frac{8p^2\alpha }{\sigma_1^2}\sum^t_{\tau=0}(\frac{G'^2C_2(2C_2+\tau\max\{0,G'-C_2)\}}{B^2(C_2+\max\{0,(t-1)(G'-C_2)\})^2})\prod^t_{\tau_1 = \tau+1}\frac{\tau_1\max\{0,G'-C_2\}}{C_2+\tau_1\max\{0,G'-C_2\}}\nonumber\\
    &\stackrel{(d)}{\leq} \frac{8p^2\alpha}{\sigma_1^2}\sum^t_{\tau=0}(\frac{G'^2C_2(2C_2+\tau\max\{0,G'-C_2\})}{B^2(C_2+\max\{0,(t-1)(G'-C_2)\})^2})\left(\frac{t\max\{0,G'-C_2\}}{C_2+t\max\{0,G'-C_2\}}\right)^{t-\tau}\nonumber\\
    &\stackrel{(e)}{\leq} \frac{8p^2\alpha \min\{C_2^2,G'^2/B^2\}}{\sigma_1^2}\frac{2C_2+(t+1)\max\{0,G'-C_2\}}{C_2+\max\{0,(t-1)(G'-C_2)\}} \nonumber\\
    &= \frac{8p^2\alpha \min\{C_2^2B^2,G'^2\}}{\sigma_1^2B^2}\frac{2+(t+1)\frac{\max\{G'-C_2,0\}}{C_2}}{1+\frac{\max\{(t-1)(G'-C_2),0\}}{C_2}}
    \label{eq:dice:dp:3}(t-1)
\end{align}
where $(a)$ expand $\Delta^t_\ee$ to $\Delta^0_\ee$; $(b), (c)$ substitute the bound of $\alpha^t_\ee$ in \eqref{eq:dice:alpha}; $(d)$ relaxes $\frac{\tau_1\max\{0,G'-C_2\}}{C_2+\tau_1\max\{0,G'-C_2\}} \leq \frac{t\max\{0,G'-C_2\}}{C_2+t\max\{0,G'-C_2\}}, \forall~\tau_1\leq t$; $(e)$ sums over $\tau$ and bound \[\sum^t_{\tau=0}\tau \left(\frac{t\max\{0,G'-C_2\}}{C_2+t\max\{0,G'-C_2\}}\right)^{t-\tau} \leq \frac{1+t}{1-\left(\frac{t\max\{0,G'-C_2\}}{C_2+t\max\{0,G'-C_2\}}\right)}\] and \[\sum^t_{\tau=0}\left(\frac{t\max\{0,G'-C_2\}}{C_2+t\max\{0,G'-C_2\}}\right)^{t-\tau} \leq \frac{1}{1-\left(\frac{t\max\{0,G'-C_2\}}{C_2+t\max\{0,G'-C_2\}}\right)}.\]

% Next we bound $\frac{\alpha^{t+1}_\ee}{\alpha^\tau_\ee}$ by
% \begin{align*}
%     \frac{\alpha^{t+1}_\ee}{\alpha^\tau_\ee} &= \frac{\min\{1, \frac{C_2}{\norm{\ee^{t+1}}}\}}{\alpha^{\tau}_\ee}\\
%     &= \min\{\frac{1}{\alpha^\tau_\ee}, \frac{C_2}{\norm{\ee^{t+1}\alpha^\tau_\ee}\}\\
%     & \stackrel{(a)}
% \end{align*}

%$(b)$ substitute the lower bound of $\alpha^t_\ee$ in \eqref{eq:dice:alpha}; $(c)$ bound $1-\frac{(G'-C_2)}{C_2+\tau_1(G'-C_2)}\leq 1-\frac{(G'-C_2)}{C_2+t(G'-C_2)}, \forall~\tau_1\leq t$; $(d)$ rearrange the summation from $\tau$ to $t-\tau$; $(e)$ relaxes $\frac{2C_2+(t-\tau)(G'-C_2)}{C_2+ (t-\tau)(G'-C_2)}\leq 2$; $(f)$ applies $(1-x)^t \leq (1-tx)$

%(e)$ relaxes the terms $\frac{2C_2+ \tau(G'-C_2)}{C_2+ \tau(G'-C_2)} \leq \frac{2C_2+ t(G'-C_2)}{C_2+ t(G'-C_2)}$ for all $\tau\leq t$; $(f)$ bounds the summation as $\sum^t_{\tau=0}\frac{1}{4^\tau} \leq \frac{1}{1-\frac{1}{4}} = \frac{4}{3}$ and divide the denominator and numerator by $C_2$ simultaneously.

{\bf Bounding the second term in \eqref{eq:dice:dp:2}:} Next, let us bound the second term in \eqref{eq:dice:dp:2} by directly applying \leref{le:dp:sgm}. More specifically, 
\[D_\alpha(\cN_\sigma(\Delta^t_\gg)\Vert \mu_0\vert\cH^t) = D_\alpha(\cN_\sigma(\frac{1}{B}\sum_{i\in\cB^t}\clip{\gg^t_i,C_1})\Vert \cN_\sigma(\frac{1}{B}\sum_{i\in\cB'}\clip{\gg^t_i,C_1})\vert\cH^t)\] 
which is the difference between the sub-sampled Gaussian mechanism denoted as $\cM(\cD) = \frac{1}{B}\sum_{i\in\cB^{t}}\clip{\gg^t_i,C_1} + \ww^t$ and $\cM(\cD') = \frac{1}{B}\sum_{i\in\cB'}\clip{\gg^t_i,C_1} + \ww^t$. The sensitivity of the subsampled Gaussian mechanism is $\frac{2C_1}{B}$. By choosing $\sigma> \frac{8C_1}{B}$, and $p\leq \frac{1}{5}$, we directly apply \leref{le:dp:sgm} and obtain
\begin{align}
    D_\alpha(\cM(\cD)\Vert \cM(\cD')) &= D_\alpha(\cN_\sigma(\Delta^t_\gg)\Vert \mu_0) \nonumber\\
    &\stackrel{(a)}{\leq} D_\alpha((1-p)\mu_0 + p\cN_\sigma(\frac{2C_1}{B})\Vert \mu_0)\stackrel{(b)}{\leq} \frac{8p^2C_1^2\alpha}{B^2\sigma_1^2},\label{eq:dice:dp:4}
\end{align}
where $(a)$ and $(b)$ directly applies \eqref{eq:dp:sgm} in \leref{le:dp:sgm}.

By substituting the bound of the two terms \eqref{eq:dice:dp:3}, \eqref{eq:dice:dp:4} to \eqref{eq:dice:dp:2}, we obtain:
\begin{align*}
    &D_\alpha(\cA_1(\xx^t, \cD)\Vert\cA_1(\xx^t, \cD')\vert\cH^t) = D_\alpha(\cN_\sigma(\vv^t-\vv'^t)\Vert\mu_0\vert\cH^t) \\
    & \leq 2D_\alpha(\cN_\sigma(\Delta^t_\ee)\Vert \mu_0\vert\cH^t)+ 2D_\alpha(\cN_\sigma(\Delta^t_\gg)\Vert \mu_0\vert\cH^t)\\
    & \leq \frac{(\frac{\max\{G'-C_2,0\}}{C_2}(t+1)+2)}{(\frac{\max\{(G'-C_2)(t-1),0\}}{C_2}+1)}\frac{16p^2\alpha \min\{C_2^2B^2,G'^2\}}{\sigma_1^2B^2} + \frac{16p^2C_1^2\alpha}{B^2\sigma_1^2}.
\end{align*}

Therefore, by the definition of RDP (\deref{def:rdp}), $\cA_1^t$ guarantees $(\alpha, \epsilon^t)$-RDP where 
\[\epsilon^t = \frac{16\alpha}{\sigma_1^2N^2}\cdot\left(C_1^2 + \frac{(\frac{\max\{G'-C_2,0\}}{C_2}(t+1)+2)}{(\frac{\max\{(G'-C_2)(t-1),0\}}{C_2}+1)}\min\{C_2^2,G'^2\}\right).\]

Substitute the above bound to Step II, we have $\cH^t$ satisfies $(\alpha, \sum^{t}_{\tau=0}\epsilon^\tau)$-RDP, where 
\begin{align*}
    \sum^{t}_{\tau=0}\epsilon^\tau & = \sum^t_{\tau=0}\frac{16\alpha}{\sigma_1^2N^2}\cdot\left(C_1^2 + \frac{(\frac{\max\{G'-C_2,0\}}{C_2}(\tau+1)+2)}{(\frac{\max\{(G'-C_2)(\tau-1),0\}}{C_2}+1)}\min\{C_2^2B^2,G'^2\}\right)\\
    & = \frac{16\alpha tC_1^2}{\sigma_1^2N^2} + \frac{16\alpha \min\{C_2^2B^2,G'^2\}}{\sigma_1^2N^2}\sum^t_{\tau=0}\left(1+\frac{\frac{2\max\{G'-C_2,0\}}{C_2}+1}{(\frac{\max\{(G'-C_2)(\tau-1),0\}}{C_2}+1)}\right)\\
    & \stackrel{(a)}{\leq}  \frac{16\alpha tC_1^2}{\sigma_1^2N^2} + \frac{32t\alpha \min\{C^2,G'^2\}}{\sigma_1^2N^2},\\
\end{align*}
where $(a)$ bounds $\sum^t_{\tau=0}\left(1+\frac{\frac{2\max\{G'-C_2,0\}}{C_2}+1}{(\frac{\max\{(G'-C_2)(\tau-1),0\}}{C_2}+1)}\right)$ by $2t$ and bounds $C_2$ by $C_2\leq C/B$.
% \frac{16\alpha tC_1^2}{\sigma^2N^2} + \frac{16\alpha t\log(t+1)(G'-C_2)G'^2}{C_2\sigma^2N^2})
% -RDP. {\red[how is the result obtained? can we write down a few steps for arriving at this rate? e.g., how is log(t+1) obtained?]}

Therefore, by choosing $\sigma_1^2 \geq \frac{32T(C_1^2+2\min\{C^2,G'^2\})\log(1/\delta)}{N^2\epsilon^2}$, \algname~guarantees $(\epsilon,\delta)$-DP for $T$ iterations. The proof of the theorem is completed. \hfill $\blacksquare$

\newpage 

\subsection{Additional lemmas}\label{app:dice:lemmas}

%\mhcomment{this should be a standard lemma that we can cite?}
\begin{lemma}[Proposition B.4.10. \citet{gil2011renyi}]\label{le:dp:gaussian}
    The R{\'e}nyi divergence between two Gaussian distributions with the same variance $\cN(a,\sigma^2), \cN(b,\sigma^2)$ is
    \[D_\alpha(\cN(a,\sigma^2)\Vert \cN(b,\sigma^2)) = \frac{\alpha(a-b)^2}{2\sigma^2}.\]
\end{lemma}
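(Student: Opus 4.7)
The plan is a direct computation from the definition of R{\'e}nyi divergence, $D_\alpha(P\Vert Q)=\frac{1}{\alpha-1}\log\int p(x)^\alpha q(x)^{1-\alpha}\,\mathrm{d}x$, applied to the two Gaussian densities $p(x)=(2\pi\sigma^2)^{-1/2}\exp(-(x-a)^2/(2\sigma^2))$ and $q(x)=(2\pi\sigma^2)^{-1/2}\exp(-(x-b)^2/(2\sigma^2))$. Since the normalizing prefactors combine to $(2\pi\sigma^2)^{-1/2}$ (their exponents $\alpha$ and $1-\alpha$ sum to $1$), the whole expression reduces to a Gaussian-type integral whose log I can evaluate in closed form.

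The central algebraic step is to reorganize the quadratic in the exponent. I would write
\[
\alpha(x-a)^2+(1-\alpha)(x-b)^2 = x^2 - 2x\bigl(\alpha a+(1-\alpha)b\bigr)+\alpha a^2+(1-\alpha)b^2,
\]
and then complete the square around $\mu:=\alpha a+(1-\alpha)b$. The residual constant is
\[
\alpha a^2+(1-\alpha)b^2-\mu^2=\alpha(1-\alpha)(a-b)^2,
\]
a standard Jensen-style identity which is the only nontrivial piece of bookkeeping. Substituting back,
\[
p(x)^\alpha q(x)^{1-\alpha}=(2\pi\sigma^2)^{-1/2}\exp\!\Bigl(-\tfrac{(x-\mu)^2}{2\sigma^2}\Bigr)\exp\!\Bigl(-\tfrac{\alpha(1-\alpha)(a-b)^2}{2\sigma^2}\Bigr).
\]

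Integrating over $x$ kills the shifted Gaussian (it integrates to $\sqrt{2\pi\sigma^2}$, canceling the prefactor), leaving
\[
\int p(x)^\alpha q(x)^{1-\alpha}\,\mathrm{d}x=\exp\!\Bigl(-\tfrac{\alpha(1-\alpha)(a-b)^2}{2\sigma^2}\Bigr).
\]
Taking the log and dividing by $\alpha-1$ yields $D_\alpha(\cN(a,\sigma^2)\Vert\cN(b,\sigma^2))=\frac{\alpha(a-b)^2}{2\sigma^2}$, as claimed. There is no real obstacle here: the only thing to check carefully is the sign pattern after dividing by $\alpha-1$ (the factor $-(1-\alpha)=(\alpha-1)$ turns the negative exponent into a positive divergence) and the identity for the residual constant. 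The same derivation goes through verbatim for isotropic multivariate Gaussians with $(a-b)^2$ replaced by $\norm{a-b}^2$, which is the form actually used in the preceding proofs.
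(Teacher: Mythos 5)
Your proof is correct and follows essentially the same route as the paper's: both compute the integral $\int p^\alpha q^{1-\alpha}$ directly by combining the exponents, completing the square around $\alpha a+(1-\alpha)b$ to extract the residual constant $\alpha(1-\alpha)(a-b)^2$, and observing that the remaining shifted Gaussian integrates to one. The sign bookkeeping and the closing remark about the isotropic multivariate extension are both accurate.
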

\begin{proof}
    By definition of R{\'e}nyi divergence and Gaussian distribution, we have:
\begin{align*}
    &D_\alpha(\cN(a,\sigma^2)\Vert \cN(b,\sigma^2)) = \frac{1}{\alpha-1}\log\left(\frac{1}{\sqrt{2\pi\sigma^2}}\int_{x}\left(\exp(-(x-a)^2/2\sigma^2)\right)^\alpha\left(\exp(-(x-b)^2/2\sigma^2)\right)^{1-\alpha}\md x\right)\\
    & = \frac{1}{\alpha-1}\log\left(\frac{1}{\sqrt{2\pi\sigma^2}}\int_{x}\exp\left(-\frac{\alpha(x-a)^2+(1-\alpha)(x-b)^2}{2\sigma^2}\right)\md x\right)\\
    & = \frac{1}{\alpha-1}\log\left(\frac{1}{\sqrt{2\pi\sigma^2}}\int_{x}\exp\left(-\frac{(x-(\alpha a+(1-\alpha)b))^2 + \alpha(1-\alpha)(a-b)^2}{2\sigma^2}\right)\md x\right)\\
    & = \frac{1}{\alpha-1}\log\left(\exp\left(-\frac{\alpha(1-\alpha)(a-b)^2}{2\sigma^2}\right)\frac{1}{\sqrt{2\pi\sigma^2}}\int_{x}\exp\left(-\frac{(x-(\alpha a+(1-\alpha)b))^2}{2\sigma^2}\right)\md x\right)\\
    & = \frac{1}{\alpha-1}\left(-\frac{\alpha(1-\alpha)(a-b)^2}{2\sigma^2}+\log\left(\frac{1}{\sqrt{2\pi\sigma^2}}\int_{x}\exp\left(-\frac{(x-(\alpha a+(1-\alpha)b))^2}{2\sigma^2}\right)\md x\right)\right)\\
    & \stackrel{(a)}{=} \frac{1}{\alpha-1}\left(-\frac{\alpha(1-\alpha)(a-b)^2}{2\sigma^2}+\log(1)\right)\\
    & = \frac{\alpha(a-b)^2}{2\sigma^2},
\end{align*}
where in $(a)$, we notice the second term is the PDF of $\cN(\alpha a+(1-\alpha)b,\sigma^2),$ so its integral is $1$. This completes the proof for the lemma.
\end{proof}

%\mhcomment{this should be a standard lemma that we can cite?}
\begin{lemma}\label{le:dp:square}
    Given three Gaussian distributions $\cN(a,\sigma^2), \cN(b,\sigma^2), \cN(c, \sigma^2)$, and constant $\beta >0$, we have that \[D_\alpha(\cN(a,\sigma^2)\Vert \cN(b,\sigma^2)) \leq (1+\beta)D_\alpha(\cN(a,\sigma^2)\Vert \cN(c,\sigma^2)) + (1+\frac{1}{\beta})D_\alpha(\cN(c,\sigma^2)\Vert \cN(b,\sigma^2)).\]
\end{lemma}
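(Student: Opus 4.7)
The plan is to reduce the Rényi-divergence inequality to the standard Young-type inequality \eqref{eq:square} already recorded in the paper, via the closed-form formula for the Rényi divergence of two Gaussians with common covariance given by \leref{le:dp:gaussian}.

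First, I would invoke \leref{le:dp:gaussian} (noting the multivariate extension is immediate, since $\cN(\cdot,\sigma^2 I)$ factorizes coordinatewise and the Rényi divergence is additive over independent products) to rewrite each of the three divergences in closed form:
\[
D_\alpha(\cN(a,\sigma^2)\Vert\cN(b,\sigma^2)) = \frac{\alpha\norm{a-b}^2}{2\sigma^2},
\]
and similarly for the pairs $(a,c)$ and $(c,b)$. After multiplying the claimed inequality through by $2\sigma^2/\alpha$, the statement reduces to the purely algebraic claim
\[
\norm{a-b}^2 \leq (1+\beta)\norm{a-c}^2 + \Bigl(1+\tfrac{1}{\beta}\Bigr)\norm{c-b}^2.
\]

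Second, I would decompose $a-b = (a-c) + (c-b)$ and apply \eqref{eq:square} with the two summands in the roles of the generic $a,b$ in that inequality. This yields the displayed bound immediately, and putting the factor $\alpha/(2\sigma^2)$ back in recovers the lemma. There is no real obstacle here — the lemma is essentially the Peter--Paul inequality transported through the explicit Gaussian Rényi formula; the only thing worth flagging is to make sure the reader understands that the $(\cdot)^2$ in the Gaussian formula should be read as $\norm{\cdot}^2$ in the multivariate case, which is how the lemma is used in the main proof (e.g., in the step following \eqref{eq:dice:dp:2} with vector-valued means).
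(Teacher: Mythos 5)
Your proposal is correct and follows exactly the paper's own argument: apply \leref{le:dp:gaussian} to write each divergence in closed form, then use the decomposition $a-b=(a-c)+(c-b)$ together with \eqref{eq:square} and scale back by $\alpha/(2\sigma^2)$. Your remark that the squared difference should be read as $\norm{\cdot}^2$ in the multivariate case is a fair (and correct) clarification of a point the paper leaves implicit.
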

\begin{proof}
Directly apply \leref{le:dp:gaussian}, we have
\begin{align*}
    D_\alpha(\cN(a,\sigma^2)\Vert \cN(b,\sigma^2)) &= \frac{\alpha(a-b)^2}{2\sigma^2}\\
    & \stackrel{\eqref{eq:square}}{\leq} \frac{\alpha\left((1+\beta)(a-c)^2+(1+\frac{1}{\beta})(c-b)^2\right)}{2\sigma^2}\\
    & = (1+\beta)D_\alpha(\cN(a,\sigma^2)\Vert \cN(c,\sigma^2)) + (1+\frac{1}{\beta})D_\alpha(\cN(c,\sigma^2)\Vert \cN(b,\sigma^2)).
\end{align*}
The proof is completed.
\end{proof}

\begin{corollary}\label{le:dp:factor}
    For the R{\'e}nyi divergence between $\cN(a, \sigma^2)$ and $\cN(0,\sigma^2)$, we have
    \begin{align*}
        D_\alpha(\cN(a,\sigma^2)\Vert \cN(0,\sigma^2)) &= a^2 D_\alpha(\cN(1,\sigma^2)\Vert \cN(0,\sigma^2)).
    \end{align*}
\end{corollary}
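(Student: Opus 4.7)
The plan is to reduce the identity to a one-line consequence of the closed-form Rényi divergence expression from \leref{le:dp:gaussian}. First, I would apply \leref{le:dp:gaussian} with $b=0$ to the left-hand side, obtaining
\[
D_\alpha(\cN(a,\sigma^2)\Vert \cN(0,\sigma^2)) = \frac{\alpha a^2}{2\sigma^2}.
\]
Then I would apply the same lemma with $a$ replaced by $1$ and $b=0$ to evaluate the scalar factor on the right-hand side:
\[
D_\alpha(\cN(1,\sigma^2)\Vert \cN(0,\sigma^2)) = \frac{\alpha}{2\sigma^2}.
\]
Multiplying the second display by $a^2$ gives $\frac{\alpha a^2}{2\sigma^2}$, which matches the first display. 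Hence the two sides are equal, establishing the claim.

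Since this proof is a direct substitution, there is essentially no technical obstacle to address; the content of the corollary is simply the observation that the Gaussian Rényi divergence is quadratic in the mean shift, so a scalar prefactor on the mean pulls out as its square. The one subtlety worth flagging is that the corollary is invoked in the main DP analysis with $a$ being a vector (e.g. $(1-\alpha_\ee^t)\Delta_\ee^t \in \R^d$) rather than a scalar. The multivariate analogue is immediate: for isotropic noise, $D_\alpha(\cN(a,\sigma^2 I)\Vert \cN(0,\sigma^2 I)) = \frac{\alpha\|a\|^2}{2\sigma^2}$ by the independence of the coordinates, and the same two-line substitution yields $D_\alpha(\cN(ca,\sigma^2 I)\Vert \cN(0,\sigma^2 I)) = c^2\, D_\alpha(\cN(a,\sigma^2 I)\Vert \cN(0,\sigma^2 I))$ for any scalar $c$, which is the form actually used in equation \eqref{eq:dice:dp:3_1}.
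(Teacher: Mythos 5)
Your proof is correct and matches the paper's own argument, which likewise just invokes \leref{le:dp:gaussian} with $b=0$ on both sides and compares the resulting closed forms. Your added remark about the isotropic multivariate extension is a sensible clarification but not a departure from the paper's approach.
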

\begin{proof}
    Directly applies \leref{le:dp:gaussian} for the special case $b=0$, the corollary is proved.
\end{proof}
\begin{lemma}[Theorem 11 \citet{mironov2019r}]\label{le:dp:sgm}
    If $p\leq \frac{1}{5},\sigma>4C$ and $\alpha$ satisfies 
    \[1\leq \alpha\leq \frac{1}{2}\sigma^2C_3-2\ln\sigma, \; \mathrm{and}\; \alpha\leq \frac{\frac{1}{2}\sigma^2C_3^2-\ln 5-2\ln \sigma}{C_3+\ln(p\alpha)+ 1/(2\sigma^2)},\] 
    where $C_3 = 1+\frac{1}{p(\alpha-1)},$ 
    then the sub-sampled Gaussian mechanism $\cM$ applied to a function of $\ell_2$-sensitivity $C$ with the sub-sampling rate $p$ satisfies 
    \begin{equation}\label{eq:dp:sgm}
        D_\alpha(\cM(\cD)\Vert \cM(\cD')) \leq D_\alpha((1-p)\cN_\sigma(0,C^2\sigma^2 \cdot I) + p\cN_\sigma(C,C^2\sigma^2 \cdot I)\Vert \cN_\sigma(0,C^2\sigma^2 \cdot I)) \leq \frac{2p^2C^2\alpha}{\sigma^2}.
    \end{equation}
    Therefore, it satisfies $(\alpha,\epsilon)$-RDP where $\epsilon = \frac{2p^2C^2\alpha}{\sigma^2}$.
\end{lemma}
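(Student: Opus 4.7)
The plan is to prove this RDP bound for the Sampled Gaussian Mechanism in the spirit of Mironov et al. (2019). First I would reduce the high-dimensional problem to a one-dimensional computation. By post-processing (data-processing for Rényi divergence) together with the rotational invariance of the isotropic noise $\cN(0, C^2\sigma^2\cdot \mI)$, only the component of the output along the direction of the differing gradient is informative. Because the function has $\ell_2$-sensitivity $C$, the worst-case neighboring datasets are those in which the differing sample contributes a gradient of norm exactly $C$, and under Poisson sub-sampling with rate $p$ this sample appears in the batch with probability $p$. Thus, modulo a rotation, one may take $\cM(\cD)$ to be distributed as the scalar mixture $(1-p)\cN(0, C^2\sigma^2) + p\cN(C, C^2\sigma^2)$ while $\cM(\cD')$ is simply $\cN(0, C^2\sigma^2)$, which yields the first inequality in \eqref{eq:dp:sgm}.

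Next, I would bound $D_\alpha\bigl((1-p)\mu_0 + p\mu_1 \,\Vert\, \mu_0\bigr)$, where $\mu_b := \cN(bC, C^2\sigma^2)$. Writing
\begin{equation*}
    \exp\bigl((\alpha-1) D_\alpha\bigr) \;=\; \E_{x\sim \mu_0}\!\left[\left((1-p) + p\,\frac{\mu_1(x)}{\mu_0(x)}\right)^{\!\alpha}\right],
\end{equation*}
the idea is to expand this $\alpha$-th power. For integer $\alpha$ one uses the binomial theorem directly, and for real $\alpha$ one appeals to the analogous power-series expansion. Each resulting moment $\E_{x\sim \mu_0}[(\mu_1(x)/\mu_0(x))^k]$ is the moment generating function of a shifted Gaussian and can be evaluated in closed form by the identity $\E_{x\sim\mu_0}[(\mu_1/\mu_0)^k] = \exp(k(k-1)/(2\sigma^2))$. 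Substituting these back and retaining only the leading $k=2$ term gives the target bound of order $p^2 \alpha / \sigma^2$.

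The delicate part, and the reason for the technical hypotheses $p\leq 1/5$, $\sigma > 4C$, and the two upper bounds on $\alpha$, is controlling the tail of the expansion: the higher moments grow super-exponentially in $k$ and must be shown to be dominated by the leading quadratic term. The two inequalities on $\alpha$ in the statement arise precisely from requiring that $\sigma^2$ is large enough (relative to $p\alpha$) that the geometric-type tail beyond $k=2$ sums to something smaller than the main term, after which one can absorb all constants into the stated coefficient $2$. Because these tail estimates are intricate and the lemma is quoted verbatim from Mironov et al. (2019, Theorem 11), I would not reprove the bookkeeping from scratch; instead I would cite their theorem and simply verify that our parameter regime $(p,\sigma,\alpha,C)$ satisfies the stated preconditions whenever the lemma is invoked inside the proof of \thref{thm:privacy:app}.
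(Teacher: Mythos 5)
The paper does not prove this lemma at all: it is imported verbatim as Theorem~11 of \citet{mironov2019r}, and your decision to cite rather than reprove it matches the paper's treatment exactly. Your sketch of the underlying argument (reduction to a one-dimensional two-Gaussian mixture via rotational invariance and worst-case sensitivity, the binomial/moment expansion of $\E_{\mu_0}[((1-p)+p\,\mu_1/\mu_0)^\alpha]$ with the vanishing first-order term and leading $k=2$ contribution, and the role of the hypotheses on $p$, $\sigma$, $\alpha$ in taming the higher moments) is an accurate account of how that source establishes the bound, so there is nothing to correct.
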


\subsubsection{Proof for \leref{le:dp:delta_e}}\label{app:dice:lemmas:4}
\begin{proof}
First note that using the update rule of $\ee^t$ in \algref{alg:main}, we have \[\ee^{t+1} = \ee^t +\frac{1}{B}\sum_{i\in\cB^{t}}\nabla f(\xx^t;\xi_i) - \vv^t,\quad \ee'^{t+1} = \ee'^t +\frac{1}{B}\sum_{i\in\cB'^{t}}\nabla f(\xx^t;\xi_i) - \vv'^t.\] 
Recall $\cD$ and $\cD'$ differs by a single sample, and let $\xi'$ denote this particular sample in $\cD'$. That is, $\cD' =\cD \cup \{\xi'\}$. Recall $\cB$ is a random subset of $\cD$ where each element is independently selected with probability $p$. Similarly, $\cB'$ is a random subset of $\cD'$, and with probability $p$, $\cB'$ samples $\xi'$; and with probability $1-p$, $\cB'$ does not sample $\xi'$. Then taking expectation with respect to $\cB, \cB'$, the mean of $\ee^{t+1}, \ee'^{t+1}$ follows %\mhcomment{still did not quite follow the following derivations; let's discuss.}
\begin{equation}
    \begin{aligned}
        &\E_{\cB^{t}}[\ee^{t+1}] = \E_{\cB^{t}}\left[\ee^t +\frac{1}{B}\sum_{i\in\cB^{t}}\nabla f(\xx^t;\xi_i) - \vv^t\right]\\
        % &= (1-\alpha^t_\ee)\ee^t +\sum_{\cB}p_{\cB}\left(\frac{1}{B}\sum_{i\in\cB}\nabla f(\xx^t;\xi_i) - \vv^t\right)\\
        &\E_{\cB'^t}[\ee'^{t+1}] = \E_{\cB'^t}\left[\ee'^t +\frac{1}{B}\sum_{i\in\cB'^t}\nabla f(\xx^t;\xi_i) - \vv'^t\right]\\
        &= \ee'^t +\E_{\cB'^t}\left[\E_{\xi'}\left[\frac{1}{B}\sum_{i\in\cB'^t}\nabla f(\xx^t;\xi_i) - \vv'^t\right]\right]\\
        % &= (1-\alpha'^t_\ee)\ee'^t +\sum_{\cB}p_{\cB}\left((1-p)(\frac{1}{B}\sum_{i\in\cB}\nabla f(\xx^t;\xi_i) - \vv^t) + p(\frac{1}{B}\sum_{i\in\cB\cup\{\xi'\}}\nabla f(\xx^t;\xi_i) - \vv'^t)\right)\\
        & = \E_{\cB^{t}}\left[(1-p)(\ee'^t +\frac{1}{B}\sum_{i\in\cB^{t}}\nabla f(\xx^t;\xi_i) - \vv'^t) + p(\ee'^t +\frac{1}{B}\sum_{i\in\cB^{t}\cup\{\xi'\}}\nabla f(\xx^t;\xi_i) - \vv'^t)\right]\\
        % & = \E_{\cB}\left[(1-p)((1-\alpha'^t_\ee)\ee'^t +\frac{1}{B}\sum_{i\in\cB}\nabla f(\xx^t;\xi_i) - \vv^t) + p((1-\alpha'^t_\ee)\ee'^t +\frac{1}{B}\sum_{i\in\cB\cup\{\xi'\}}\nabla f(\xx^t;\xi_i) - \vv'^t)\right]
    \end{aligned}
\end{equation}
Recall that $\Delta^{t+1}_\ee = \ee^{t+1} - \ee'^{t+1}$. Then by the quasi-convexity of R{\'e}nyi divergence, we have
\begingroup
\allowdisplaybreaks
\begin{align*}
    &D_\alpha(\cN_\sigma(\Delta^{t+1}_\ee)\Vert \mu_0) =  D_\alpha(\cN_\sigma(\ee'^{t+1})\Vert \cN_\sigma(\ee^{t+1}))\\
    & \stackrel{(a)}{\leq} 2D_\alpha\bigg(\E_{\cB^{t}}\bigg[p\cN_\sigma(\ee'^t +\frac{1}{B}\sum_{i\in\cB^{t}\cup \{\xi'\}}\nabla f(\xx^t;\xi_i) - \vv'^t) + (1-p)\cN_\sigma(\ee'^t +\frac{1}{B}\sum_{i\in\cB^{t}}\nabla f(\xx^t;\xi_i) - \vv'^t)\bigg]\\
    &\qquad \qquad \bigg\Vert \E_{\cB^{t}}\left[\cN_\sigma(\ee^t +\frac{1}{B}\sum_{i\in\cB^{t}}\nabla f(\xx^t;\xi_i) - \vv^t, \sigma^2\cdot I)\right]\bigg)\\
    & \stackrel{(b)}{\leq} \sup_{\cB^{t}}D_\alpha\bigg( p\cN_\sigma(\ee'^t +\frac{1}{B}\sum_{i\in\cB^{t}\cup \{\xi'\}}\nabla f(\xx^t;\xi_i) - \vv'^t)+ (1-p)\cN_\sigma(\ee'^t +\frac{1}{B}\sum_{i\in\cB^{t}}\nabla f(\xx^t;\xi_i) - \vv^t)\\
    &\qquad \qquad \bigg\Vert \cN_\sigma(\ee^t +\frac{1}{B}\sum_{i\in\cB^{t}}\nabla f(\xx^t;\xi_i) - \vv^t, \sigma^2\cdot I)\bigg)\\
    & \stackrel{(c)}{=}\sup_{\cB^{t}}D_\alpha\bigg(p\cN_\sigma((1-\alpha^t_\ee)\Delta^t_\ee +\frac{1}{B}\sum_{i\in\cB^{t}\cup \{\xi'\}}(\nabla f(\xx^t;\xi_i) - \clip{\nabla f(\xx^t;\xi_i),C_1}) \\
    & \hspace{6cm}- \frac{1}{B}\sum_{i\in\cB^{t}}(\nabla f(\xx^t;\xi_i) - \clip{\nabla f(\xx^t;\xi_i),C_1}))\\
    & \qquad \qquad +(1-p)\cN_\sigma((1-\alpha^t_\ee)\Delta^t_\ee) \bigg\Vert \mu_0\bigg)\\
    & \stackrel{(d)}\leq D_\alpha\bigg(p\cN_\sigma((1-\alpha^t_\ee)\Delta^t_\ee +\frac{1}{B}\left(\nabla f(\xx^t;\xi') - \clip{\nabla f(\xx^t;\xi'),C_1}\right))\\
    &\qquad \qquad  +(1-p)\cN_\sigma((1-\alpha^t_\ee)\Delta^t_\ee) \bigg\Vert \mu_0\bigg)\\
    & \stackrel{(e)}{\leq} D_\alpha\bigg((1-p)\cN_\sigma((1-\alpha^t_\ee)\Delta^t_\ee)+ p\cN_\sigma((1-\alpha^t_\ee)\Delta^t_\ee +\frac{2G'}{B})\bigg\Vert \mu_0\bigg)
\end{align*}
\endgroup
where $(a), (b)$ uses the quasi-convexity of R{\'e}nyi divergence that \[D_\alpha(\E_\theta[P(\theta)]\Vert \E_\theta[Q(\theta)])\leq \max_{\theta}\{D_\alpha(P(\theta)\Vert Q(\theta))\}\] with $\theta = \cB$;$(c)$ shifts the mean of Gaussian distributions by $(1-\alpha^t_\ee)\ee^t +\frac{1}{B}\sum_{i\in\cB^{t}}\nabla f(\xx^t;\xi_i) - \vv^t$; $(d)$ cancels the identical terms in $\cB$; $(e)$ bounds $\frac{1}{B}\left(\nabla f(\xx^t;\xi') - \clip{\nabla f(\xx^t;\xi'),C_1}\right)$ by its sensitivity $\frac{2G'}{B}.$
\end{proof}

\subsection{Proof of \leref{le:clipped_variance}}\label{app:proof:le:cv}
For the first part of the lemma, by definition, we have:
\begin{equation}
    \begin{aligned}
        \mathrm{Var}(c(X)) & = \E\norm{c(X) - \E[c(X)]}^2\\
        & \stackrel{(a)}{\leq} \E\norm{c(X) - c(\E[X])}^2\\
        & \stackrel{(b)}{\leq} \E\norm{X - \E[X]}^2 = \mathrm{Var}(X),
    \end{aligned}
\end{equation}
where $(a)$ uses the fact that $\E(X-\E[X])^2\leq \E(X-Y)^2, \forall~Y$; $(b)$ applies the fact that $c(\cdot)$ is a non-expansive mapping, so that
$\norm{c(a) -c(b)}^2 \leq \norm{a-b}^2.$

For the second part of the lemma, we notice that clipping operation is a projection operation to a convex set, so it is non-expansive~\citep{takahashi1970convexity},
\begin{equation*}
    \clip{x,C} = \argmin_{\norm{z}\leq C}\frac{1}{2}\norm{x-z}^2,
\end{equation*}
where set $\{x\vert \norm{x}\leq C\}$ is convex. Therefore, $c(x) = x - \clip{x,C}$ is also non-expansive~\citep{takahashi1970convexity} that
\[\norm{c(x)-c(y)}\leq \norm{x-y}, \forall~x,y.\]

\section{Proof of results in \secref{sec:prelim}}\label{app:proof:prelim}

\subsection{Example of constant clipping bias}
For any fixed positive clipping threshold $C$, let us consider the following function 
\[f(x,\xi)=\begin{cases}
    -C'(x-\xi+\frac{C'}{2}), &x\leq \xi-C'\\
    \frac{1}{2}(x-\xi)^2, &\abs{x-\xi}\leq C'\\
    C'(x-\xi-\frac{C'}{2}), &x\geq \xi+C'
\end{cases},
\]
with $C' = \lceil C\rceil+1$. The per-sample gradient of this problem is 
\[\nabla f(x,\xi)=\begin{cases}
    -C', &x\leq \xi-C'\\
    x-\xi, &\abs{x-\xi}\leq C'\\
    C', &x\geq \xi+C'
\end{cases}.
\]
By setting the dataset size as $N = C'+1$, and the samples are $\xi_i=-1, \forall~i=1,\dots, C'$, and $\xi_{C'+1} = C'$, we can verify that $f(x)$ satisfies Assumptions~\ref{as:lowerbound}-\ref{as:gradient:A} with certain constants. 

Next, we analyze the stationary solution to this problem with and without clipping operation, i.e., the expected stationary solution of SGD and Clipped SGD. The stationary solution of SGD is $\nabla f(x^\star) = 0$, so $\sum^N_{i=1}(x^\star-\xi_i) = 0$, $x^\star = 0$. On the other hand, by running clipped SGD with clipping threshold $C$, the stationary solution is $\tilde{x}^\star = \frac{C'}{C'+C}\geq \frac{1}{2}$. This indicates that with a small enough clipping threshold $C$, clipped SGD converges to a neighborhood of the stationary solution of the problem, with an $\cO(1)$ clipping bias.

\section{Additional experiments}\label{app:experiment}
\subsection{Ablation study}\label{app:experiment:abl}
In this section, we provide the ablation study on the choices of the clipping threshold $C_1, C_2$  and the learning rate $\eta$. The experiments are conducted on the Cifar-10 and Cifar-100 datasets, with fixed $(2,10^{-5})$-DP. We fine-tune the ViT-small model for $2$ epochs with batch size $B=1000$, i.e., $T = \frac{2\times 50000}{1000} = 100$.

In the experiment, we first use a re-parameterization method in~\citet{de2022unlocking} to fix the product of the step size $\eta$ and the clipping threshold $C_1$. By doing this, we normalize the update $\vv^t+ \ww^t$ ny the clipping threshold $C_1$ and fix the ``effective stepsize'' of the algorithm, i.e.,
\[\xx^{t+1} = \xx^{t} - \eta^tC_1(\frac{\vv^t + \ww^t}{C_1}),\]
where $\vv^t$ and $\ww^t$ scales with $C_1$.
We study the impact of different combinations of $C_1$ and $C_2$. We choose $C_1 = \{100,10,1,0.1\}$ and $C_2 = \{1, 3, 10, 30\}\times C_1$, and the results are shown in \figref{fig:abl:C1C2}.
\begin{figure}[tb!]
    \centering
    \begin{subfigure}[b]{0.45\textwidth}
         \centering
         \includegraphics[width=\linewidth]{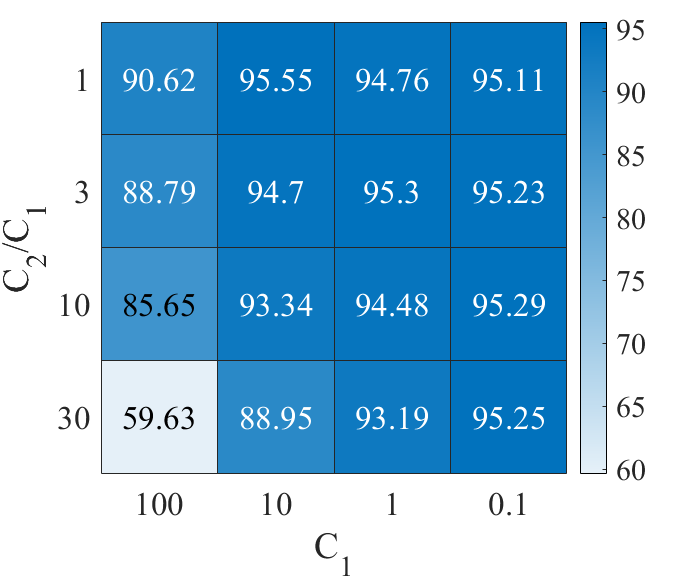}
         \caption{Cifar-10}
     \end{subfigure}
     \hfill
     \begin{subfigure}[b]{0.45\textwidth}
         \centering
         \includegraphics[width=\linewidth]{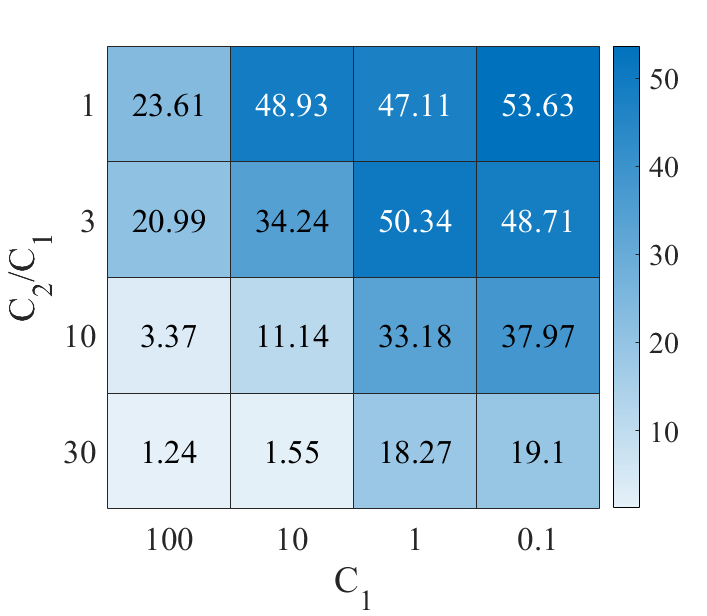}
         \caption{Cifar-100}
     \end{subfigure}
    \caption{The testing accuracy for Cifar-10 and Cifar-100 trained with DiceSGD under different $C_1, C_2$ with fixed effective stepsize.}
    \label{fig:abl:C1C2}
\end{figure}
From the figure, we see that choosing $C_2 = C_1$ gives the best performance in most cases. Additionally, choosing larger $C_1$ gives a worse result, and DiceSGD benefits from using a small clipping threshold.

Next, we fix $C_2 = C_1$ and study the impact of different combinations between the clipping threshold $C_1$ and stepsize $\eta$. We choose $C_1 = \{100,10,1,0.1\}$ and $\eta = \{3.0,1.0, 0.3, 0.1, 0.03, 0.01\}/C_1$. The result is shown in \figref{fig:abl:C1LR}.
\begin{figure}[tbh!]
    \centering
    \begin{subfigure}[b]{0.45\textwidth}
         \centering
         \includegraphics[width=\linewidth]{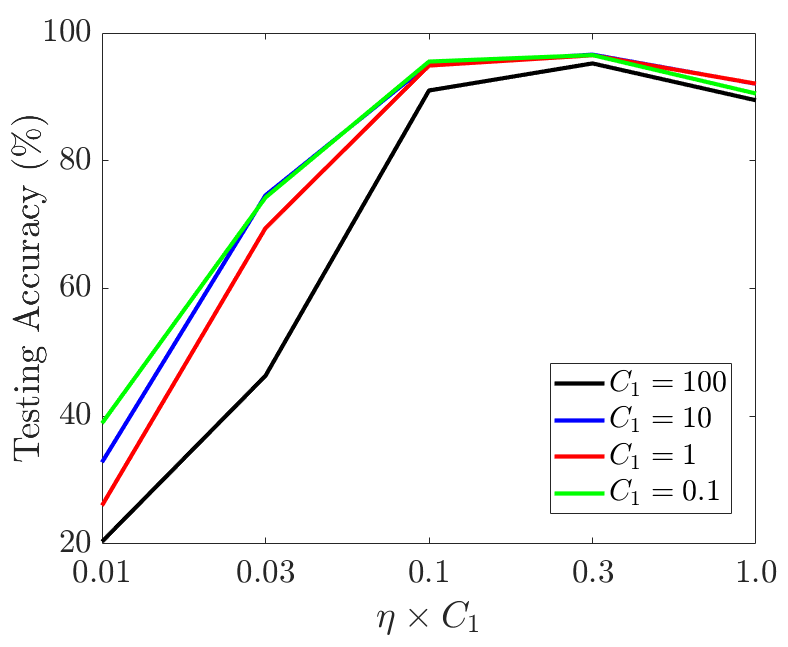}
         \caption{Cifar-10}
     \end{subfigure}
     \hfill
     \begin{subfigure}[b]{0.45\textwidth}
         \centering
         \includegraphics[width=\linewidth]{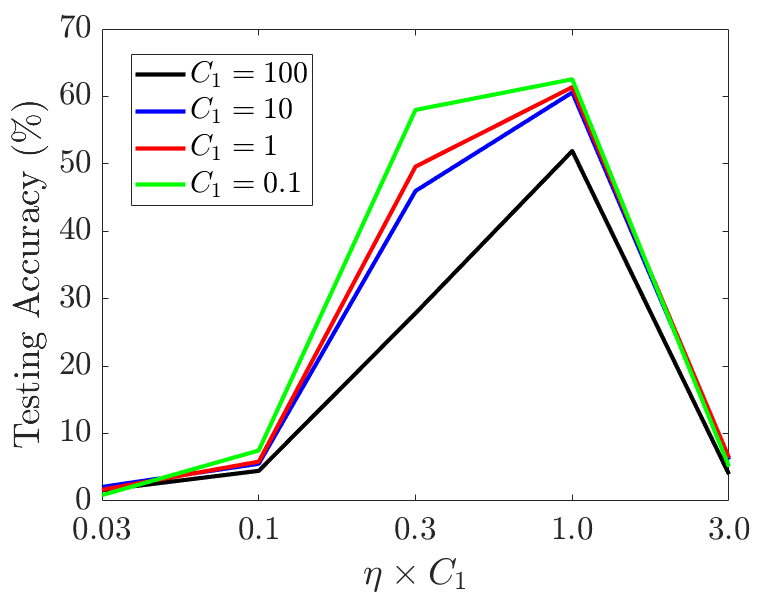}
         \caption{Cifar-100}
     \end{subfigure}
    \caption{The testing accuracy for Cifar-10 and Cifar-100 trained with DiceSGD under different $C_1, \eta$ with fixed $C_2 = C_1$.}
    \label{fig:abl:C1LR}
\end{figure}
From the figure, we see that DiceSGD benefits from using a small clipping threshold, and  exists a best $\eta \times C_1$ that gives the best performance.

\subsection{Training GPT-2}\label{app:experiment:gpt}
We train a GPT-2 model with the E2E dataset, which contains template-like information in the restaurant domain to be mapped to natural language with end-to-end training that has $42000$ training samples. The GPT model is fine-tuned for $10$ epochs, with batch size $B=1000$, so $T = \frac{42000\times10}{1000} = 420.$ We use the similar AdamW variant of DPSGD and DiceSGD for training and set initial stepsize $\eta^0 = 2\times10^{-3}$ with learning rate warm-up and linear decay. The algorithm is guaranteed $(8,8\times10^{-6})$-DP. 
We report the testing loss for each epoch in \figref{fig:nlp:test}.
\begin{figure}[tb!]
    \centering
    \includegraphics[width=0.6\linewidth]{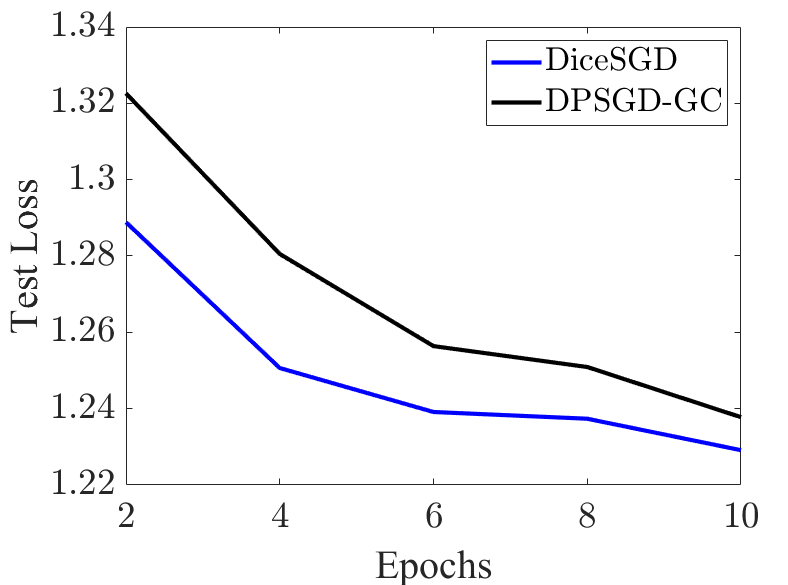}
    \caption{Testing loss of DPSGD and DiceSGD fine-tuning GPT-2 on E2E dataset, with clipping thresholds $C = C_1 = C_2 = 1$ and guarantees $(8,8\times 10^{-6})$-DP.}
    \label{fig:nlp:test}
\end{figure}

\paragraph{Ablation study for GPT-2}
In this section, we provide the ablation study on the choices of the clipping threshold $C_1$  and the learning rate $\eta$ with GPT-2. We fine-tune the GPT-2 model for $5$ epochs with batch size $B=1000$, i.e., $T = \frac{5\times 42000}{1000} = 210$. We use the AdamW variant of DiceSGD for training with learning rate warm-up and linear decay. The algorithm is guaranteed $(8,8\times10^{-6})$-DP. We fix $C_2 = C_1$ and study the impact of different combinations between the clipping threshold $C_1$ and stepsize $\eta$. We choose $C_1 = \{100,10,1,0.1\}$ and $\eta = \{2,1,0.5\}\times\{10^{-2}, 10^{-3}\}$.
We report the testing loss for each combination of the initial stepsize $\eta$ and clipping threshold $C_1$ in \figref{fig:nlp:abl}.
\begin{figure}[tb!]
    \centering
    \includegraphics[width=0.6\linewidth]{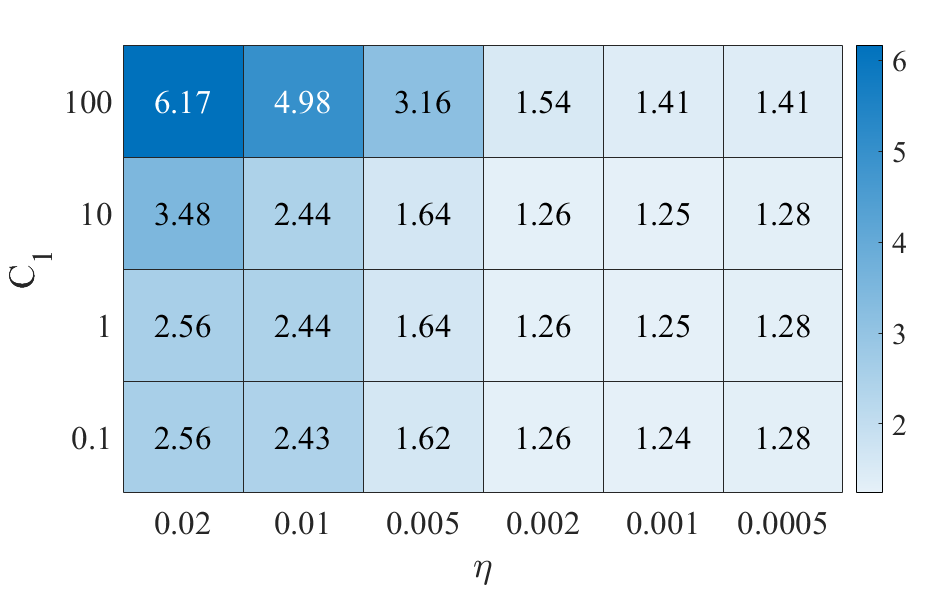}
    \caption{Testing loss (smaller the better) of DiceSGD on E2E dataset with different combinations of clipping thresholds and initial stepsizes.}
    \label{fig:nlp:abl}
\end{figure}
From the result, we see that using a smaller clipping threshold gives a better result for DiceSGD.

\subsection{Adam variant of \algname}\label{app:experiment:adam}
In this section, we provide detailed updates of the Adam variant of \algname~in \algref{alg:DiceAdam}.
\begin{algorithm}
\begin{algorithmic}[1]
        \STATE {{\bfseries Input:} $\xx^0, \cD, C_1, C_2, \eta, \beta_1, \beta_2, \epsilon_1$}\\
		\STATE {{\bfseries Initialize:} $\ee^0 = 0, \mm^0_1 = 0, \mm^0_2 = 0$}\\
		\FOR{$t=0,\dots,T-1$}
		    \STATE {Randomly draw minibatch $\cB^t$ from $\cD$}
                \STATE {$\vv^t = \frac{1}{B}\sum_{i\in\cB^t}\clip{\nabla f(\xx^t;\xi_i), C_1} + \clip{\ee^t,C_2} + \ww^t$, where $\ww^t \sim\cN(0,\sigma^2_1 \cdot \mI)$}
                \STATE {$\mm^t_1 = \beta_1 \mm^{t-1}_1 +(1-\beta_1)\vv^t$ \qquad// Update first-order moment estimate}
                \STATE{$\mm^t_2 = \beta_2 \mm^{t-1}_2 + (1-\beta^2)(\vv^t)^2$ \qquad // Update second-order moment estimate}
		    \STATE {$\xx^{t+1} = \xx^{t} - \eta^t\frac{\mm^t/(1-(\beta_1)^t)}{\sqrt{\mm^t_2/(1-(\beta_2)^t)}+\epsilon_1},$ }
		    \STATE {$\ee^{t+1} = \ee^t +\frac{1}{B}\sum_{i\in\cB^t}\nabla f(\xx^t;\xi_i) - \left(\frac{1}{B}\sum_{i\in\cB^t}\clip{\nabla f(\xx^t;\xi_i), C_1} + \clip{\ee^t,C_2}\right).$}
		\ENDFOR
    \end{algorithmic}
    \caption{Adam variant of \algname~Algorithm}
    \label{alg:DiceAdam}
\end{algorithm}

\subsection{Efficiency analysis}
\paragraph{Hyperparameter tuning efficiency}
In our ablation study, we observe two patterns to achieve good accuracy: (1) $C_2$ needs to be close to $C_1$. (2) $C_1$ needs to be small. Therefore we can write $C_1=C_2=C$ and derive an automatic version of DiceSGD: in \ref{alg:main}, $\vv^t = \frac{1}{B}\sum_{i\in\cB^t}\clip{\nabla f(\xx^t;\xi_i), C} + \clip{\ee^t,C}$ where $\clip{x,C}=\frac{C}{\|x\|}$, known as the automatic clipping in \cite{bu2024automatic}. Setting the injected noise $\sigma_1=\sqrt{\frac{32T\cdot3C^2\log(1/\delta)}{N^2\epsilon^2}}=\frac{\sqrt{96T\log(1/\delta)}C}{N\epsilon}$ satisfies DP. Let the learning rate $\eta^t$ absorbs $C$:
\begin{algorithm}[bh!]
    \begin{algorithmic}[1]
        \STATE {{\bfseries Input:} $\xx^0, \cD, \eta$}\\
		\STATE {{\bfseries Initialize:} $\ee^0 = 0$}\\
		\FOR{$t=0,\dots,T-1$}
		    \STATE {Randomly draw minibatch $\cB^t$ from $\cD$}
                \STATE {$\vv^t = \frac{1}{B}\sum_{i\in\cB^t}\clip{\nabla f(\xx^t;\xi_i), 1} + \clip{\ee^t,1}$}
		    \STATE {$\xx^{t+1} = \xx^{t} - \eta^t(\vv^t + \ww^t),$ where $\ww^t \sim\frac{\sqrt{96T\log(1/\delta)}}{N\epsilon}\cN(0, \mI)$}
		    \STATE {$\ee^{t+1} = \ee^t +\frac{1}{B}\sum_{i\in\cB^t}\nabla f(\xx^t;\xi_i) - \vv^t.$}
		\ENDFOR
    \end{algorithmic}
    \caption{Automatic \algname~Algorithm (without $C_1,C_2$)}
    \label{alg:automatic}
\end{algorithm}

\paragraph{Computational efficiency}
In this part, let us briefly  discuss the complexity of DiceSGD. 

{\bf Memory:} DiceSGD requires $3$ times the memory of DPSGD-GC (which has similar time/space efficiency to the standard SGD), i.e., besides the summed clipped per-sample gradient, DiceSGD requires extra memory for both summed unclipped gradient and the feedback signal $\ee^t$. {\bf Computation:} The computation overhead of DiceSGD is minor compared with the cost of the per-sample clipped gradient computation. Specifically, the total computation of DiceSGD consists of $B\times$per-sample (clipped) gradient computation $+2(B-1)\times$gradient summation $+$ SGD update $+\ee^t$ update, where DiceSGD requires extra $(B-1)\times$ gradient summation and one $\ee^t$ update compared with DPSGD-GC. Additionally, in the distributed learning regime that is necessary to train large models, we expect that the gap in computational efficiency is insignificant, given that the communication cost acts as a dead weight (e.g. $\frac{\text{DP computation cost}+\text{communication cost}}{\text{non-DP computation cost}+\text{communication cost}}\approx 1.0$ if $\text{communication cost} \ggg \text{computation cost}$).
\end{document}